\documentclass{article}
\pdfoutput=1

\pdfpagewidth=8.5in
\pdfpageheight=11in

\usepackage{ijcai19}

\usepackage{times}
\usepackage{soul}
\usepackage{url}
\usepackage{hyperref}
\usepackage[utf8]{inputenc}
\usepackage[small]{caption}
\usepackage{graphicx}
\usepackage{amsmath}
\usepackage{booktabs}
\usepackage{algorithm}
\usepackage{algorithmic}
\urlstyle{same}

\usepackage{natbib}

\usepackage{nicefrac}
\usepackage{subfig}
\usepackage{breakcites}

\usepackage{amsmath,amsthm,amssymb}
\usepackage{xifthen}
\usepackage{multirow}

\allowdisplaybreaks

\makeatletter
\newcommand\footnoteref[1]{\protected@xdef\@thefnmark{\ref{#1}}\@footnotemark}
\makeatother

\usepackage{array}
\newcolumntype{L}[1]{>{\raggedright\let\newline\\\arraybackslash\hspace{0pt}}m{#1}}
\newcolumntype{C}[1]{>{\centering\let\newline\\\arraybackslash\hspace{0pt}}m{#1}}
\newcolumntype{R}[1]{>{\raggedleft\let\newline\\\arraybackslash\hspace{0pt}}m{#1}}



\def\tRhelp#1#2\relax{L_{\csname dom#1\endcsname#2}} 

\def\eRhelp#1#2\relax{\hat{L}_{\csname set#1\endcsname#2}}
\newcommand{\loss}[1]{l\ifthenelse{\isempty{#1}{}}{}{\left(#1\right)}}
\newcommand{\negspace}[1]{\phantom{\hspace*{-#1}}}

\DeclareMathOperator*{\argmin}{arg\,min}
\DeclareMathOperator*{\argmax}{arg\,max}
\DeclareMathOperator*{\expect}{\mathbb{E}}
\DeclareMathOperator*{\prob}{\mathbb{P}}

\newcommand{\mydefv}[1]{\expandafter\newcommand\csname v#1\endcsname{\mathbf{#1}}}
\newcommand{\mydefallv}[1]{\ifx#1\mydefallv\else\mydefv{#1}\expandafter\mydefallv\fi}
\mydefallv abcekmosuvwxyz\mydefallv 

\newcommand{\mydefvsym}[1]{\expandafter\newcommand\csname v#1\endcsname{\boldsymbol{\csname #1\endcsname}}}
\newcommand{\mydefallvsym}[1]{\ifx#1\mydefallvsym\else\mydefvsym{#1}\expandafter\mydefallvsym\fi}
\mydefallvsym {sigma}{alpha}{gamma}{mu}\mydefallvsym 

\newcommand{\mydefm}[1]{\expandafter\newcommand\csname m#1\endcsname{\mathbf{#1}}}
\newcommand{\mydefallm}[1]{\ifx#1\mydefallm\else\mydefm{#1}\expandafter\mydefallm\fi}
\mydefallm ABCIKLMOSTVWXZ\mydefallm 

\newcommand{\mydefmsym}[1]{\expandafter\newcommand\csname m#1\endcsname{\boldsymbol{\csname #1\endcsname}}}
\newcommand{\mydefallmsym}[1]{\ifx#1\mydefallmsym\else\mydefmsym{#1}\expandafter\mydefallmsym\fi}
\mydefallmsym {Sigma}{Gamma}{Phi}{gamma}\mydefallmsym 

\newcommand{\mydefalg}[1]{\expandafter\newcommand\csname alg#1\endcsname{\mathcal{#1}}}
\newcommand{\mydefallalg}[1]{\ifx#1\mydefallalg\else\mydefalg{#1}\expandafter\mydefallalg\fi}
\mydefallalg A\mydefallalg 

\newcommand{\mydefdom}[1]{\expandafter\newcommand\csname dom#1\endcsname{\mathcal{#1}}}
\newcommand{\mydefalldom}[1]{\ifx#1\mydefalldom\else\mydefdom{#1}\expandafter\mydefalldom\fi}
\mydefalldom HSTV\mydefalldom 

\newcommand{\mydefset}[1]{\expandafter\newcommand\csname set#1\endcsname{{#1}}}
\newcommand{\mydefallset}[1]{\ifx#1\mydefallset\else\mydefset{#1}\expandafter\mydefallset\fi}
\mydefallset BHPSTVY\mydefallset 

\newcommand{\mydefdistr}[1]{\expandafter\newcommand\csname distr#1\endcsname{\mathcal{D}_{\csname dom#1\endcsname}}}
\newcommand{\mydefalldistr}[1]{\ifx#1\mydefalldistr\else\mydefdistr{#1}\expandafter\mydefalldistr\fi}
\mydefalldistr DHSTV\mydefalldistr 

\newcommand{\mydefedistr}[1]{\expandafter\newcommand\csname edistr#1\endcsname{\mathcal{#1}}}
\newcommand{\mydefalledistr}[1]{\ifx#1\mydefalledistr\else\mydefedistr{#1}\expandafter\mydefalledistr\fi}
\mydefalledistr W\mydefalledistr 

\newcommand{\mydefspace}[1]{\expandafter\newcommand\csname space#1\endcsname{\mathcal{#1}}}
\newcommand{\mydefallspace}[1]{\ifx#1\mydefallspace\else\mydefspace{#1}\expandafter\mydefallspace\fi}
\mydefallspace DFGHKLMPRTUVXYZ\mydefallspace 

\newcommand{\mydeff}[1]{\expandafter\newcommand\csname f#1\endcsname[2][]{#1##1\ifthenelse{\equal{##2}{}}{}{\!\left(##2\right)}}}
\newcommand{\mydefallf}[1]{\ifx#1\mydefallf\else\mydeff{#1}\expandafter\mydefallf\fi}
\mydefallf cdfghkzCFGHMRT{PV}\mydefallf 

\newcommand{\mydeftilde}[1]{\expandafter\newcommand\csname tilde#1\endcsname{\tilde{#1}}}
\newcommand{\mydefalltilde}[1]{\ifx#1\mydefalltilde\else\mydeftilde{#1}\expandafter\mydefalltilde\fi}
\mydefalltilde fh\mydefalltilde 
\newcommand{\mydeffsym}[1]{\expandafter\newcommand\csname f#1\endcsname[2][]{\csname #1\endcsname##1\ifthenelse{\equal{##2}{}}{}{\!\left(##2\right)}}}
\newcommand{\mydefallfsym}[1]{\ifx#1\mydefallfsym\else\mydeffsym{#1}\expandafter\mydefallfsym\fi}
\mydefallfsym {phi}{epsilon}{eta}{theta}{nu}{tildef}{tildeh}\mydefallfsym 

\newcommand{\mydefnset}[1]{\expandafter\newcommand\csname nset#1\endcsname{\mathbb{#1}}}
\newcommand{\mydefallnset}[1]{\ifx#1\mydefallnset\else\mydefnset{#1}\expandafter\mydefallnset\fi}
\mydefallnset CNRSZ\mydefallnset 

\newcommand{\normTwo}[1]{\left\|#1\right\|_2}

\newcommand{\normOne}[1]{\left\|#1\right\|_1}

\newcommand{\abs}[1]{\left|#1\right|}

\newcommand{\dotprod}[2]{\left\langle#1,#2\right\rangle}

\newcommand{\bigO}[1]{\mathcal{O}\left( #1 \right)}

\newcommand{\indic}[1]{\mathbb{I}_{ #1}}

\newcommand{\ceil}[1]{{\left\lceil #1 \right\rceil}}

\newtheorem{myth}{Theorem}
\newtheorem*{myth*}{Theorem}

\newtheorem*{mycor*}{Corollary}
\newtheorem{mylem}{Lemma}
\newtheorem*{mylem*}{Lemma}

\newtheorem{myex}{Example}
\newtheorem*{myex*}{Example}

\newenvironment{reth}[1]
    {
\begingroup

\begin{myth}
    }
    { 
\end{myth}
\addtocounter{myth}{-1}
\endgroup
    }
    
\newenvironment{reex}[1]
    {
\begingroup

\begin{myex}
    }
    { 
\end{myex}
\addtocounter{myex}{-1}
\endgroup
    }

\newcommand{\abstain}{\vartheta}

\makeatletter
\def\th@plain{%
  \thm@notefont{}
  \itshape 
}
\def\th@definition{%
  \thm@notefont{}
  \normalfont 
}
\makeatother





\title{Boosting for Comparison-Based Learning}

\author{
Micha\"el Perrot$^1$
\and
Ulrike von Luxburg$^{1,2}$
\affiliations
$^1$Max-Planck-Institute for Intelligent Systems, T\"ubingen, Germany\\
$^2$University of T\"ubingen, Department of Computer Science, Tübingen, Germany
\emails
michael.perrot@tuebingen.mpg.de,
ulrike.luxburg@uni-tuebingen.de
}

\begin{document}

\maketitle

\begin{abstract}
We consider the problem of classification in a comparison-based setting: given a set of objects, we only have access to triplet comparisons of the form \emph{object $x_i$ is closer to object $x_j$ than to object $x_k$}.
In this paper we introduce TripletBoost, a new method that can learn a classifier just from such triplet comparisons. 
The main idea is to aggregate the triplets information into weak classifiers, which can subsequently be boosted to a strong classifier.
Our method has two main advantages: (i) it is applicable to data from any metric space, and (ii) it can deal with large scale problems using only passively obtained and noisy triplets. 
We derive theoretical generalization guarantees and a lower bound on the number of necessary triplets, and we empirically show that our method is both competitive with state of the art approaches and resistant to noise.
\end{abstract}

\section{Introduction}\label{sec:intro}

In the past few years the problem of comparison-based learning has attracted growing interest in the machine learning community 
\citep{agarwal2007generalized,jamieson2011low,tamuz2011adaptively,tschopp2011randomized,van2012stochastic,heikinheimo2013crowd,amid2015multiview,kleindessner2015dimensionality,jain2016finite,haghiri2017comparison,kazemi2018comparison}.
The motivation is to relax the assumption that an explicit representation of the objects or a distance metric between pairs of examples are available.
Instead one only has access to a set of ordinal distance comparisons that can take several forms depending on the problem at hand.
In this paper we focus on triplet comparisons of the form \emph{object $x_i$ is closer to object $x_j$ than to object $x_k$}, that is on relations of the form $\fd{x_i,x_j} < \fd{x_i,x_k}$ where $\fd{}$ is an unknown metric\footnote{Note that this kind of ordinal information is sometimes just used as side information \citep{bellet2015metric,kane2017active}, but, as the references in the main text, we focus on the setting where ordinal comparisons are the sole information available.}.

We address the problem of classification with noisy triplets that have been obtained in a passive manner: the examples lie in an unknown metric space, not necessarily Euclidean, and we are only given a small set of triplet comparisons --- there is no way in which we could actively ask for more triplets.
Furthermore we assume that the answers to the triplet comparisons can be noisy.
To deal with this problem one can try to first recover an explicit representation of the examples, a task that can be solved by ordinal embedding approaches \citep{agarwal2007generalized,van2012stochastic,terada2014local,jain2016finite}, and then apply standard machine learning approaches.
However, such embedding methods assume that the examples lie in a Euclidean space and do not scale well with the number of examples: typically they are too slow for datasets with more than $10^3$ examples. 
As an alternative, it would be desirable to have a classification algorithm that can work with triplets directly, without taking a detour via ordinal embedding. 
To the best of our knowledge, for the case of passively obtained triplets, this problem has not yet been solved in the literature. 

Another interesting question in this context is 
that of the minimal number of triplets required to successfully learn a classifier. 
It is known that to exactly recover an ordinal embedding one needs of the order $\Omega(n^3)$ passively queried triplets in the worst case (essentially all of them), unless we make stronger assumptions on the underlying metric space \citep{jamieson2011low}. 
However, classification is a problem which seems simpler than ordinal embedding, and thus it might be possible to obtain better lower bounds.

In this paper we propose TripletBoost, a method for classification that is able to learn using only passively obtained triplets while not making any assumptions on the underlying metric space.
To the best of our knowledge this is the first approach that is able to solve this problem.
Our method is based on the idea that the triplets can be aggregated into simple \emph{triplet classifiers}, which behave like decision stumps and are well-suited for boosting approaches \citep{schapire2012boosting}.
From a theoretical point of view we prove that our approach learns a classifier with low training error, and we derive generalization guarantees that ensure that its error on new examples is bounded.
Furthermore we derive a new lower bound on the number of triplets that are necessary to ensure useful predictions. 
From an empirical point of view we demonstrate that our approach can be applied to datasets that are several order of magnitudes larger than the ones that can currently be handled by ordinal embedding methods.
Furthermore we show that our method is quite resistant to noise.

\section{The TripletBoost Algorithm}\label{sec:contrib}

In this paper we are interested in multi-class classification problems.
Let $\left( \spaceX,\fd{} \right)$ be an unknown and general metric space, typically not Euclidean.
Let $\spaceY$ be a finite label space. 
Let $\setS = \left\lbrace (x_i,y_i) \right\rbrace_{i=1}^n$ be a set of $n$ examples drawn i.i.d. from an unknown distribution $\distrS$ defined over $\spaceX \times \spaceY$.
Note that we use the notation $x_i$ as a convenient way to identify an object; it does not correspond to any explicit representation that could be used by an algorithm (such as coordinates in a vector space).  
Let $\setT = \left\lbrace (x_i,x_j,x_k) : (x_i,y_i),(x_j,y_j),(x_k,y_k) \in \setS, x_j \neq x_k \right\rbrace$ be a set of $m$ triplets.
Each ordered tuple $(x_i,x_j,x_k) \in \setT$ encodes the following relation between the three examples:
\begin{align}
\fd{x_i,x_j} < \fd{x_i,x_k}\text{.} \label{def:triplet}
\end{align}
Given the triplets in $T$ and the label information of all points, our goal is to learn a classifier.
We make two main assumptions about the data. 
First, we assume that the triplets are uniformly and independently sampled from the set of all possible triplets.
Second, we assume that the triplets in $\setT$ can be noisy (that is the inequality has been swapped, for example $(x_i,x_k,x_j) \in \setT$ while the true relation is $\fd{x_i,x_j} < \fd{x_i,x_k}$), but the noise is uniform and independent from one triplet to another.
In the following $\indic{a}$ denotes the indicator function returning $1$ when a property $a$ is verified and $0$ otherwise, $\edistrW_c$ is an empirical distribution over $\setS \times \spaceY$, and $w_{c,x_i,y}$ is the weight associated to object $x_i$ and label $y$.

\subsection{Weak Triplet Classifiers}
\label{sec:tripletclassifiers}

\begin{figure}
\centering
\includegraphics[width=.6\linewidth]{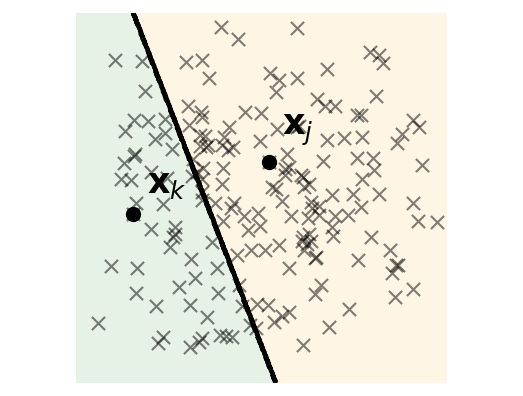}
\caption{Example of a triplet classifier. Given two reference points $x_j$ and $x_k$, the space is divided in two half-spaces: examples closer to $x_j$ and examples closer to $x_k$. Triplet information is enough to reveal which half-space a point $x_i$ is in. \label{fig:tripletClassifier}}
\end{figure}

Rather than considering triplets as individual pieces of information we propose to aggregate them into decision stumps that we call \emph{triplet classifiers}.
The underlying idea is to select two reference examples, $x_j$ and $x_k$, and to divide the space into two \emph{half-spaces}: examples that are closer to $x_j$ and examples that are closer to $x_k$.
This is illustrated in Figure~\ref{fig:tripletClassifier}. In principle, this can be achieved with triplets only. However, a major difficulty in our setting is that our triplets are passively obtained: for most training points $x_i$ we do not know whether they are closer to $x_j$ or $x_k$. In particular, it is impossible to evaluate the classification accuracy of such a simple classifier on the whole training set. 
To deal with this problem we propose to use an abstention scheme where a triplet classifier abstains if it does not know on which side of the hyperplane the considered point lies.
Given a set $\setT$ of triplets and two reference points $x_j$ and $x_k$, we define a triplet classifier as:
\begin{align*}
\fh[_{j,k}]{x} = \left\lbrace \begin{array}{ll}
o_j & \text{if $(x,x_j,x_k) \in \setT$,} \\
o_k & \text{if $(x,x_k,x_j) \in \setT$,} \\
\abstain & \text{otherwise.}
\end{array}\right.
\end{align*}

In our multi-class setting, $o_j$ and $o_k$ will be sets of labels, that is $o_j, o_k \subseteq \spaceY$. In Section~\ref{sec:tripletboost} we describe how we choose them in a data dependent fashion to obtain classifiers with minimal error on the training set. 
The prediction $\abstain$ simply means that the triplet classifier abstains on the example. Let $\spaceH$ denote the set of all possible triplet classifiers.

The triplet classifiers are very simple and, in practice, we do not expect them to perform well at all.
But we prove in Section~\ref{sec:tripletisweak} that, for appropriate choices of $o_j$ and $o_k$, they are at least as good as random predictors. This is all we need to ensure that they can be used successfully in a boosting framework. The next section describes how this works.

\subsection{TripletBoost}
\label{sec:tripletboost}

\begin{algorithm}[tb]
\caption[TripletBoost]{TripletBoost: Boosting with triplet classifiers.\label{alg:tripletboost}}
\label{alg:algorithm}
\textbf{Input}: $\setS = \left\lbrace (x_i,y_i) \right\rbrace_{i=1}^n$ a set of $n$ examples,\\ \phantom{\textbf{Input}:} $\setT = \left\lbrace (x_i,x_j,x_k) : x_j \neq x_k \right\rbrace$ a set of $m$ triplets.\\
\textbf{Output}: $\fH{\cdot}$ a strong classifier. 

\begin{algorithmic}[1] 
\STATE Let $\edistrW_1$ be the empirical uniform distribution:\\ $\forall (x_i,y_i) \in \setS, \forall y \in \spaceY, w_{1,x_i,y} = \frac{1}{n\abs{\spaceY}}$.
\FOR{$c = 1,\ldots,C$}
\STATE Choose a triplet classifier $\fh[_c]{}$ according to $\edistrW_c$ (Equation~\eqref{eq:labelstrategy}).
\STATE Compute the weight $\alpha_c$ of $\fh[_c]{}$ according to its performance on $\edistrW_c$ (Equation~\eqref{eq:classifweights}).
\STATE Update the weights of the examples to obtain a new distribution $\edistrW_{c+1}$ (Equation~\eqref{eq:updateweights}).
\ENDFOR
\STATE \textbf{return} $\displaystyle{\fH{\cdot} = \argmax_{y \in \spaceY}\left(\sum_{c=1}^C \alpha_c\indic{\fh[_c]{\cdot} \neq \abstain \wedge y \in \fh[_c]{\cdot}}\right)}$
\end{algorithmic}
\end{algorithm}

Boosting is based on the insight that weak classifiers (that is classifiers marginally better than random predictors) are usually easy to obtain and can be combined in a weighted linear combination to obtain a strong classifier.
This weighted combination can be obtained in an iterative fashion where, at each iteration, a weak-classifier is chosen and weighted so as to minimize the error on the training examples. 
The weights of the points are then updated to put more focus on hard-to-classify examples \citep{schapire2012boosting}.
In this paper we use a well-known boosting algorithm called AdaBoost.MO \citep{schapire1999improved,schapire2012boosting}. This method can handle multi-class problems with a one-against-all approach, works with abstaining classifiers and is theoretically well founded. Algorithm~\ref{alg:tripletboost} summarizes the main steps of our approach that we detail below.

\paragraph{Choosing a triplet classifier.}
To choose a triplet classifier we proceed in two steps.
In the first step, we select two reference points $x_j$ and $x_k$ such that $y_j \neq y_k$. This is done by randomly sampling from an empirical distribution $\edistrW_{c,\spaceX}$ on the examples. 
Here $\edistrW_{c,\spaceX}$ denotes the marginal distribution of $\edistrW_c$ with respect to the examples.
This distribution is updated at each iteration to put more focus on those parts of the space that are hard to classify while promoting triplet classifiers that are able to separate different classes (see Equation~\ref{eq:updateweights}).

In the second step, we choose $o_j$ and $o_k$, the sets of labels that should be predicted for each half space of the triplet classifier.
Given one of the half spaces, we propose to add a label to the set of predicted labels if the weight of examples of this class is greater than the weight of examples of different classes.
Formally, with $w_{c,x_i,y}$ defined as in Algorithm~\ref{alg:tripletboost}, we construct $o_j$ as follows:
\begin{align}
o_j ={}& \Bigg\lbrace y : \sum_{\substack{(x_i,y_i) \in \setS,\\ (x_i,x_j,x_k) \in \setT}} \left(\indic{y=y_i} - \indic{y \neq y_i}\right) w_{c,x_i,y} > 0 \Bigg\rbrace \text{.} \label{eq:labelstrategy}
\end{align}
We construct $o_k$ in a similar way.
The underlying idea is that adding $\fh[_c]{}$ to the current combination of triplet classifiers should improve the predictions on the training set as much as possible.
In Section~\ref{sec:tripletisweak} we show that this strategy is optimal and that it ensures that the selected triplet classifier is either a weak classifier or has a weight $\alpha_c$ of $0$.

\paragraph{Computing the weight of the triplet classifier.}
To choose the weight of the triplet classifier $\fh[_c]{}$ we start by computing $W_{c,+}$ and $W_{c,-}$, the weights of correctly and incorrectly classified examples:
\begin{align}
W_{c,+} ={}& \!\!\!\!\!\! \sum_{\substack{(x_i,y_i) \in \setS,\\ \fh[_c]{x_i} \neq \abstain}} \!\! \Bigg( \indic{y_i \in \fh[_c]{x_i}} w_{c,x_i,y_i} \! + \! \sum_{y \neq y_i} \indic{y \notin \fh[_c]{x_i}} w_{c,x_i,y} \Bigg)  \text{,} \nonumber\\
W_{c,-} ={}& \!\!\!\!\!\! \sum_{\substack{(x_i,y_i) \in \setS,\\ \fh[_c]{x_i} \neq \abstain}} \!\! \Bigg( \indic{y_i \notin \fh[_c]{x_i}} w_{c,x_i,y_i} \! + \! \sum_{y \neq y_i} \indic{y \in \fh[_c]{x_i}} w_{c,x_i,y} \Bigg)  \text{.} \label{eq:classifweights}
\end{align}
We then set $\displaystyle{\alpha_c = \nicefrac{\log{\left(\frac{W_{c,+} + \frac{1}{n}}{W_{c,-} + \frac{1}{n}}\right)}}{2}}$. The term $\frac{1}{n}$ is a smoothing constant \citep{schapire2000boostexter}: in our setting with few, passively queried triplets it helps to avoid numerical problems that might arise when $W_{c,+}$ or $W_{c,-}=0$. 
In Theorem~\ref{th:trainerror} we show that this choice of $\alpha_c$ leads to a decrease in training error as the number of iterations increases.

\paragraph{Updating the weights of the examples.}
In each iteration of our algorithm, a new triplet classifier $\fh[_c]{}$ is added to the weighted combination of classifiers, and we need to update the empirical distribution $\edistrW_{c}$ over the examples for the next iteration. 
The idea is (i) to reduce the weights of correctly classified examples, (ii) to keep constant the weights of the examples for which the current triplet classifier abstains, and (iii) to increase the weights of incorrectly classified examples.
The weights are then normalized by a factor $Z_c$ so that $\edistrW_{c+1}$ remains an empirical distribution over the examples.
Formally, $\forall (x_i,y_i) \in \setS, \forall y \in \spaceY$, if $\fh[_c]{x_i} = \abstain$ then $w_{c+1,x_i,y} = \frac{w_{c,x_i,y}}{Z_c}$ and if $\fh[_c]{x_i} \neq \abstain$ then 
\begin{align}
w_{c+1,x_i,y} = \frac{w_{c,x_i,y}}{Z_c}
 \exp\left\lbrace-\alpha_c\left(\indic{y=y_i} - \indic{y \neq y_i}\right)\right. & \nonumber\\
 &\negspace{9.5em}\left.\times\left(\indic{y \in \fh[_c]{x_i}} - \indic{y \notin \fh[_c]{x_i}}\right)\right\rbrace\text{.} \label{eq:updateweights}
\end{align}

\paragraph{Using $\fH{}$ for prediction.}
Given a new example $x$, TripletBoost predicts its label as 
\begin{align}
\fH{x} = \argmax_{y \in \spaceY}\left(\sum_{c=1}^C \alpha_c\indic{\fh[_c]{x} \neq \abstain \wedge y \in \fh[_c]{x}}\right) \label{eq:strongclassifier}
\end{align}
that is the label with the highest weight as predicted by the weighted combination of selected triplet classifiers.
However, recall that we are in a passive setting, and thus we assume that we are given a set of triplets $\setT_x = \left\lbrace (x,x_j,x_k) : (x_j,y_j),(x_k,y_k) \in \setS, x_j \neq x_k \right\rbrace$ associated with the example $x$ (but there is no way to choose them).
Hence, some of the triplets in $\setT_x$ correspond to triplet classifiers in $\fH{}$ (that is $(x,x_j,x_k) \in \setT_x$ and the reference points for $\fh[_c]{}$ were $x_j$ and $x_k$) and some do not.
In particular, it might happen that none of the triplets in $\setT_x$ corresponds to a triplet classifier in $\fH{}$ and, in this case, $\fH{}$ can only randomly predict a label.
In Section~\ref{sec:lowerbound} we provide a lower bound on the number of triplets necessary to avoid this behaviour.
The main computational bottleneck when predicting the label of a new example $x$ is to check whether the triplets in $\setT_x$ match a triplet classifier in $\fH{}$.
A naive implementation would compare each triplet in $\setT_x$ to each triplet classifier, which can be as expensive as $O(\left|\setT_x\right| C)$.
Fortunately, by first sorting the triplets and the triplet classifiers, a far more reasonable complexity of $O(\left|\setT_x\right|\log(\left|\setT_x\right|) + C\log(C))$ can be achieved.

\section{Theoretical Analysis}\label{sec:theory}

In this section we show that our approach is theoretically well founded.
First we prove that the triplet classifiers with non-zero weights are weak learners: they are slightly better than random predictors (Theorem~\ref{th:tripletisweak}).
Building upon this result we show that, as the number of iterations increases, the training error of the strong classifier learned by TripletBoost is decreased (Theorem~\ref{th:trainerror}).
Then, to ensure that TripletBoost does not over-fit, we derive a generalization bound showing that, given a sufficient amount of training examples, the test error is bounded (Theorem~\ref{th:testerror}).
Finally, we derive a lower bound on the number of triplets necessary to ensure that TripletBoost does not learn a random predictor (Theorem~\ref{th:limit}).

\subsection{Triplet Classifiers and Weak Learners}
\label{sec:tripletisweak}

We start this theoretical analysis by showing that the strategy to choose the predicted labels of triplet classifiers described in Equation~\eqref{eq:labelstrategy} is optimal: it ensures that their error is minimal on the training set (compared to any other labelling strategy). We also show that the triplet classifiers are never worse than random predictors and in fact, that only those triplets  classifiers that are weak classifiers (strictly better than random classifiers) are affected a non-zero weight. This is summarized in the next theorem.

\begin{myth}[Triplet classifiers and weak learners\label{th:tripletisweak}]
Let $\edistrW_c$ be an empirical distribution over $\setS \times \spaceY$ and $\fh[_c]{}$ be the corresponding triplet classifier chosen as described in Section~\ref{sec:tripletboost}.
It holds that:
\begin{enumerate}
    \item the error of $\fh[_c]{}$ on $\edistrW_c$ is at most the error of a random predictor and is minimal compared to other labelling strategies,
    \item the weight $\alpha_c$ of the classifier is non-zero if and only if, $\fh[_c]{}$ is a weak classifier, that is is strictly better than a random predictor.
\end{enumerate}
\end{myth}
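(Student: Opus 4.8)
The plan is to express the weighted error of $\fh[_c]{}$ directly through $W_{c,-}$, decompose it into independent contributions coming from the two half-spaces and from the individual labels, and then observe that the labelling rule of Equation~\eqref{eq:labelstrategy} minimizes each of these contributions separately. Part~2 will then drop out of Part~1 together with the definition of $\alpha_c$.

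First I would fix the underlying AdaBoost.MO encoding. Each pair $(x_i,y)$ carries weight $w_{c,x_i,y}$ and the binary target $+1$ if $y=y_i$ and $-1$ otherwise, while $\fh[_c]{}$ predicts $+1$ on that pair when $y\in\fh[_c]{x_i}$, predicts $-1$ when $y\notin\fh[_c]{x_i}$, and predicts nothing when it abstains. Under this reading the quantities $W_{c,+}$ and $W_{c,-}$ of Equation~\eqref{eq:classifweights} are exactly the total weight of the correctly and of the incorrectly classified non-abstaining pairs, so the weighted error of $\fh[_c]{}$ is captured by $W_{c,-}$ and the abstaining pairs contribute to neither sum. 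Crucially, whether a pair abstains depends only on whether the corresponding triplet lies in $\setT$, not on the labelling $o_j,o_k$; hence every labelling strategy shares the same set of non-abstaining pairs, and only the split of their weight between $W_{c,+}$ and $W_{c,-}$ can change.

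Next, for one half-space—say the examples $x_i$ with $(x_i,x_j,x_k)\in\setT$—and one label $y$, I would compare the two mutually exclusive choices ``$y\in o_j$'' and ``$y\notin o_j$''. Writing $A=\sum_{x_i:\,(x_i,x_j,x_k)\in\setT,\,y_i=y} w_{c,x_i,y}$ and $B=\sum_{x_i:\,(x_i,x_j,x_k)\in\setT,\,y_i\neq y} w_{c,x_i,y}$, putting $y\in o_j$ sends $A$ to $W_{c,+}$ and $B$ to $W_{c,-}$, whereas putting $y\notin o_j$ swaps these roles; so over all labellings the $W_{c,-}$-contribution of this (half-space, label) pair is either $A$ or $B$. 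Rule~\eqref{eq:labelstrategy} sets $y\in o_j$ precisely when $A>B$, i.e.\ it always routes $\min(A,B)$ into $W_{c,-}$ and $\max(A,B)$ into $W_{c,+}$; the identical argument applies to $o_k$. Summing these independent contributions over both half-spaces and all labels shows at once that (i) $W_{c,-}$ is the smallest value attainable by any labelling, which is the minimality claim, and (ii) $W_{c,-}=\sum\min(A,B)\le\sum\max(A,B)=W_{c,+}$ since $\min(A,B)\le\max(A,B)$ termwise. As a predictor flipping each binary sign uniformly yields $W_{c,+}=W_{c,-}$ in expectation, the inequality $W_{c,-}\le W_{c,+}$ is exactly the statement that $\fh[_c]{}$ is no worse than random, finishing Part~1.

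Part~2 then follows immediately from $\alpha_c=\tfrac12\log\!\big((W_{c,+}+\tfrac1n)/(W_{c,-}+\tfrac1n)\big)$: the smoothing term $\tfrac1n$ cancels in the equality case, so $\alpha_c=0$ iff $W_{c,+}=W_{c,-}$, and since Part~1 gives $W_{c,+}\ge W_{c,-}$ the logarithm is nonnegative, whence $\alpha_c\neq 0$ iff $W_{c,+}>W_{c,-}$, that is iff $\fh[_c]{}$ is strictly better than random. I expect the only delicate point to be pinning down the right notion of ``random predictor'' in this multi-class abstaining encoding and checking that the error genuinely reduces to $W_{c,-}$ with abstentions dropping out of both sums; once the per-(half-space, label) decomposition is in place, the $\min/\max$ comparison and the handling of the smoothing constant are routine.
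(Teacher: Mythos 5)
Your proposal is correct and follows essentially the same route as the paper's own proof: the per-(half-space, label) comparison of the matched weight $A$ against the mismatched weight $B$ is exactly what the paper's defining inequalities for $o_j$ and $o_k$ encode, and your $\min/\max$ routing argument yields both the minimality of $W_{c,-}$ and the bound $W_{c,-} \leq W_{c,+}$ just as the paper does, with Part~2 read off from $\alpha_c = \tfrac12\log\bigl((W_{c,+}+\tfrac1n)/(W_{c,-}+\tfrac1n)\bigr)$ in the same way. The only cosmetic difference is that the paper phrases the random-predictor comparison via the law of total probability (abstentions contributing error $\tfrac12$), whereas you note that abstentions are labelling-independent and drop out of the comparison; these are equivalent.
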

\begin{proof} The proof is given in Appendix~\ref{app:sec:weak}.
\end{proof}

\subsection{Boosting guarantees}

From a theoretical point of view the boosting framework has been extensively investigated and it has been shown that most AdaBoost-based methods decrease the training error at each iteration \citep{freund1997decision}.
Another question that has attracted a lot of attention is the problem of generalization.
It is known that when the training error has been minimized, AdaBoost-based methods often do not over-fit and it might even be beneficial to further increase the number of weak learners.
A popular explanation is the margin theory which says that as the number of iterations increases, the confidence of the algorithm in its predictions increases and thus, the test accuracy is improved \citep{schapire1998boosting,breiman1999prediction,wang2011refined,gao2013doubt}.
TripletBoost is based on AdaBoost.MO, and thus it inherits the theoretical guarantees presented above.
In this section, we provide two theorems which show (i) that TripletBoost reduces the training error as the number of iterations increases, and (ii) that it generalizes well to new examples.

The following theorem shows that, as the number of iterations increases, TripletBoost decreases the training error.
\begin{myth}[Reduction of the Training Error\label{th:trainerror}]
Let $\setS$ be a set of $n$ examples and $\setT$ be a set of $m$ triplets (obtained as described in Section~\ref{sec:contrib}).
Let $\fH{\cdot}$ be the classifier obtained after $C$ iterations of TripletBoost (Algorithm~\ref{alg:tripletboost}) using $\setS$ and $\setT$ as input. It holds that:
\begin{align*}
\prob_{(x,y) \in \setS}\left[ \fH{x} \neq y \right] \leq{}& \frac{\abs{\spaceY}}{2}\prod_{c=1}^C Z_c
\end{align*}
with $Z_c = (1-W_{c,+}-W_{c,-}) + (W_{c,+})\cdot\sqrt{\frac{W_{c,-} + \frac{1}{n}}{W_{c,+} + \frac{1}{n}}} + (W_{c,-})\cdot\sqrt{\frac{W_{c,+} + \frac{1}{n}}{W_{c,-} + \frac{1}{n}}} \leq 1$.
\end{myth}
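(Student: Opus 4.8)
The plan is to follow the classical AdaBoost training-error analysis, adapted to the multi-class one-against-all encoding with abstention. First I would encode, for each example $(x_i,y_i)\in\setS$ and each candidate label $y\in\spaceY$, the true sign $\sigma_{i,y}=\indic{y=y_i}-\indic{y\neq y_i}\in\{+1,-1\}$ and the (possibly abstaining) prediction sign $q_{c,i,y}=\indic{\fh[_c]{x_i}\neq\abstain}\left(\indic{y\in\fh[_c]{x_i}}-\indic{y\notin\fh[_c]{x_i}}\right)$, so that the update in Equation~\eqref{eq:updateweights} reads uniformly as $w_{c+1,x_i,y}=\frac{w_{c,x_i,y}}{Z_c}\exp\{-\alpha_c\sigma_{i,y}q_{c,i,y}\}$ in both the abstaining and non-abstaining cases. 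Unrolling this recursion from $c=1$ to $C$ and using that each $\edistrW_c$ is a distribution (so the final weights sum to one) together with $w_{1,x_i,y}=\frac{1}{n\abs{\spaceY}}$, I obtain the telescoping identity $\prod_{c=1}^C Z_c=\frac{1}{n\abs{\spaceY}}\sum_{i,y}\exp\{-\sigma_{i,y}F_{i,y}\}$, where $F_{i,y}=\sum_{c=1}^C\alpha_c q_{c,i,y}$ is the accumulated confidence for label $y$ on $x_i$.

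Second, I would connect $F_{i,y}$ to the voting score that defines $\fH{}$ in Equation~\eqref{eq:strongclassifier}. Writing $G_{i,y}=\sum_c\alpha_c\indic{\fh[_c]{x_i}\neq\abstain\wedge y\in\fh[_c]{x_i}}$, a direct expansion of $q_{c,i,y}$ gives $F_{i,y}=2G_{i,y}-A_i$ with $A_i=\sum_c\alpha_c\indic{\fh[_c]{x_i}\neq\abstain}$ independent of $y$; hence $\argmax_y F_{i,y}=\argmax_y G_{i,y}=\fH{x_i}$, so a mistake $\fH{x_i}\neq y_i$ forces the existence of some $y^\ast\neq y_i$ with $F_{i,y^\ast}\geq F_{i,y_i}$. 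The key per-example bound is then $\indic{\fH{x_i}\neq y_i}\leq\frac12\sum_y\exp\{-\sigma_{i,y}F_{i,y}\}$: on a mistake the right-hand side already dominates the two terms $\exp\{-F_{i,y_i}\}+\exp\{F_{i,y^\ast}\}\geq 2\exp\{(F_{i,y^\ast}-F_{i,y_i})/2\}\geq 2$ by AM-GM, while on a correct prediction the right-hand side is nonnegative. Averaging over $i$ and substituting the telescoping identity yields exactly $\prob_{(x,y)\in\setS}[\fH{x}\neq y]\leq\frac{\abs{\spaceY}}{2}\prod_{c=1}^C Z_c$.

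Third, to obtain the explicit form of $Z_c$ I would split $Z_c=\sum_{i,y}w_{c,x_i,y}\exp\{-\alpha_c\sigma_{i,y}q_{c,i,y}\}$ according to whether $\fh[_c]{}$ abstains on $x_i$. The abstaining pairs contribute their total weight, which equals $1-W_{c,+}-W_{c,-}$ since, as one checks from Equation~\eqref{eq:classifweights}, every non-abstaining pair $(i,y)$ is counted exactly once in $W_{c,+}$ or $W_{c,-}$. Among the non-abstaining pairs, $\sigma_{i,y}q_{c,i,y}=+1$ precisely on the pairs aggregated into $W_{c,+}$ and $=-1$ on those in $W_{c,-}$, so $Z_c=(1-W_{c,+}-W_{c,-})+W_{c,+}e^{-\alpha_c}+W_{c,-}e^{\alpha_c}$; plugging in $\alpha_c=\frac12\log\frac{W_{c,+}+1/n}{W_{c,-}+1/n}$ gives the stated expression. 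Finally, for $Z_c\leq 1$ it suffices to show $W_{c,+}e^{-\alpha_c}+W_{c,-}e^{\alpha_c}\leq W_{c,+}+W_{c,-}$; clearing the square roots and squaring reduces this to $\left(W_{c,+}W_{c,-}+\tfrac1n(W_{c,+}+W_{c,-})\right)(W_{c,+}-W_{c,-})^2\geq 0$, which is manifest.

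I expect the main obstacle to be the bookkeeping in the second paragraph: correctly verifying that the label-independent offset $A_i$ drops out of the $\argmax$ (so the voting classifier $\fH{}$ coincides with the margin-maximizer in $F$), and then selecting the right pair of exponential terms for AM-GM so that the multiplicative constant comes out to be exactly $\abs{\spaceY}/2$ rather than something looser. The telescoping identity and the $Z_c\leq 1$ computation are routine once the abstention is folded into $q_{c,i,y}$.
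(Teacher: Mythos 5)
Your proposal is correct and takes essentially the same route as the paper's proof: the same unrolled weight-update identity $\prod_{c=1}^C Z_c = \frac{1}{n|\mathcal{Y}|}\sum_{i,y}\exp\{-\sigma_{i,y}F_{i,y}\}$, the same per-example argument that a mistake forces the exponential sum to be at least $2$ (the paper phrases this via a loss-based-decoding rewriting of the $\argmax$ and the inequality $x+\frac{1}{x}\geq 2$, which is exactly your AM-GM step), and the same split of $Z_c$ into abstaining mass $1-W_{c,+}-W_{c,-}$ plus $W_{c,+}e^{-\alpha_c}+W_{c,-}e^{\alpha_c}$. The only cosmetic difference is the final algebra for $Z_c\leq 1$, where the paper expands $\left(\sqrt{W_{c,+}+\frac{1}{n}}-\sqrt{W_{c,-}+\frac{1}{n}}\right)^2\geq 0$ while you square the inequality $W_{c,+}e^{-\alpha_c}+W_{c,-}e^{\alpha_c}\leq W_{c,+}+W_{c,-}$; both computations are valid and equivalent.
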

\begin{proof}
This result, inherited from AdaBoost.MO \citep{schapire2012boosting}, is proven in Appendix~\ref{app:sec:train}.
\end{proof}

The next theorem shows that the true error of a classifier learned by TripletBoost can be bounded by a quantity related to the confidence of the classifier on the training examples, with respect to a margin, plus a term which decreases as the number of examples increases.
The confidence of the classifier on the training examples is also bounded and decreases for sufficiently small margins.
\begin{myth}[Generalization Guarantees\label{th:testerror}]
Let $\distrS$ be a distribution over $\spaceX \times \spaceY$, let $\setS$ be a set of $n$ examples drawn i.i.d. from $\distrS$, and let $\setT$ be a set of $m$ triplets (obtained as described in Section~\ref{sec:contrib}).
Let $\fH{\cdot}$ be the classifier obtained after $C$ iterations of TripletBoost (Algorithm~\ref{alg:tripletboost}) using $\setS$ and $\setT$ as input.
Let $\spaceH$ be a set of triplet classifiers as defined in Section~\ref{sec:tripletclassifiers}.
Then, given a margin parameter $\theta > \sqrt{\frac{\log{\abs{\spaceH}}}{16\abs{\spaceY}^2n}}$ and a measure of the confidence of $\fH{\cdot}$ in its predictions $\ftheta[_{\fH{}}]{x,y}$, with probability at least $1-\delta$, we have that
\begin{align*}
\prob_{(x,y) \sim \distrS}\left[H(x) \neq y\right] \leq{}& \prob_{(x,y) \in \setS}\left[ \ftheta[_{\fH{}}]{x,y} \leq \theta \right] \\
&\negspace{6em}+ \bigO{\sqrt{\frac{\log{\left(\frac{1}{\delta}\right)}}{n} + \log{\left(\frac{\abs{\spaceY}^2n\theta^2}{\log{\abs{\spaceH}}}\right)}\frac{\log{\abs{\spaceH}}}{n\theta^2}}}
\end{align*}
Furthermore we have that 
\begin{align*}
\prob_{(x,y) \in \setS}\left[ \ftheta[_{\fH{}}]{x,y} \leq \theta \right] \leq \frac{\abs{\spaceY}}{2} \prod_{c=1}^C Z_c \sqrt{\left(\frac{W_{c,+} + \frac{1}{n}}{W_{c,-} + \frac{1}{n}}\right)^\theta} \text{.}
\end{align*}
\end{myth}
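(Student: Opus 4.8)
The plan is to treat the two inequalities separately, exploiting that TripletBoost is an instance of AdaBoost.MO so that the standard margin-based boosting analysis applies \citep{schapire2012boosting}. I would prove the second (empirical margin) inequality first, since it merely refines the training-error computation of Theorem~\ref{th:trainerror}. Recall that the chosen weights satisfy $e^{\alpha_c} = \sqrt{(W_{c,+}+\frac{1}{n})/(W_{c,-}+\frac{1}{n})}$. Writing the confidence $\ftheta[_{\fH{}}]{x,y}$ as the unnormalized vote of the selected classifiers divided by $\sum_c \alpha_c$, the event $\{\ftheta[_{\fH{}}]{x,y} \leq \theta\}$ becomes an inequality on the unnormalized vote, which I would relax by Markov's inequality: the indicator of this event is at most $\exp(\theta \sum_c \alpha_c - \mathrm{vote})$. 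Taking the empirical average over $\setS$, the factor $\expect_{\setS}[\exp(-\mathrm{vote})]$ telescopes through the normalizers $Z_c$ exactly as in Theorem~\ref{th:trainerror} and yields $\frac{\abs{\spaceY}}{2}\prod_c Z_c$, while $\exp(\theta\sum_c\alpha_c) = \prod_c e^{\theta\alpha_c} = \prod_c \sqrt{((W_{c,+}+\frac{1}{n})/(W_{c,-}+\frac{1}{n}))^{\theta}}$ supplies the remaining factor, giving exactly the stated bound.

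For the first (generalization) inequality I would use the sparse-approximation argument underlying the margin bounds of \citep{schapire1998boosting}. After normalizing the weights $\alpha_c$, the strong classifier $\fH{}$ is a convex combination of base triplet classifiers and hence lies in the convex hull of $\spaceH$. I would approximate this convex combination by an unweighted average $g$ of $N$ base classifiers sampled i.i.d. according to the normalized weights; such an average ranges over a finite set of at most $\abs{\spaceH}^N$ functions. A Chernoff/Hoeffding bound controls, on any fixed point, the probability that the margin of $g$ deviates from that of $\fH{}$ by more than $\theta/2$; a union bound over the $\abs{\spaceH}^N$ approximants together with elementary uniform convergence over this finite class then transfers the empirical margin error to the true error. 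Optimizing $N$ to balance the exponentially small approximation error against the union-bound cost $N\log\abs{\spaceH}$ produces the complexity term $\log(\abs{\spaceY}^2 n \theta^2/\log\abs{\spaceH})\cdot\log\abs{\spaceH}/(n\theta^2)$, while the confidence of the sampling step contributes the $\log(1/\delta)/n$ summand; the hypothesis $\theta > \sqrt{\log\abs{\spaceH}/(16\abs{\spaceY}^2 n)}$ is exactly what guarantees that the optimal $N$ is at least one, so that the argument is non-vacuous.

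The main obstacle is the multi-class, abstaining nature of the triplet classifiers. The classical margin bounds are stated for binary, everywhere-defined classifiers, whereas here each $\fh[_c]{}$ outputs a label set $o_j,o_k \subseteq \spaceY$ and may abstain, and AdaBoost.MO treats the problem through a one-against-all reduction, so the margin must be analyzed per label coordinate. Carrying the $\abs{\spaceY}$ coordinates through the sampling and union-bound steps without paying a crude $\abs{\spaceY}$-fold factor is the delicate part, and it is precisely what produces the $\abs{\spaceY}^2$ inside the logarithm of the complexity term and the $\abs{\spaceY}/2$ prefactor in the margin-error bound. Once the per-coordinate bookkeeping is set up correctly, the remainder is the routine balancing of approximation error against the sample-complexity penalty.
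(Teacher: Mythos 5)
Your proposal follows essentially the same route as the paper's proof: the empirical-margin inequality is obtained by the same $\exp\left(\theta\sum_c\alpha_c\right)$-corrected telescoping through the normalizers $Z_c$ inherited from Theorem~\ref{th:trainerror}, and the generalization inequality by the same Maurey-type sparsification (sampling $T$ base classifiers i.i.d.\ according to the normalized weights), Hoeffding's inequality plus a union bound over the at most $\abs{\spaceH}^T$ unweighted averages, and a choice $T \approx \frac{16}{\theta^2}\log\left(\frac{16\abs{\spaceY}^2 n\theta^2}{\log\abs{\spaceH}}\right)$ that your lower-bound condition on $\theta$ renders admissible. The only cosmetic difference is that the paper instantiates the confidence measure via a smoothed log-sum-exp margin with an auxiliary parameter $\eta$ discretized over a finite grid, which contributes a lower-order term inside the big-O but changes nothing structural in the argument.
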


\begin{proof}
This result, inherited from AdaBoost.MO \citep{schapire2012boosting}, is proven in Appendix~\ref{app:sec:test}.
\end{proof}

At a first glance it seems that this bound does not depend on $m$, the number of available triplets. However, this dependency is implicit: $m$ impacts the probability that the training examples are well classified with a large margin $\theta$.
If the number of triplets is small, the probability that the training examples are well classified with a given margin is small.
This probability increases when the number of triplets increases. We illustrate this behaviour in Appendix~\ref{app:sec:discussion}.

To prove a bound that explicitly depends on $m$ would be of significant interest. 
However this is a difficult problem, as it requires to use an explicit measure of complexity for general metric spaces, which is beyond the scope of this paper.

\begin{figure*}
    \centering
    \subfloat[Moons, Metric: Euclidean, Noise Level: $0\%$\label{fig:smallscaletriplets}]{\includegraphics[scale=0.305]{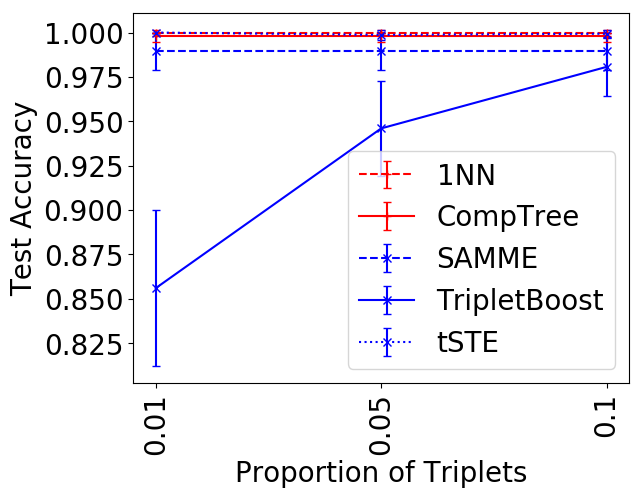}}
    \hspace{1em}
    \subfloat[Moons, Metric: Euclidean, Proportion of Triplets: $10\%$\label{fig:smallscalenoise}]{\includegraphics[scale=0.305]{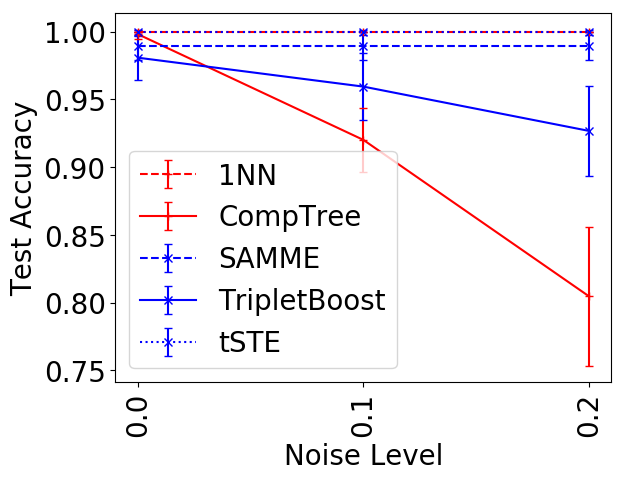}}
    \hspace{1em}
    \subfloat[Iris, Proportion of Triplets: $10\%$, Noise Level: $0\%$\label{fig:metricsensitivity}]{\includegraphics[scale=0.305]{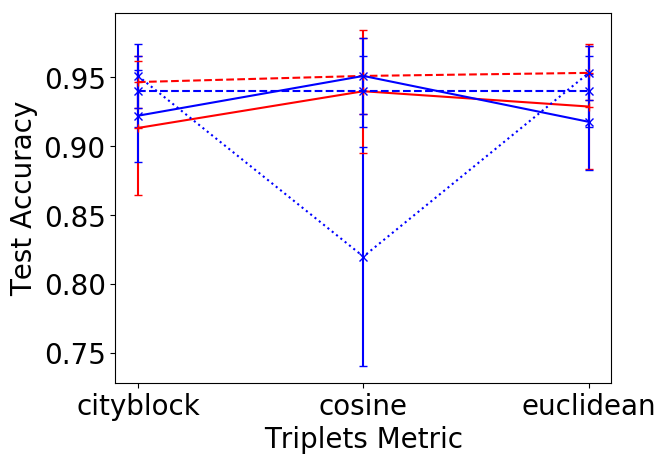}}
    
    \caption{Moons is a small scale dataset with $500$ examples and $2$ dimensions. We fix the triplets metric to the Euclidean distance. In Figure~\ref{fig:smallscaletriplets} we consider the noise free setting and we vary the proportion of available triplets from $1$ to $10\%$ of all the triplets.
    In Figure~\ref{fig:smallscalenoise} we fix this proportion to $10\%$ and we vary the noise level from $0$ to $20\%$.
    Iris is a small scale dataset with $150$ examples and $4$ dimensions.
    In Figure~\ref{fig:metricsensitivity} we fix the proportion of available triplets to $10\%$, the noise level to $0\%$ and we vary the metric considered to generate the triplets.
    \label{fig:smallscale}}
\end{figure*}

\subsection{Lower bound on the number of triplets}
\label{sec:lowerbound}

In this section we investigate the minimum number of triplets that are necessary to ensure that our algorithm performs well.
Ideally, we would like to obtain a lower bound on the number of triplets that are necessary to achieve a given accuracy. In this paper we take a first step in this direction by deriving a bound on the number of triplets that are necessary to ensure that the learned classifier does not abstain on any unseen example.
Theorems~\ref{th:limit}~and~\ref{th:equality} show that it abstains with high probability if it is learned using too few triplets or if it combines too few triplet classifiers. 

\begin{myth}[Lower bound on the probability that a strong classifier abstains\label{th:limit}]
Let $n \geq 2$ be the number of training examples, $p = \frac{2n^k}{n^3}$ with $k \in \left[0,3-\frac{\log(2)}{\log(n)}\right)$ be the probability that a triplet is available in the triplet set $\setT$ and $C = \frac{n^\beta}{2}$ with $\beta \in \left[0,1+\frac{\log{(n-1)}}{\log{(n)}}\right]$ be the number of classifiers combined in the learned classifier.
Let $\algA$ be any algorithm learning a classifier $\fH{\cdot} = \argmax_{y \in \spaceY}\left(\sum_{c=1}^C \alpha_c \indic{y \in \fh[_c]{\cdot}}\right)$ that combines several triplet classifiers using some weights $\alpha_c \in \nsetR$. Assume that triplet classifiers that abstain on all the training examples have a weight of $0$ (that is if $\fh[_c]{x_i} = \abstain$ for all the examples $(x_i,y_i) \in \setS$ then $\alpha_c = 0$).
Then the probability that $\fH{}$ abstains on a test example is bounded as follows:
\begin{align}
\prob_{(x,y) \sim \distrS} \left[ \fH{x} = \abstain \right] \geq{}& \Big(1-p + p\left(1-p\right)^n \Big)^C\text{.} \label{eq:boundexact}
\end{align}
\end{myth}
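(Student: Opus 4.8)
The plan is to reduce abstention of the strong classifier $\fH{}$ to the joint ``irrelevance'' of every individual triplet classifier it combines, and then to exploit the independence of the triplet availabilities. First I would characterise when $\fH{}$ abstains on a test point $x$. Because $\fH{x} = \argmax_{y}\left(\sum_{c=1}^C \alpha_c \indic{y \in \fh[_c]{x}}\right)$ and $\indic{y \in \abstain}=0$, the weighted score is identically zero in $y$ --- i.e.\ $\fH{}$ abstains --- exactly when every classifier with $\alpha_c \neq 0$ abstains on $x$. Equivalently, $\fH{}$ does \emph{not} abstain on $x$ only if some $c$ has $\alpha_c \neq 0$ and $\fh[_c]{x} \neq \abstain$. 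Invoking the standing assumption that a triplet classifier abstaining on all of $\setS$ has weight $0$, the condition $\alpha_c \neq 0$ forces $\fh[_c]{}$ to be active on at least one training example. Hence
\[
\{\fH{x} \neq \abstain\} \subseteq \bigcup_{c=1}^C \{\fh[_c]{} \text{ is active on } x \text{ and on at least one point of } \setS\}.
\]

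Next I would bound the per-classifier probability. Fix the reference pair $(x_j,x_k)$ of a single triplet classifier; it is active on a point $z$ iff the triplet relating $z$ to $(x_j,x_k)$ is present, an event of probability $p$. The event ``active on $x$'' depends only on the test triplet $(x,x_j,x_k)$, while ``active on at least one of the $n$ training points'' depends only on the training triplets $(x_i,x_j,x_k)$; these triplets are all distinct, so by the independence of the sampling the two events are independent, with probabilities $p$ and $1-(1-p)^n$. Therefore a single classifier is active on $x$ and on some training point with probability $p\left(1-(1-p)^n\right)$, and it \emph{fails} to be so with probability exactly $1-p+p(1-p)^n =: q$, the quantity in the bound.

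Finally I would aggregate over the $C$ classifiers. If the reference pairs were fixed independently of the triplet sample, then the relevant test and training triplets for distinct pairs would be distinct and independently drawn, so the probability that \emph{none} of the $C$ classifiers is simultaneously active on $x$ and on $\setS$ would factor as $q^{C}$; since this event is contained in $\{\fH{x}=\abstain\}$, the lower bound on $\prob_{(x,y)\sim\distrS}\left[\fH{x}=\abstain\right]$ would follow. I expect this last step to be the main obstacle. Because $\algA$ is arbitrary and reads the very triplet set from which the classifiers are built, the reference pairs are data-dependent and the per-slot events need not be independent. Making the argument rigorous requires conditioning on the references and weights output by $\algA$ and verifying that each relevant test and training triplet still enters with probabilities $p$ and $1-(1-p)^n$ and that the $C$ slots factorise --- in particular ruling out that $\algA$ aligns its choices with the sampled triplets so as to replace the per-slot factor $q$ by something smaller; note that merely conditioning on the learned references and using that the fresh test triplets are missing with probability $1-p$ yields only the weaker exponent, $(1-p)^{C}$. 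A careful handling of this dependence, showing that the training-activity factor $1-(1-p)^n$ survives the aggregation, is what the full proof must supply.
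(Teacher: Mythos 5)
Your proposal is, step for step, the paper's own proof. The paper formalizes your first two steps with indicator variables: $Y_{i,c}$ (classifier $c$ active on training point $x_i$), $Y_{x,c}$ (active on the test point), and $V_c = \sum_{i=1}^n Y_{i,c}$, all declared to be independent Bernoulli/Binomial; the zero-weight assumption gives $\prob\left[ \fH{x} = \abstain \mid \sum_{c=1}^C V_c Y_{x,c} = 0 \right] = 1$, and then
\begin{align*}
\prob\left[ \fH{x} = \abstain \right] \geq \prob\left[ \textstyle\sum_{c=1}^C V_c Y_{x,c} = 0 \right] = \prod_{c=1}^C \left( 1 - p\left(1-(1-p)^n\right) \right) = \left( 1-p+p(1-p)^n \right)^C \text{,}
\end{align*}
which is exactly your per-slot factor $q$ raised to the power $C$.

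The obstacle you flag at the aggregation step is genuine, but you should know that the paper does not overcome it either: its proof simply posits that the $nC+C$ availability indicators are independent Bernoulli$(p)$ variables, which amounts to assuming the $C$ reference pairs are distinct and fixed independently of the sampled triplet set. That reading covers the case that matters downstream --- $C = n(n-1)/2$, i.e.\ all pairs, as used in Example~1 and Theorem~5 --- and any data-independent selection, but it does not cover an arbitrary data-dependent algorithm $\algA$. Your suspicion that the training-activity factor $1-(1-p)^n$ need not survive data-dependent aggregation is correct, and in fact the stated bound can then fail: take $C=1$ and the algorithm that scans all $N = n(n-1)/2$ pairs, outputs with weight $1$ the first pair that is active on at least one training point, and defaults to weight $0$ if none exists. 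Since the test-triplet availabilities are independent of the training triplets, its abstention probability is $(1-p) + p\left((1-p)^{n}\right)^{N} = (1-p)+p(1-p)^{nN}$, which is strictly smaller than the claimed $q = (1-p)+p(1-p)^n$ for every $n \geq 3$ and $p \in (0,1)$. So there was no missing trick for you to find: your argument is exactly as complete as the paper's, and the caveat you raise points to a real imprecision in the theorem's ``any algorithm'' phrasing (it is sound for data-independent reference pairs, in particular for the all-pairs case) rather than to a gap in your own reasoning.
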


\begin{proof}
The proof is given in Appendix~\ref{app:sec:limit}.
\end{proof}

To understand the implications of this theorem we consider a concrete example.

\begin{myex}\label{ex:limit}
Assume that we build a linear combination of all possible triplet classifiers, that is 
$C = \frac{n(n-1)}{2}$.
Then we have 
\begin{align}
\!\!\!\lim_{n\rightarrow+\infty}\prob_{(x,y) \sim \distrS}\!\left[ \fH{x} = \abstain \right] \!\geq{}&\!\!
\left\lbrace
\begin{array}{ll}
\!\!\!1 & \!\!\!\!\text{if $k<\frac{3}{2}$,} \\
\!\!\!\exp(-2) & \!\!\!\!\text{if $k=\frac{3}{2}$,} \\
\!\!\!0 & \!\!\!\!\text{if $\frac{3}{2}< k$,}
\end{array}
\right. \label{eq:boundlimit}
\end{align}
where $k$ is the parameter that controls the probability $p$ that a particular triplet is available in the triplets set $T$.
The bottom line is that when $k<\frac{3}{2}$, that is when we do not have at least $\Omega(n\sqrt{n})$ random triplets, the learned classifier abstains on all the examples.
\end{myex}

\begin{proof}
The proof is given in Appendix~\ref{app:sec:limitex}.
\end{proof}

Theorem~\ref{th:limit} shows that when $p$ and $C$ are too small, then the strong classifier abstains with high probability. However, the theorem does not guarantee that the strong classifier does not abstain when $p$ and $C$ are large. The next theorem takes care of this other direction under slightly stronger assumptions on the weights learned by the algorithm. 

\begin{myth}[Exact bound on the probability that a strong classifier abstains\label{th:equality}]
In Theorem~\ref{th:limit}, further assume that each triplet classifier that does not abstain on at least one training example has a weight different from $0$ (if for at least one example $(x_i,y_i) \in \setS$ we have that $\fh[_c]{x_i} \neq \abstain$ then $\alpha_c \neq 0$). Then equality holds in Equation~\eqref{eq:boundexact}.
\end{myth}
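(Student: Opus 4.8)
The plan is to follow the proof of Theorem~\ref{th:limit} step by step, locate the single place where it loosens an identity into an inequality, and observe that the extra hypothesis of Theorem~\ref{th:equality} is exactly what is needed to keep that step tight. First I would recall the combinatorial structure behind Equation~\eqref{eq:boundexact}. The strong classifier $\fH{}$ abstains on a test point $x$ precisely when no classifier with non-zero weight provides any label for $x$, i.e.\ when every slot $c$ with $\alpha_c \neq 0$ satisfies $\fh[_c]{x} = \abstain$. Writing $A_c$ for the event ``$\alpha_c \neq 0$ and $\fh[_c]{x} \neq \abstain$'' (slot $c$ contributes to a non-abstaining prediction on $x$), we have $\{\fH{x} = \abstain\} = \bigcap_{c=1}^C \overline{A_c}$, so the whole computation reduces to understanding one slot and then multiplying.

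Next I would analyse a single slot $c$ with reference points $x_j,x_k$. Two disjoint batches of potential triplets are relevant: the triplets $(x_i,x_j,x_k)$ with $x_i \in \setS$, which govern whether $\fh[_c]{}$ abstains on the entire training set, and the single triplet $(x,x_j,x_k)$ for the fresh test point, which governs whether $\fh[_c]{}$ abstains on $x$. Since $x$ is drawn independently of $\setS$ and each potential triplet is included independently with probability $p$, these two batches are independent; moreover, because there are only $\binom{n}{2}$ possible triplet classifiers and the $C$ combined ones use distinct reference pairs (the range of $C$ in Theorem~\ref{th:limit} tops out exactly at $\binom{n}{2}$), all triplets involved across the $C$ slots are distinct, which makes $A_1,\dots,A_C$ independent. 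A direct count gives $\prob[\fh[_c]{x} \neq \abstain] = p$ and, writing $B_c$ for the event that $\fh[_c]{}$ does not abstain on all of $\setS$, $\prob[B_c] = 1-(1-p)^n$.

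The crux is relating $\{\alpha_c \neq 0\}$ to $B_c$. The hypothesis inherited from Theorem~\ref{th:limit} gives $\{\alpha_c \neq 0\} \subseteq B_c$, which is the step producing the inequality: it yields $\prob[A_c] \leq \prob[B_c \cap \{\fh[_c]{x} \neq \abstain\}] = p\bigl(1-(1-p)^n\bigr)$, hence $\prob[\fH{x}=\abstain] = \prod_{c} \prob[\overline{A_c}] \geq (1-p+p(1-p)^n)^C$. The additional hypothesis of Theorem~\ref{th:equality}, that any classifier not abstaining on at least one training example receives a non-zero weight, supplies the reverse inclusion $B_c \subseteq \{\alpha_c \neq 0\}$. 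Therefore $\{\alpha_c \neq 0\} = B_c$ exactly, and by the independence of $B_c$ and $\{\fh[_c]{x} \neq \abstain\}$ we get $\prob[A_c] = p\bigl(1-(1-p)^n\bigr)$ with equality, so $\prob[\fH{x} = \abstain] = \prod_{c=1}^C \bigl(1-p+p(1-p)^n\bigr) = (1-p+p(1-p)^n)^C$.

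I expect the main obstacle to be the bookkeeping around the event identity $\{\alpha_c \neq 0\} = B_c$ combined with the independence claim: one must check that conditioning on which slots are active introduces no dependence, which works precisely because the activity of slot $c$ is measurable with respect to slot $c$'s training-triplet batch alone, whereas abstention on $x$ depends only on the disjoint test triplet of slot $c$. Establishing the disjointness (hence independence) of these triplet batches across the $C$ distinct reference pairs, and disposing of the harmless degenerate cases where $x$ coincides with a reference point, is the only delicate point; the remainder is the computation of Theorem~\ref{th:limit} read with equalities in place of inequalities.
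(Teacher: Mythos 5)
Your proposal is correct and is essentially the paper's own argument: the paper likewise proves Theorem~\ref{th:equality} by revisiting the proof of Theorem~\ref{th:limit}, noting that its only slack is the inequality $\prob\left[\fH{x}=\abstain \,\middle|\, \sum_{c=1}^C V_cY_{x,c}\neq 0\right]\geq 0$, and observing that the added hypothesis forces this conditional probability to be exactly $0$, so that $\left\lbrace \fH{x}=\abstain\right\rbrace$ coincides with the event that every slot either abstains on all of $\setS$ or abstains on $x$. Your event-inclusion formulation ($\left\lbrace\alpha_c\neq 0\right\rbrace = B_c$, with $B_c$ measurable with respect to slot $c$'s training triplets alone, restoring the independence needed for the product) is the same observation in different notation, with a slightly more explicit justification of the independence the paper builds into its model.
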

\begin{proof}
The proof is given in Appendix~\ref{app:sec:equality}.
\end{proof}

Theorem~\ref{th:equality} implies that equality holds in Example~\ref{ex:limit}, thus when $C = \frac{n(n-1)}{2}$ we need at least $k>\frac{3}{2}$, that is at least $\Omega(n\sqrt{n})$ random triplets, to obtain a classifier that never abstains. In Appendix~\ref{app:sec:limitex} we extend Example~\ref{ex:limit} and we study the limit as $n \rightarrow \infty$ for general values of $C$ and $p$. We also provide a graphical illustration of the bound and a discussion on how this lower bound compares to existing results \citep{ailon2012active,jamieson2011low,jain2016finite}.

\section{Experiments}\label{sec:expes}

\begin{figure*}
    \centering
    \subfloat[Gisette, Metric: Euclidean, Noise Level: $0\%$\label{fig:largescaletriplets}]{\includegraphics[scale=0.305]{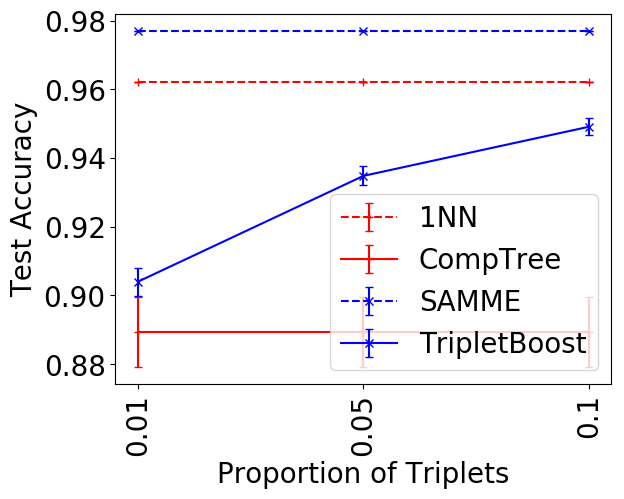}}
    \hspace{1em}
    \subfloat[Gisette, Metric: Euclidean, Proportion of Triplets: $5\%$\label{fig:largescalenoise}]{\includegraphics[scale=0.305]{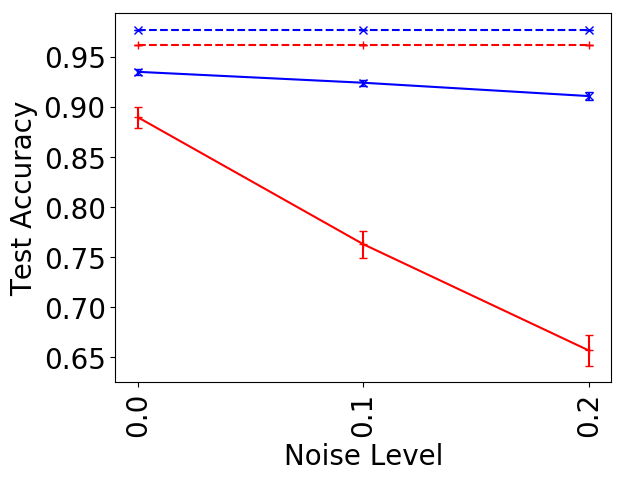}}
    \hspace{1em}
    \subfloat[Moons, Metric: Euclidean, Proportion of Triplets: $10\%$, Noise Level: $10\%$\label{fig:timing}]{\includegraphics[scale=0.305]{./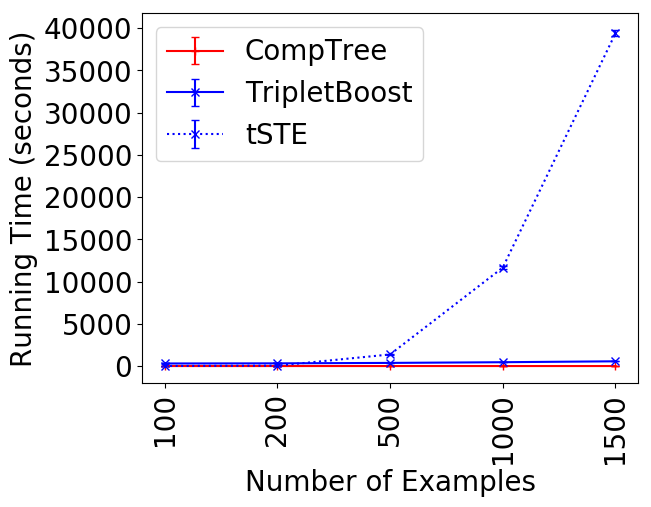}}
    
    \caption{The Gisette dataset has $7000$ examples and $5000$ dimensions. In Figure~\ref{fig:largescaletriplets} we consider the noise free setting and we vary the proportion of triplets available from $1$ to $10\%$ of all the triplets. In Figure~\ref{fig:largescalenoise} we fix the proportion of available triplets to $5\%$ and we vary the noise level from $0$ to $20\%$.
    Figure~\ref{fig:timing} presents the training time of the triplet-based methods with training samples of increasing sizes on the Moons dataset.
    The proportion of triplets and the noise level were both set to $10\%$.
    The results were obtained on a single core @3.40GHz. \label{fig:largescale}}
\end{figure*}

We propose an empirical evaluation of TripletBoost. We consider six datasets of varying scales and four baselines.

\paragraph{Baselines.} First, we consider an embedding approach. We use tSTE \citep{van2012stochastic} to embed the triplets in a Euclidean space and we use the 1-nearest neighbour algorithm for classification. We also would like to compare to alternative approaches able to learn directly from triplets (without embedding as a first step). However, to the best of our knowledge, TripletBoost is the only method able to do classification using only passively obtained triplets. The only option is to choose competing methods that have access to more information (providing them an unfair advantage). We settled for a method that uses actively chosen triplets to build a comparison-tree to retrieve nearest neighbours (CompTree) \citep{haghiri2017comparison}. Finally, to put the results obtained in the triplet setting in perspective, we consider two methods that use the original Euclidean representations of the data, the 1-nearest neighbour algorithm (1NN) and AdaBoost.SAMME (SAMME) \citep{hastie2009multi}.

\paragraph{Implementation details.} For tSTE we used the implementation distributed on the authors' website and we set the embedding dimension to the original dimension of the data. This method was only considered for small datasets with less than $10^3$ examples as it does not scale well to bigger datasets (Figure~\ref{fig:timing}).
For CompTree we used our own implementation and the leaf size of the comparison tree is set to $1$ as this is the only value for which this method can handle noise.
For 1NN and SAMME we used sk-learn \citep{scikit-learn}. The number of boosting iterations for SAMME is set to $10^3$.
Finally for TripletBoost we set the number of iterations to $10^6$.

\paragraph{Datasets and performance measure.} We consider six datasets: Iris, Moons, Gisette, Cod-rna, MNIST, and kMNIST.
For each dataset we generate some triplets as in Equation~\eqref{def:triplet} using three metrics: the Euclidean, Cosine, and Cityblock distances (details provided in Appendix~\ref{app:sec:details}).
Given a set of $n$ examples there are $\nicefrac{n^2(n-1)}{2}$ possible triplets.
We consider three different regimes where $1\%$, $5\%$ or $10\%$ of them are available, and we consider three noise levels where $0\%$, $10\%$ or $20\%$ of them are incorrect. 
We measure performances in terms of test accuracy (higher is better).
For all the experiments we report the mean and standard deviation of $10$ repetitions.
Since the results are mostly consistent across the datasets we present some representative ones here and defer the others to Appendix~\ref{app:sec:details}.

\paragraph{Small scale regime.} We first consider the datasets with less than $10^3$ training examples (Figure~\ref{fig:smallscale}).
In this setting our method does not perform well when the number of triplets is too small, but gets closer to the baselines when the number of triplets increases (Figure~\ref{fig:smallscaletriplets}).
This behaviour can be easily explained: when only $1\%$ of the triplets are available, the triplet classifiers abstain on all but $3$ or $4$ examples on average and thus their performance evaluations are not reliable.
Consequently their weights cannot be chosen in a satisfactory manner. 
This problem vanishes when the number of triplets increases.
With increasing noise levels (Figure~\ref{fig:smallscalenoise}) one can notice that TripletBoost is more robust than CompTree.
Indeed, CompTree generates a comparison-tree based on individual triplet queries, and the greedy decisions in the tree building procedure can easily be misleading in the presence of noise. 
Finally, our approach is less sensitive than tSTE to changes in the metric that generated the triplets (Figure~\ref{fig:metricsensitivity}).
Indeed, tSTE assumes that this metric is the Euclidean distance while our approach does not make any assumptions.

\paragraph{Large scale regime.} On larger datasets (Figure~\ref{fig:largescale}), our method does not reach the accuracy of 1NN and SAMME, who exploit a significant amount of extra information. Still, it performs quite well and is competitive with CompTree, the method that uses active rather than passive queries.
The ordinal embedding methods cannot compete in this regime, as they are too slow to even finish (Figure~\ref{fig:timing}). 
Once again TripletBoost is quite resistant to noise (Figure~\ref{fig:largescalenoise}).

\section{Conclusion}\label{sec:conclu}

In this paper we proposed TripletBoost to address the problem of comparison-based classification. It is particularly designed for situations where triplets cannot be queried actively, and we have to live with whatever set of triplets we get. We do not make any geometric assumptions on the underlying space. 
From a theoretical point of view we have shown that TripletBoost is well founded and we proved guarantees on both the training error and the generalization error of the learned classifier.
Furthermore we derived a new lower bound showing that to avoid learning a random predictor, at least $\Omega(n\sqrt{n})$ triplets are needed.
In practice we have shown that, given a sufficient amount of triplets, our method is competitive with state of the art methods and that it is quite resistant to noise.

To the best of our knowledge, TripletBoost is the first algorithm that is able to handle large scale datasets using only passively obtained triplets.
This means that the comparison-based setting could be considered for problems which were, until now, out of reach.
As an illustration, consider a platform where users can watch, comment and rate movies.
It is reasonable to assume that triplets of the form \emph{movie $m_i$ is closer to $m_j$ than to $m_k$} can be automatically obtained using the ratings of the users, their comments, or their interactions.
In this scenario, active learning methods are not applicable since the users might be reluctant to answer solicitations.
Similarly, embedding methods are too slow to handle large numbers of movies or users.
However, we can use TripletBoost to solve problems such as predicting the genres of the movies.
As a proof of concept we considered the 1m movielens dataset \citep{harper2016movielens}.
It contains 1 million ratings from $6040$ users on $3706$ movies.
We used the users' ratings to obtain some triplets about the movies and TripletBoost to learn a classifier able to predict the genres of a new movie (more details are given in Appendix~\ref{app:sec:detailsmovie}).
Given a new movie, in $\sim\!\!83\%$ of the cases the genre predicted as the most likely one is correct and, on average, the $5$ genres predicted as the most likely ones cover $\sim\!\!92\%$ of the true genres.

\section*{Acknowledgments}

Ulrike von Luxburg acknowledges funding by the DFG through the Institutional Strategy of the University of Tübingen (DFG, ZUK 63) and the Cluster of Excellence EXC 2064/1, project number 390727645.

\nocite{clanuwat2018deep,lecun1998gradient,uzilov2006detection,guyon2005result,Dua:2017,scikit-learn}
\bibliographystyle{named}
\bibliography{ijcai19}

\onecolumn
\appendix

\section{Discussion on Theorem~\ref{th:testerror}}
\label{app:sec:discussion}

At a first glance it seems that the bound presented in Theorem~\ref{th:testerror} does not depend on $m$, the number of available triplets. However, this dependency is implicit: $m$ impacts the probability that the training examples are well classified with a large margin $\theta$.
If the number of triplets is small, the probability that the training examples are well classified with a given margin is small.
This probability increases when the number of triplets increases.

To illustrate this behaviour, consider a fixed margin $\theta$.
Assume that each of the $n$ training examples is classified by exactly one triplet classifier and that each triplet classifier abstains on all but one example.
In this case, the only way to have no error on the training set is to combine the $n$ triplet classifiers that do not abstain.
For simplicity consider the case where all the triplet classifiers have uniform weight $\frac{1}{n}$ in the final classifier. Then the fixed margin will not be achieved when the number of training examples increases. The first term on the right hand side of the generalization bound will be $1$, that is the bound predicts that the learned classifier might not generalize.
This fact remains true for any weighting scheme.
In this example, the best weighting scheme classifies with a margin at least $\theta$, at most $\frac{1}{\theta}$ training examples.

When the number of triplets increases, the value of $Z_c$ decreases, because the proportion of examples on which the selected triplet classifier abstains, $1 - W_{c,+} - W_{c,-}$, decreases. Similarly, the proportion of training examples that are classified with a margin at least $\theta$ increases. Hence the first term on the right hand side of the generalization bound is greatly reduced and the learned classifier generalizes well.

\section{Illustration and Discussion on the Lower Bound of Theorems~\ref{th:limit}~and~\ref{th:equality}}
\label{app:sec:illustration}

When $n=100$ we illustrate the bound obtained in Theorems~\ref{th:limit}~and~\ref{th:equality} in Figure~\ref{app:fig:illustrationlimit}.
This figure shows that the transition between abstaining and non-abstaining classifier depends on both the proportion of triplets available and the number of classifiers considered.
In particular, when the number of combined classifiers increases, one needs a smaller number of triplets.
Conversely when the number of triplets available increases, one can consider combining fewer classifiers.
This illustration confirms the result obtained in Example~\ref{ex:limit} that shows that when $C = \frac{n(n-1)}{2}$ the number of available triplets should at least scale as $\Omega(n\sqrt{n})$ to obtain a non-trivial classifier that does not abstain on most of the test examples. The number of triplets that are necessary to achieve good classification accuracy might be higher but is lower bounded by this value.

This lower bound does not contradict existing results \citep{ailon2012active,jamieson2011low,jain2016finite}. They were developed in the different context of triplet recovery, where the goal is not classification, but to predict the outcome of unobserved triplet questions. 
For example it has been shown that to exactly recover all the triplets, the number of passively available triplets should scale in $\Omega(n^3)$ \citep{jamieson2011low}. Similarly \citet{jain2016finite} derive a finite error bound for approximate recovery of the Euclidean Gram matrix.
Our bound shows that, in a classification setting, it might be possible to do better than that. To set a complete picture, one would need to derive an upper bound on the number of triplets necessary for good classification accuracy. 

\section{Details on the Experiments}
\label{app:sec:details}

The characteristics of the different datasets are given in Table~\ref{app:tab:datasets}. To generate the triplets we used three different metrics:
\begin{itemize}
\item Euclidean distance: $\fd{x,y} = \normTwo{x-y}$,
\item Cityblock distance: $\fd{x,y} = \normOne{x-y}$,
\item Cosine distance: $\fd{x,y} = 1-\frac{\dotprod{x}{y}}{\normTwo{x}\normTwo{y}}$.
\end{itemize}
The set $\spaceT$ of all triplets is then defined as follows:
\begin{align*}
    \spaceT = \left\lbrace (x_i,x_j,x_k) : \fd{x_i,x_j} < \fd{x_i,x_k} \right\rbrace\text{.}
\end{align*}
In all the experiments we considered subsets $T$ of $\spaceT$ by selecting uniformly at random without replacement $1\%$, $5\%$, or $10\%$ of the triplets. Similarly we added some noise by randomly swapping $0\%$, $10\%$, or $20\%$ of the triplets, that is $(x_i,x_k,x_j) \in T$ while $(x_i,x_j,x_k) \in \spaceT$.

The results that were omitted in the main paper are given in Figures~\ref{app:fig:irisnoise}~to~\ref{app:fig:kmnisttriplets}.

\begin{figure}
\centering
\subfloat[With respect to $k$ and $\beta$.]{\includegraphics[scale=0.5]{./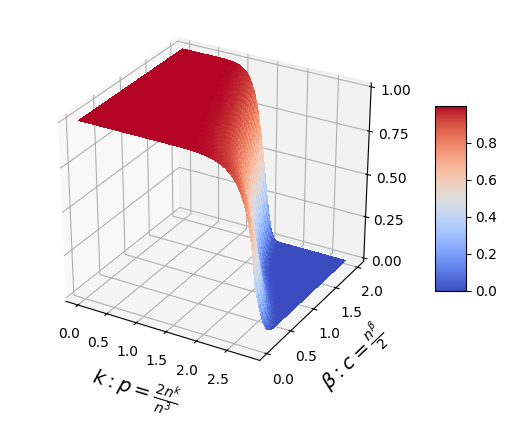}}
\subfloat[With $C=\frac{n(n-1)}{2}$.\label{app:fig:illustrationlimitC}]{\includegraphics[scale=0.42]{./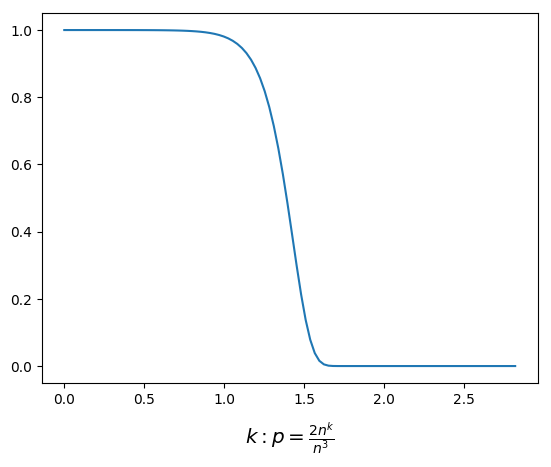}}
\caption{Illustration of the bound in Theorems~\ref{th:limit}~and~\ref{th:equality} when $n = 100$.\label{app:fig:illustrationlimit}}
\end{figure}

\begin{table}
\centering
\begin{tabular}{lrrrr}  
\toprule
Dataset  & Dimension & Train / Test & Classes & Results \\
\midrule
Iris & 4 & 105 / 45 & 3 & Figures~\ref{app:fig:irisnoise}~and~\ref{app:fig:iristriplets} \\
Moons & 2 & 350 / 150 & 2 & Figures~\ref{app:fig:moonsnoise}~and~\ref{app:fig:moonstriplets} \\
Gisette & 5000 & 6000 / 1000 & 2 & Figures~\ref{app:fig:gisettenoise}~and~\ref{app:fig:gisettetriplets}  \\
CodRna & 8 & 59535 / 271617 & 2 & Figures~\ref{app:fig:codrnanoise}~and~\ref{app:fig:codrnatriplets} \\
MNIST & 784 & 60000 / 10000 & 10 & Figures~\ref{app:fig:mnistnoise}~and~\ref{app:fig:mnisttriplets} \\
kMNIST & 784 & 60000 / 10000 & 10 & Figures~\ref{app:fig:kmnistnoise}~and~\ref{app:fig:kmnisttriplets} \\
\bottomrule
\end{tabular}
\caption{Summary of the different datasets.\label{app:tab:datasets}}
\end{table}

\section{Details on the Movielens experiment}
\label{app:sec:detailsmovie}

As a proof of concept we considered the 1m movielens dataset \citep{harper2016movielens}.
This dataset contains 1 million ratings from $6040$ users on $3706$ movies and each movie has one or several genres (there is $18$ genres in total).
To demonstrate the interest of our approach we proposed (i) to use the users' ratings to obtain some triplets of the form \emph{movie $m_i$ is closer to movie $m_j$ than to movie $m_k$}, and (ii) to use TripletBoost to learn a classifier predicting the genres of the movies.

To generate the triplets we propose to consider that movie $m_i$ is closer to $m_j$ than to $m_k$ if, on average, users that rated all three movies rated $m_i$ and $m_j$ more similarly than $m_i$ and $m_k$.
The underlying intuition is that users like and dislike genres of movies --- for example a user that dislikes horror movies but likes comedy movies will probably give low ratings to The Ring (2002) and Scream (1996) and a higher rating to The Big Lebowsky (1998). 
Formally let $r_{u,i}$ be the rating of user $u$ on movie $m_i$ and $r_{u,i,j} = \abs{r_{u,i}-r_{u,j}}$ then the triplet set $\setT$ is
\begin{align*}
\setT ={}& \left\lbrace (m_i,m_j,m_k) : \left(\sum_{u \in U_{i,j,k}} \frac{\indic{r_{u,i,j} < r_{u,i,k}}- \indic{r_{u,i,j} > r_{u,i,k}}}{\abs{U_{i,j,k}}}\right) > 0 \right\rbrace
\end{align*}
where $U_{i,j,k}$ is the set of users that rated all three movies.
Each user has only rated a small number of movies and might give a high, respectively low, rating to a movie with a genre that he usually rates lower, respectively higher.
Thus we only have access to a noisy subset of all the possible triplets.

We used a random sample of $2595$ movies, and their corresponding triplets, to learn a multi-label classifier using TripletBoost (with $10^6$ boosting iterations).
Since we are in a multi-label setting we would like to predict how relevant each genre is for a new movie rather than a single genre.
To obtain such a quantity we can simply ignore the $\argmax$ in Equation~(5) in the main paper to obtain a classifier $\fH{m,y}$ that predicts the weight of a genre $y$ for a movie $m$:
\begin{align*}
\fH{m,y} = \sum_{c=1}^C \alpha_c\indic{\fh[_c]{m} \neq \abstain \wedge y \in \fh[_c]{m}}\text{.}
\end{align*}

In the test phase we used the $1111$ remaining movies to measure the performance of the learned classifier. 
First we considered the precision of the genre predicted with the highest weight and obtained a value of $0.83168$. 
It means that in $\sim\!\!83\%$ of the cases the genre predicted with the highest weight is correct. 
We also considered the recall of the $5$ genres predicted with the highest weights and obtained a value of $0.92943$. 
It implies that, on average, the $5$ genres predicted with the highest weights cover $\sim\!\!92\%$ of the genres of the considered movie.
\vfill

\begin{figure}[H]
    \centering
    \subfloat{\rotatebox{90}{\hspace{5em}Euclidean}\hspace{2em}}  \subfloat{\includegraphics[scale=0.32]{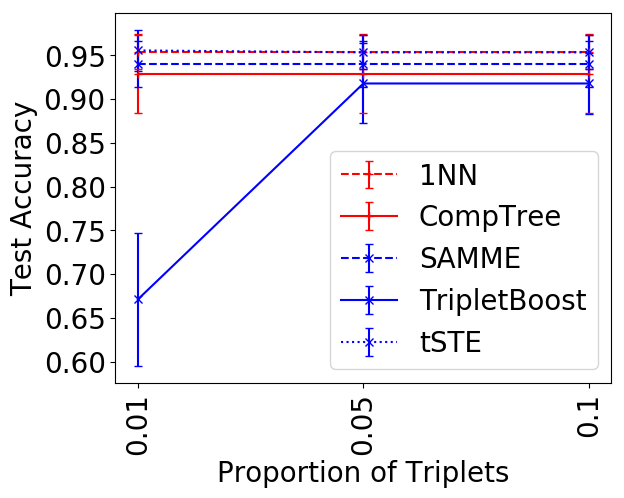}}
    \subfloat{\includegraphics[scale=0.32]{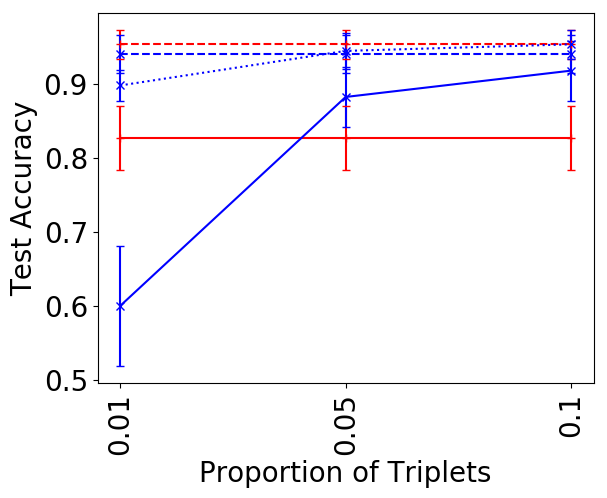}}
    \subfloat{\includegraphics[scale=0.32]{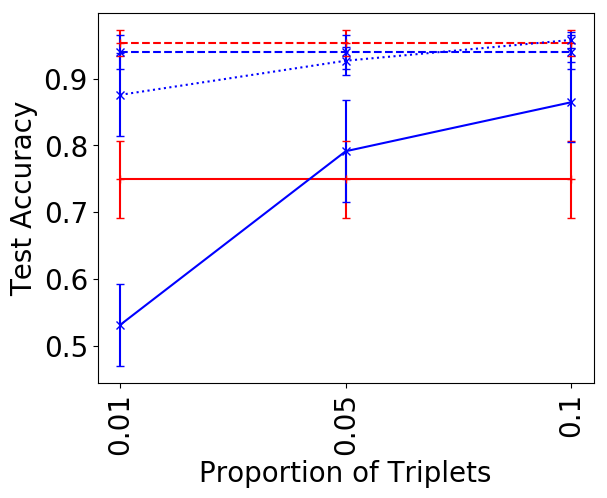}}
    
    \subfloat{\rotatebox{90}{\hspace{5em}Cosine}\hspace{2em}} 
    \subfloat{\includegraphics[scale=0.32]{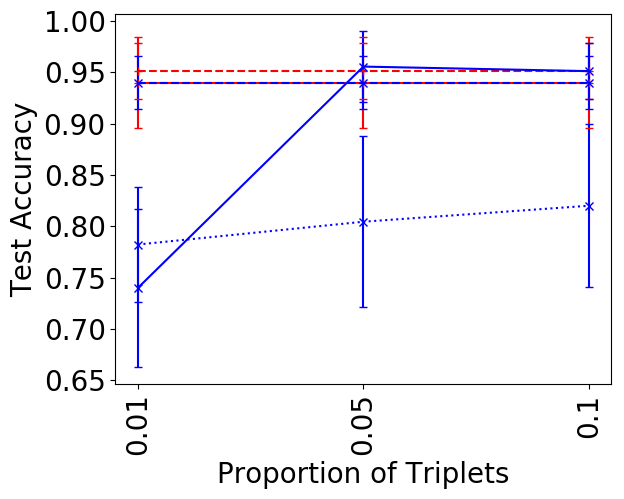}}
    \subfloat{\includegraphics[scale=0.32]{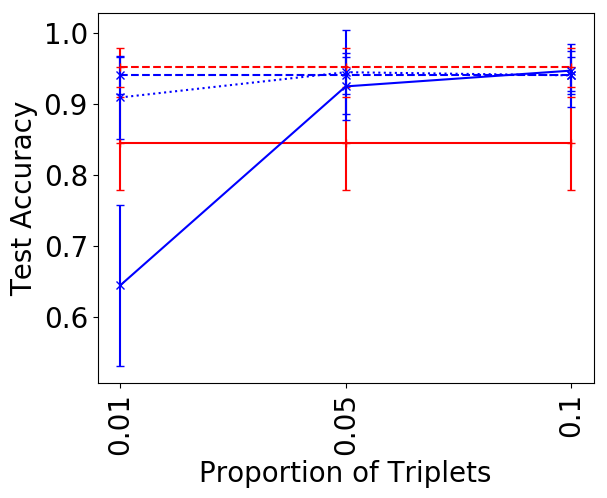}}
    \subfloat{\includegraphics[scale=0.32]{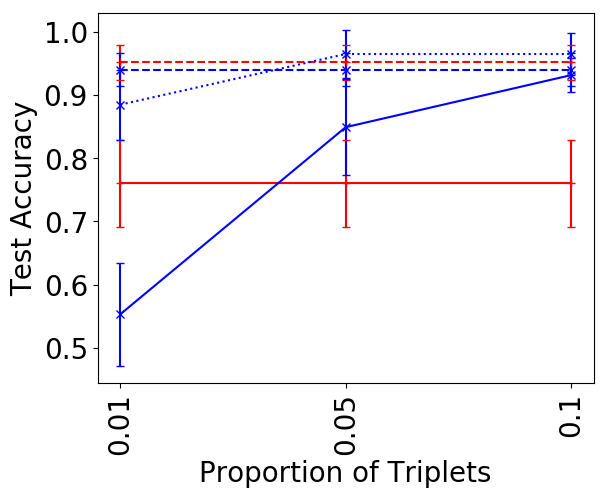}}
    
    \subfloat{\rotatebox{90}{\hspace{5em}Cityblock}\hspace{2em}} 
    \subfloat[Noise Level: $0\%$]{\includegraphics[scale=0.32]{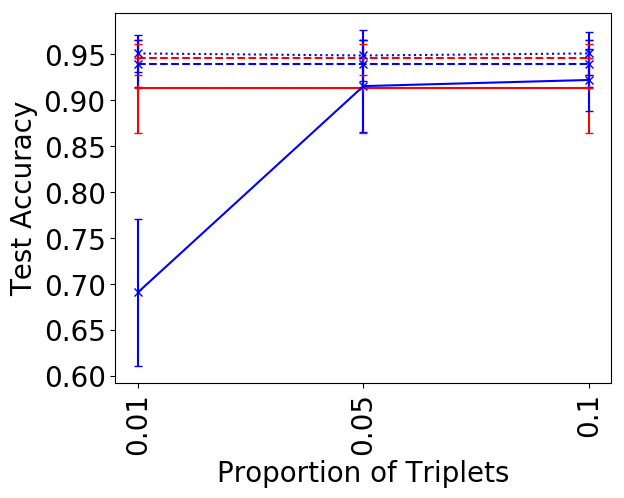}}
    \subfloat[Noise Level: $10\%$]{\includegraphics[scale=0.32]{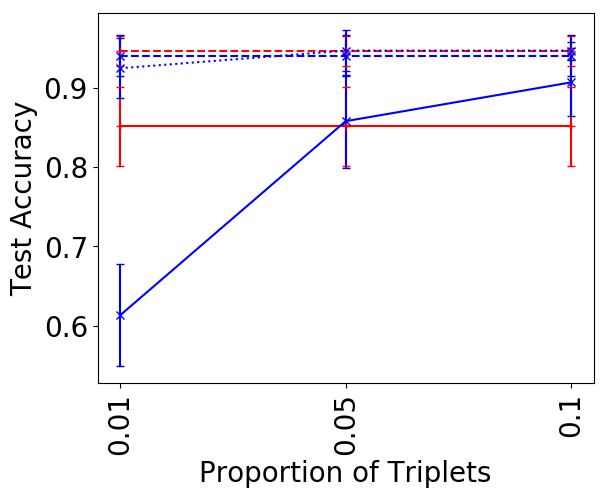}}
    \subfloat[Noise Level: $20\%$]{\includegraphics[scale=0.32]{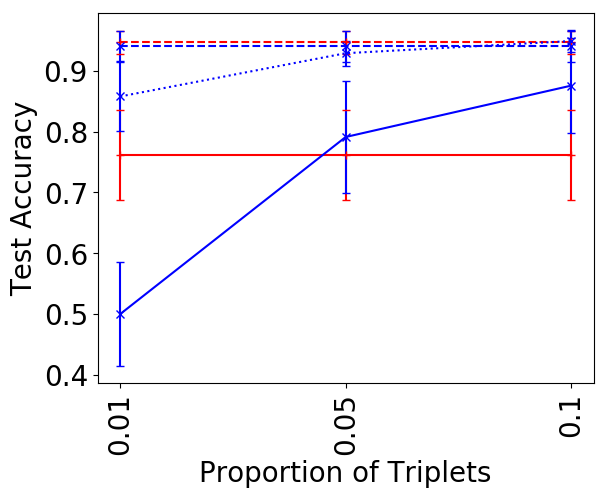}}
    
    \caption{Results on the Iris dataset. In each row we consider a different metric to generate the triplets. In each column we consider a different noise level from $0$ to $20\%$. In each plot we vary the proportion of triplets available from $1$ to $10\%$.\label{app:fig:irisnoise}}
\end{figure}

\begin{figure}[H]
    \centering
    \subfloat{\rotatebox{90}{\hspace{5em}Euclidean}\hspace{2em}}  \subfloat{\includegraphics[scale=0.32]{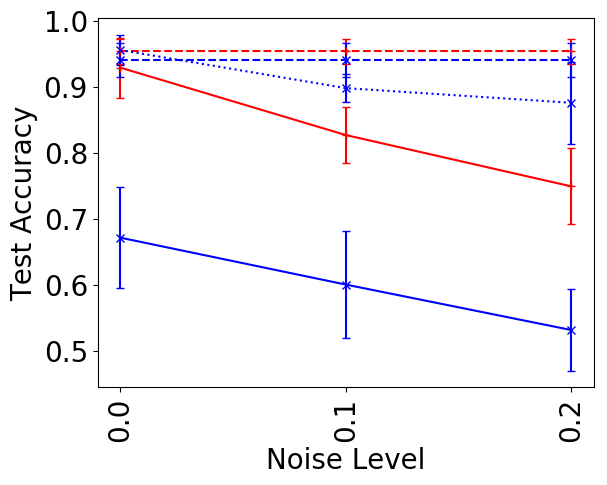}}
    \subfloat{\includegraphics[scale=0.32]{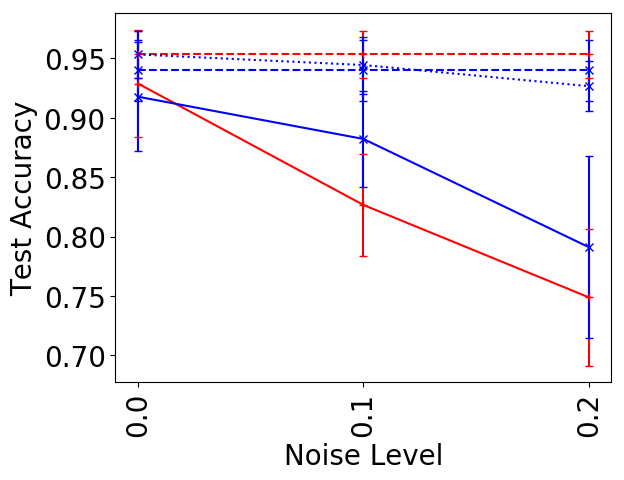}}
    \subfloat{\includegraphics[scale=0.32]{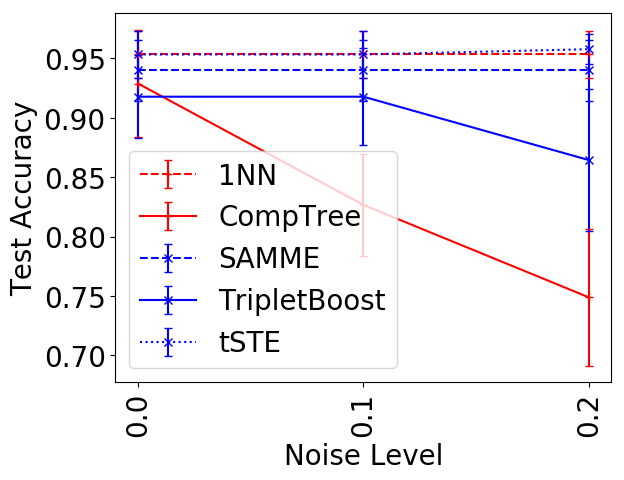}}
    
    \subfloat{\rotatebox{90}{\hspace{5em}Cosine}\hspace{2em}} 
    \subfloat{\includegraphics[scale=0.32]{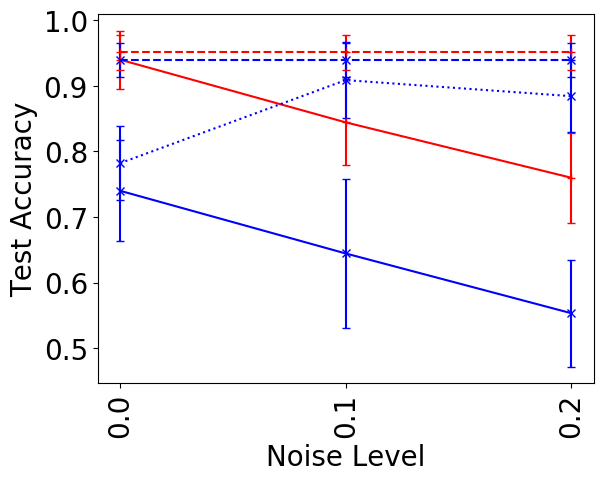}}
    \subfloat{\includegraphics[scale=0.32]{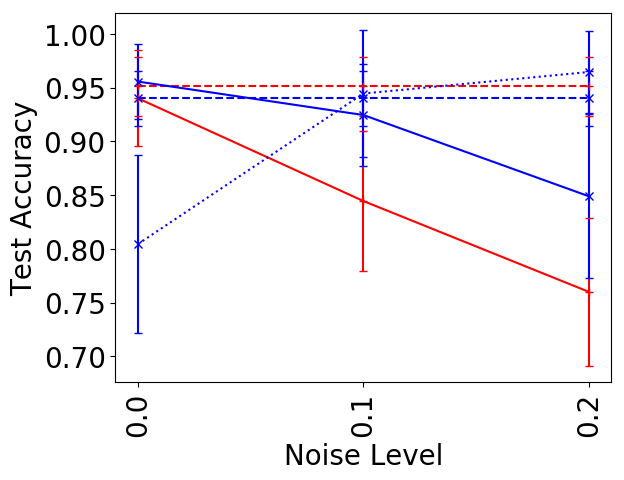}}
    \subfloat{\includegraphics[scale=0.32]{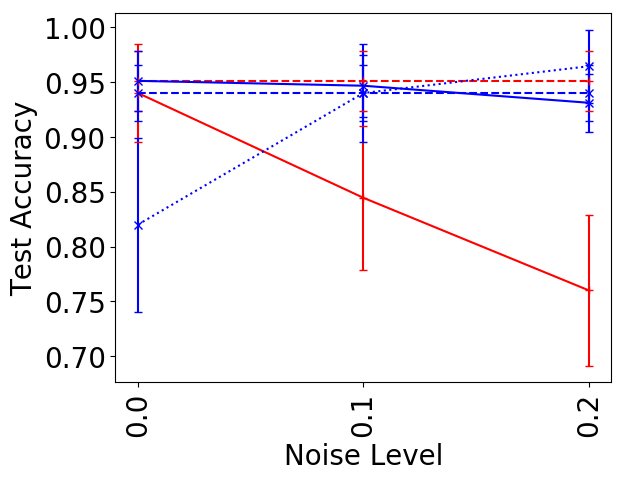}}
    
    \subfloat{\rotatebox{90}{\hspace{5em}Cityblock}\hspace{2em}} 
    \subfloat[Proportion of Triplets: $1\%$]{\includegraphics[scale=0.32]{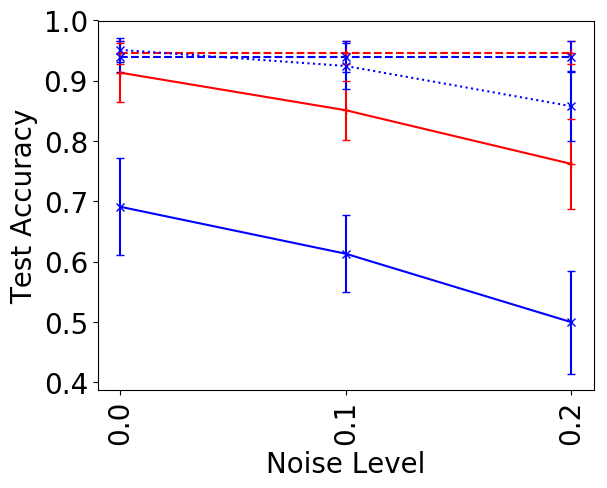}}
    \subfloat[Proportion of Triplets: $5\%$]{\includegraphics[scale=0.32]{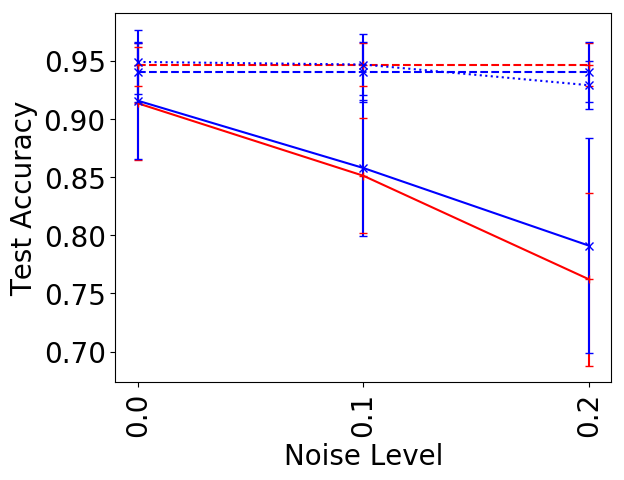}}
    \subfloat[Proportion of Triplets: $10\%$]{\includegraphics[scale=0.32]{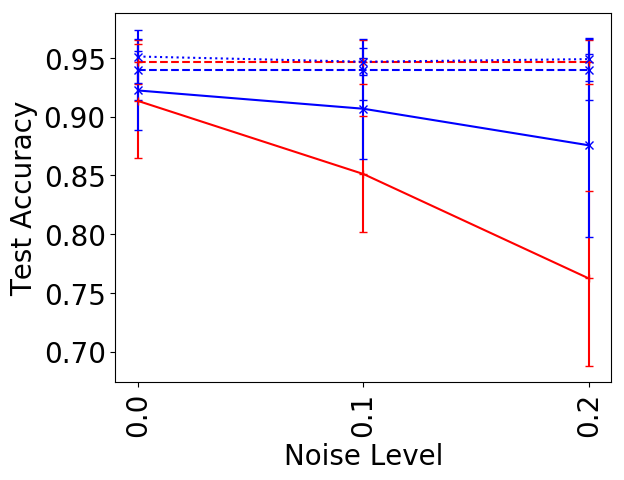}}
    
    \caption{Results on the Iris dataset. In each row we consider a different metric to generate the triplets. In each column we consider a different proportion of triplets available from $1$ to $10\%$. In each plot we vary the noise level from $0$ to $20\%$.\label{app:fig:iristriplets}}
\end{figure}


\begin{figure}[H]
    \centering
    \subfloat{\rotatebox{90}{\hspace{5em}Euclidean}\hspace{2em}}  \subfloat{\includegraphics[scale=0.32]{./images/moons/{baseMetric_euclidean_propNoise_0.0_propTriplets_accuracyTest_1NN_CompTree_SAMME_TripletBoost_tSTE}.png}}
    \subfloat{\includegraphics[scale=0.32]{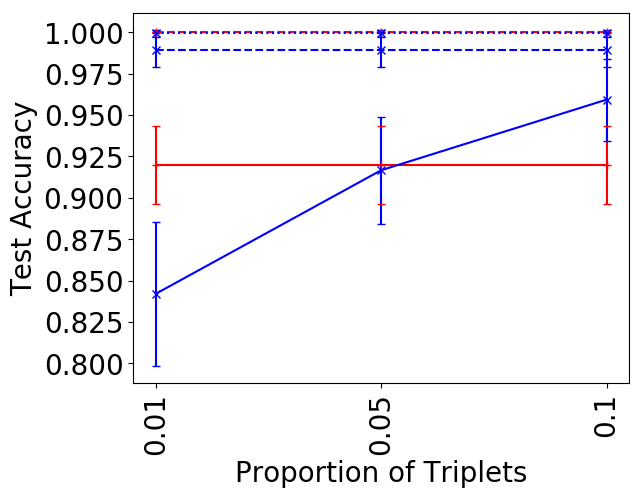}}
    \subfloat{\includegraphics[scale=0.32]{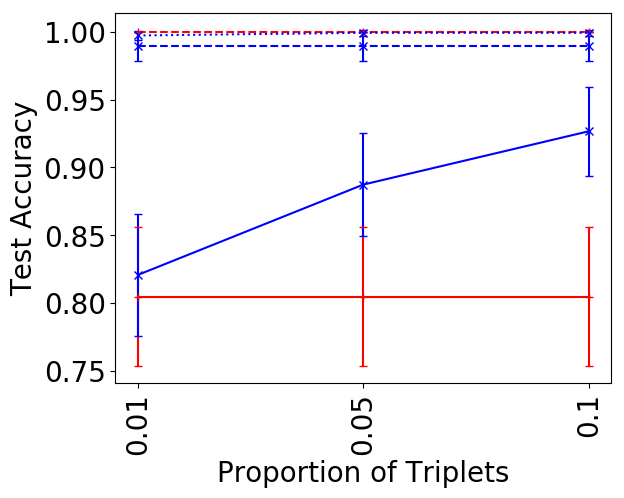}}
    
    \subfloat{\rotatebox{90}{\hspace{5em}Cosine}\hspace{2em}} 
    \subfloat{\includegraphics[scale=0.32]{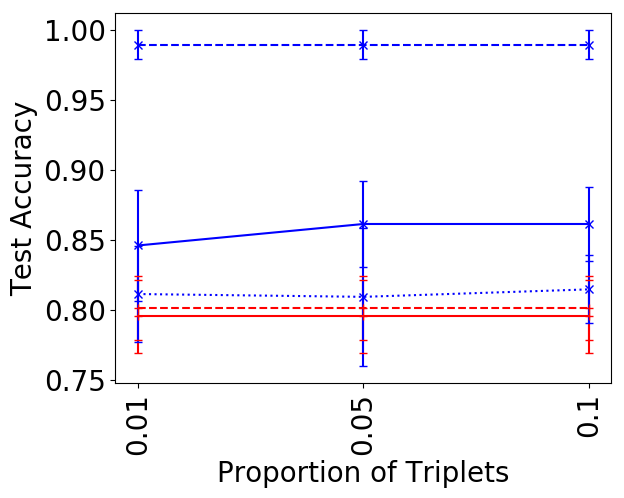}}
    \subfloat{\includegraphics[scale=0.32]{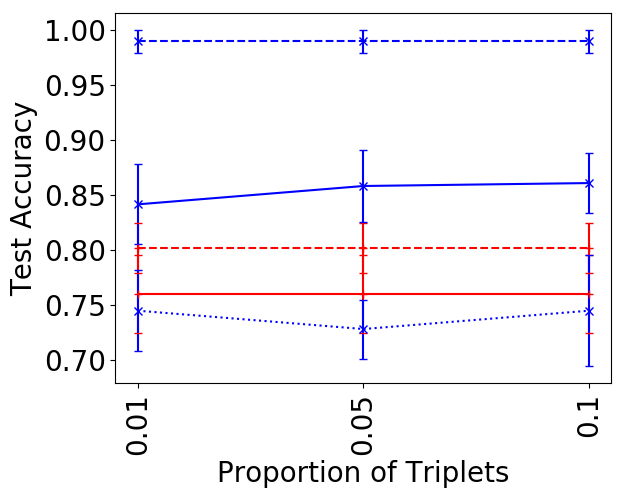}}
    \subfloat{\includegraphics[scale=0.32]{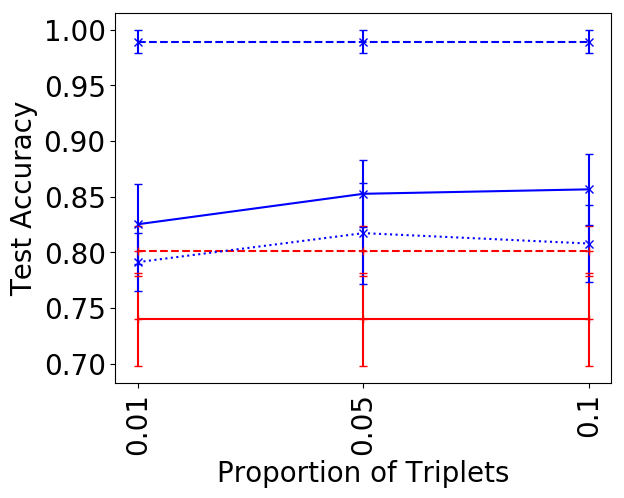}}
    
    \subfloat{\rotatebox{90}{\hspace{5em}Cityblock}\hspace{2em}} 
    \subfloat[Noise Level: $0\%$]{\includegraphics[scale=0.32]{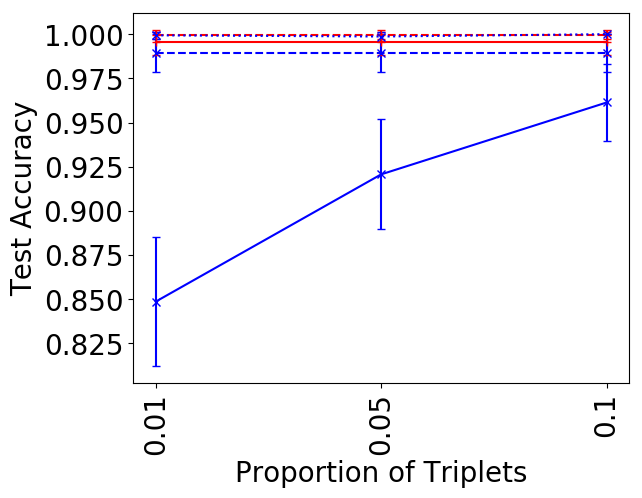}}
    \subfloat[Noise Level: $10\%$]{\includegraphics[scale=0.32]{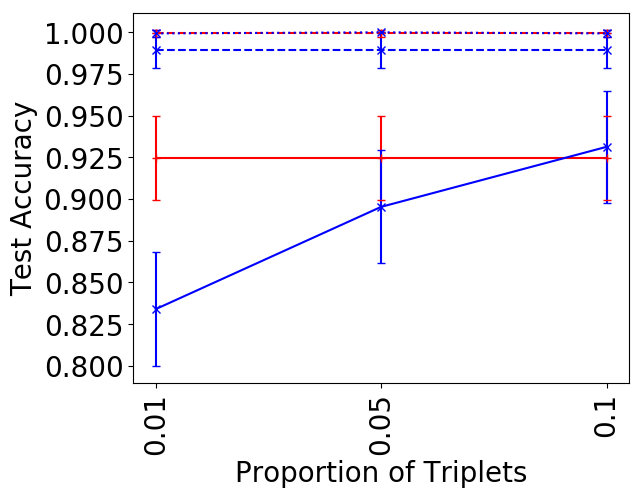}}
    \subfloat[Noise Level: $20\%$]{\includegraphics[scale=0.32]{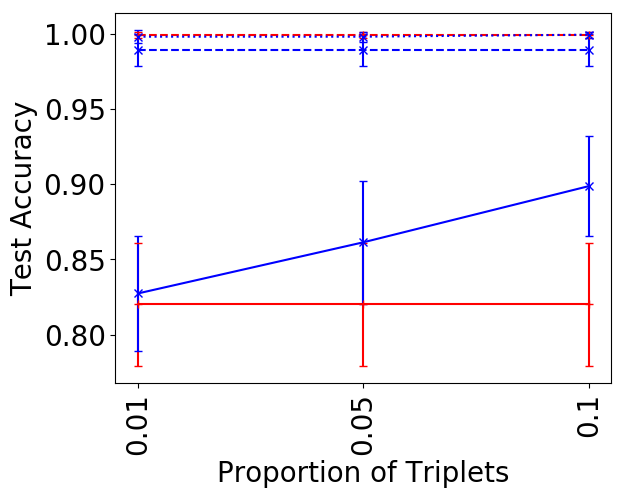}}
    
    \caption{Results on the Moons dataset. In each row we consider a different metric to generate the triplets. In each column we consider a different noise level from $0$ to $20\%$. In each plot we vary the proportion of triplets available from $1$ to $10\%$.\label{app:fig:moonsnoise}}
\end{figure}

\begin{figure}[H]
    \centering
    \subfloat{\rotatebox{90}{\hspace{5em}Euclidean}\hspace{2em}}  \subfloat{\includegraphics[scale=0.32]{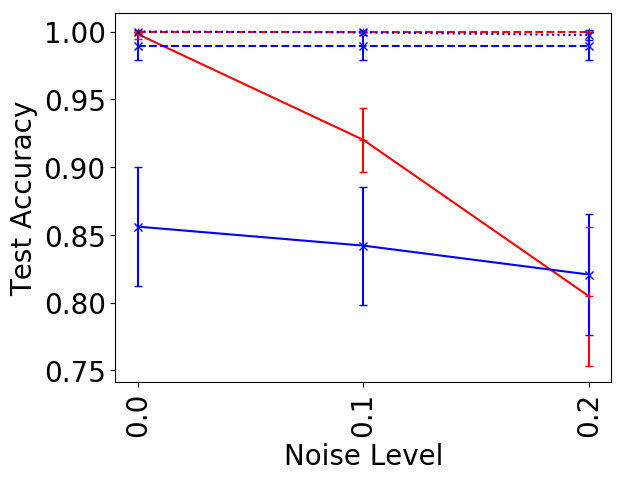}}
    \subfloat{\includegraphics[scale=0.32]{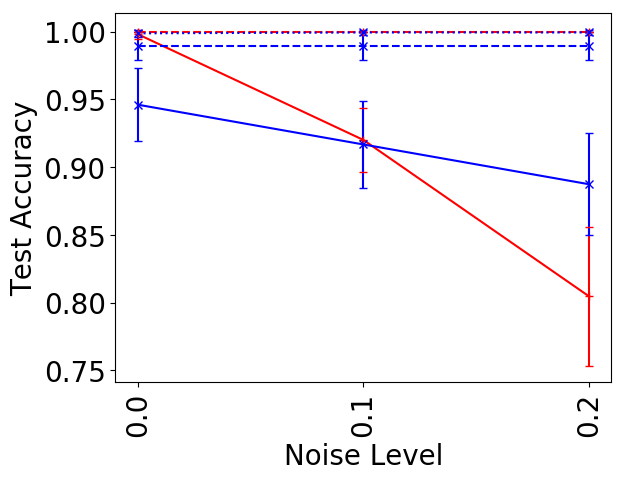}}
    \subfloat{\includegraphics[scale=0.32]{./images/moons/{baseMetric_euclidean_propTriplets_0.1_propNoise_accuracyTest_1NN_CompTree_SAMME_TripletBoost_tSTE}.png}}
    
    \subfloat{\rotatebox{90}{\hspace{5em}Cosine}\hspace{2em}} 
    \subfloat{\includegraphics[scale=0.32]{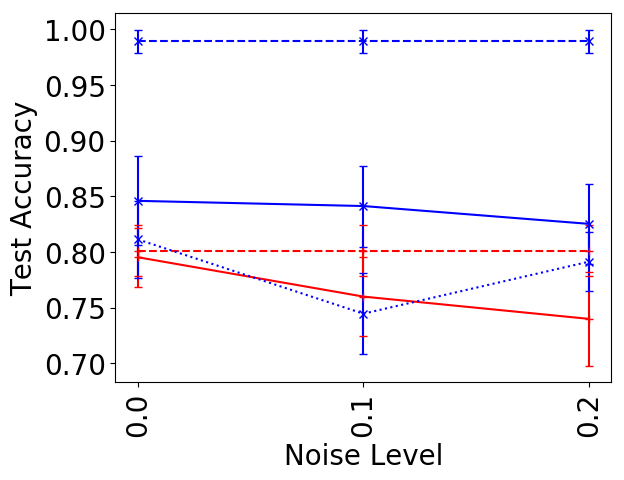}}
    \subfloat{\includegraphics[scale=0.32]{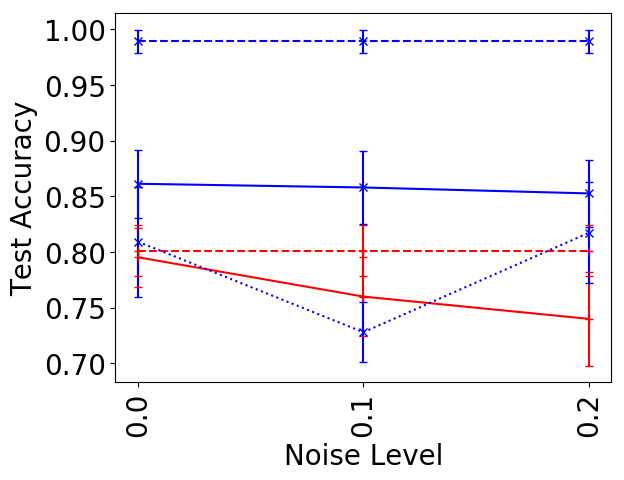}}
    \subfloat{\includegraphics[scale=0.32]{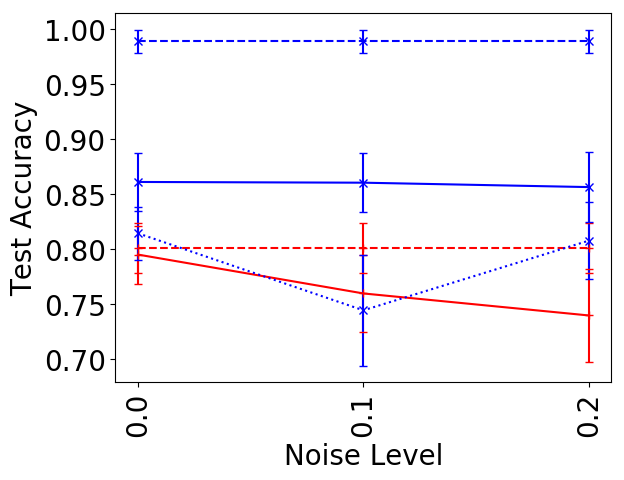}}
    
    \subfloat{\rotatebox{90}{\hspace{5em}Cityblock}\hspace{2em}} 
    \subfloat[Proportion of Triplets: $1\%$]{\includegraphics[scale=0.32]{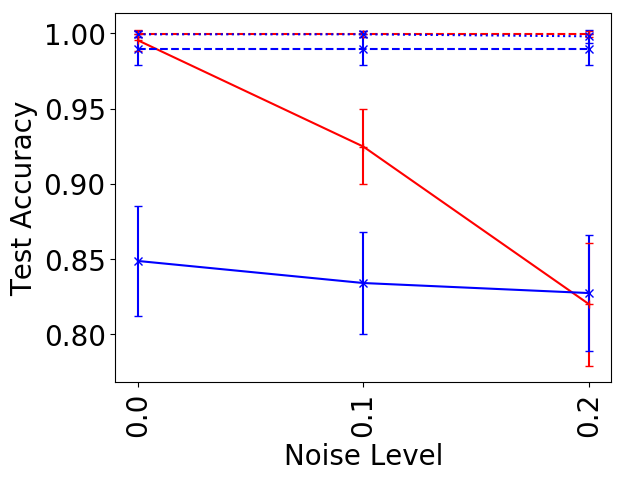}}
    \subfloat[Proportion of Triplets: $5\%$]{\includegraphics[scale=0.32]{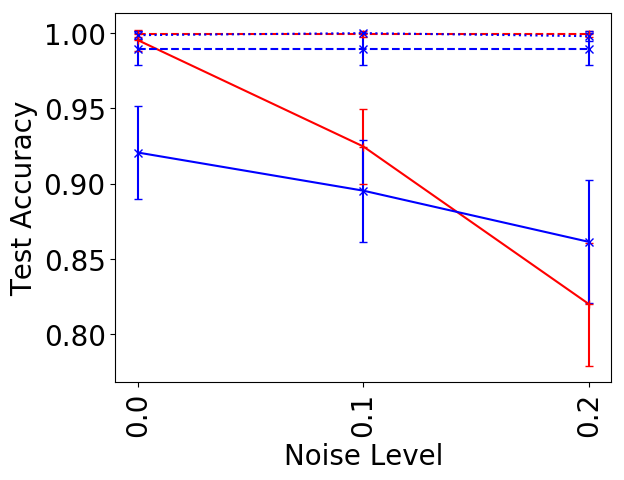}}
    \subfloat[Proportion of Triplets: $10\%$]{\includegraphics[scale=0.32]{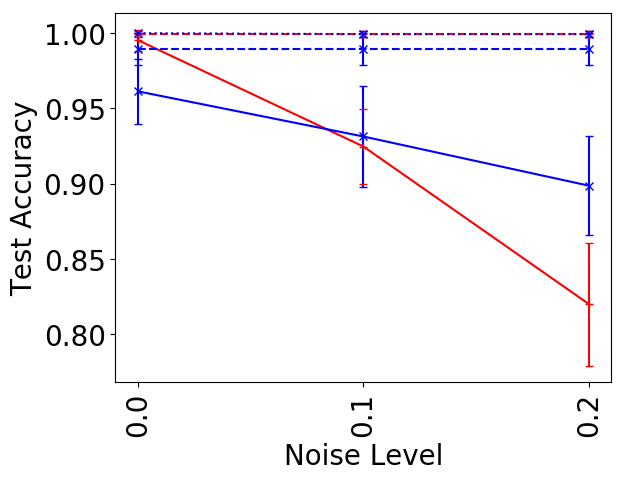}}
    
    \caption{Results on the Moons dataset. In each row we consider a different metric to generate the triplets. In each column we consider a different proportion of triplets available from $1$ to $10\%$. In each plot we vary the noise level from $0$ to $20\%$.\label{app:fig:moonstriplets}}
\end{figure}


\begin{figure}[H]
    \centering
    \subfloat{\rotatebox{90}{\hspace{5em}Euclidean}\hspace{2em}}  \subfloat{\includegraphics[scale=0.32]{./images/gisette/{baseMetric_euclidean_propNoise_0.0_propTriplets_accuracyTest_1NN_CompTree_SAMME_TripletBoost}.png}}
    \subfloat{\includegraphics[scale=0.32]{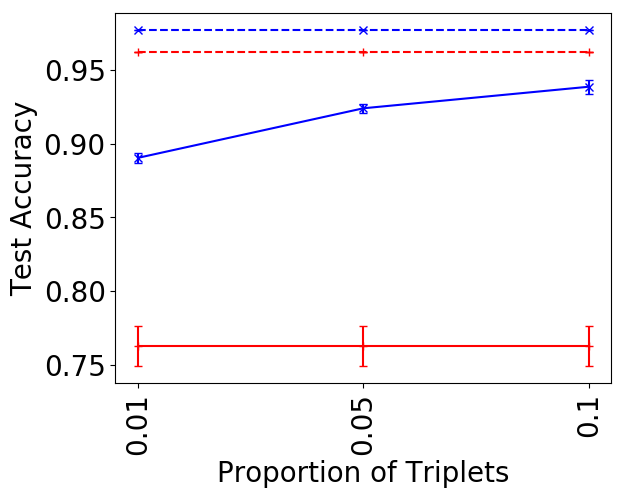}}
    \subfloat{\includegraphics[scale=0.32]{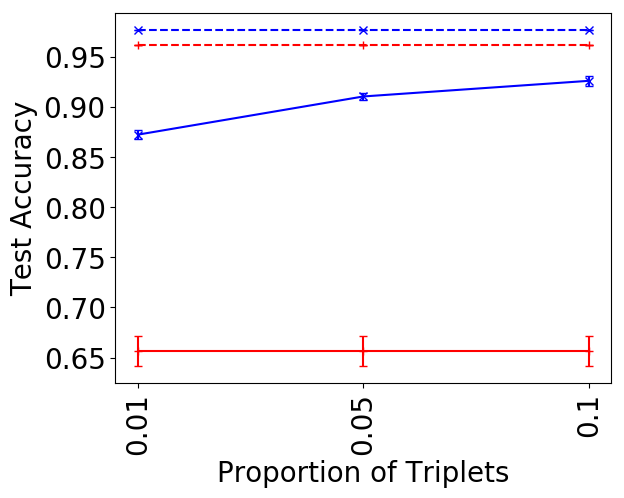}}
    
    \subfloat{\rotatebox{90}{\hspace{5em}Cosine}\hspace{2em}} 
    \subfloat{\includegraphics[scale=0.32]{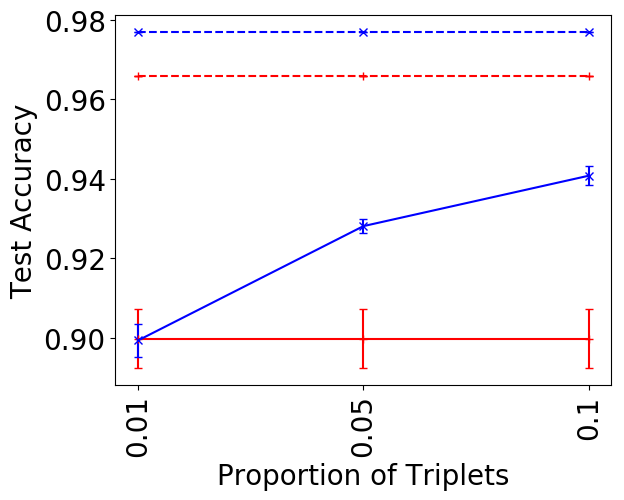}}
    \subfloat{\includegraphics[scale=0.32]{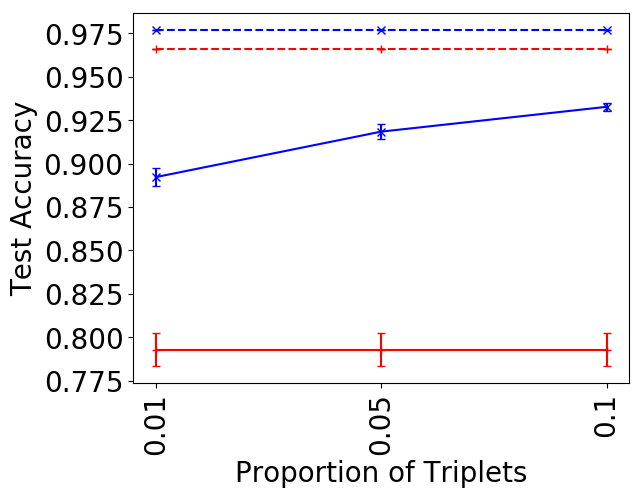}}
    \subfloat{\includegraphics[scale=0.32]{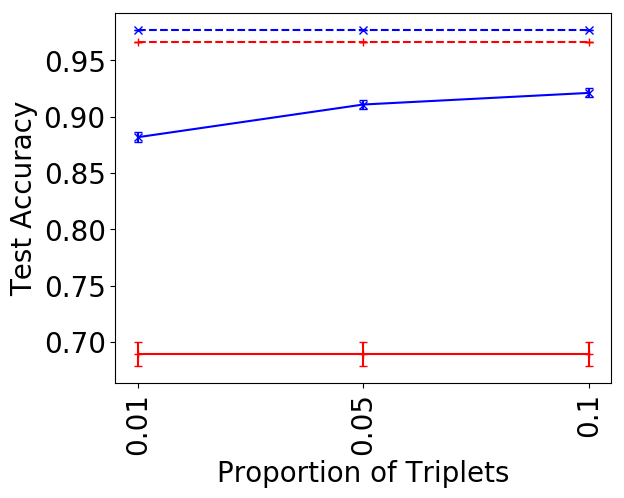}}
    
    \subfloat{\rotatebox{90}{\hspace{5em}Cityblock}\hspace{2em}} 
    \subfloat[Noise Level: $0\%$]{\includegraphics[scale=0.32]{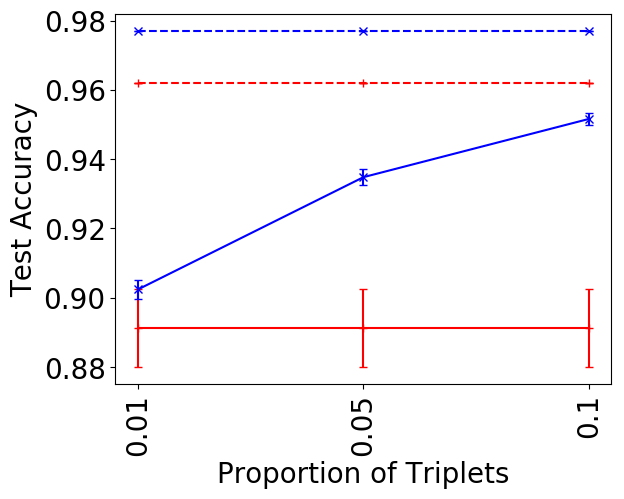}}
    \subfloat[Noise Level: $10\%$]{\includegraphics[scale=0.32]{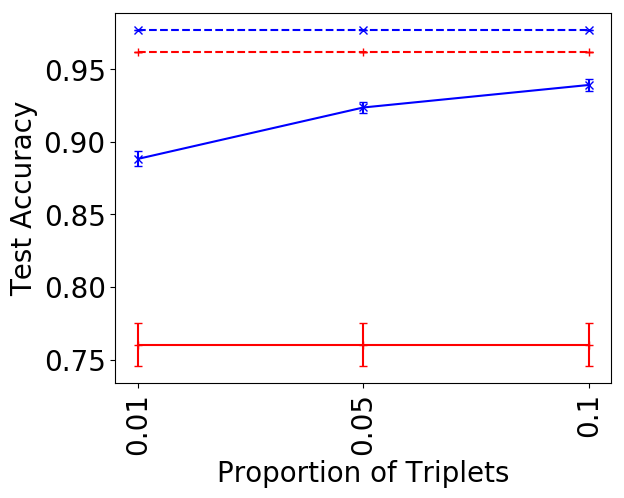}}
    \subfloat[Noise Level: $20\%$]{\includegraphics[scale=0.32]{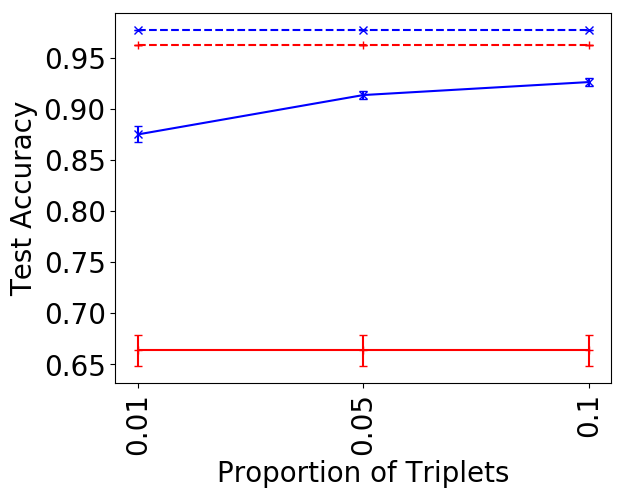}}
    
    \caption{Results on the Gisette dataset. In each row we consider a different metric to generate the triplets. In each column we consider a different noise level from $0$ to $20\%$. In each plot we vary the proportion of triplets available from $1$ to $10\%$.\label{app:fig:gisettenoise}}
\end{figure}

\begin{figure}[H]
    \centering
    \subfloat{\rotatebox{90}{\hspace{5em}Euclidean}\hspace{2em}}  \subfloat{\includegraphics[scale=0.32]{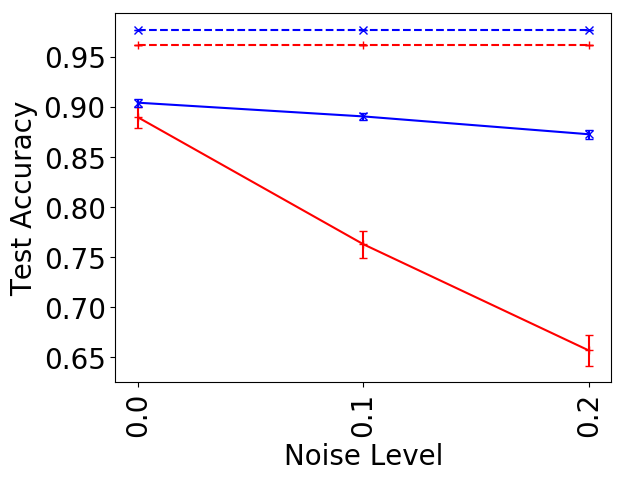}}
    \subfloat{\includegraphics[scale=0.32]{./images/gisette/{baseMetric_euclidean_propTriplets_0.05_propNoise_accuracyTest_1NN_CompTree_SAMME_TripletBoost}.png}}
    \subfloat{\includegraphics[scale=0.32]{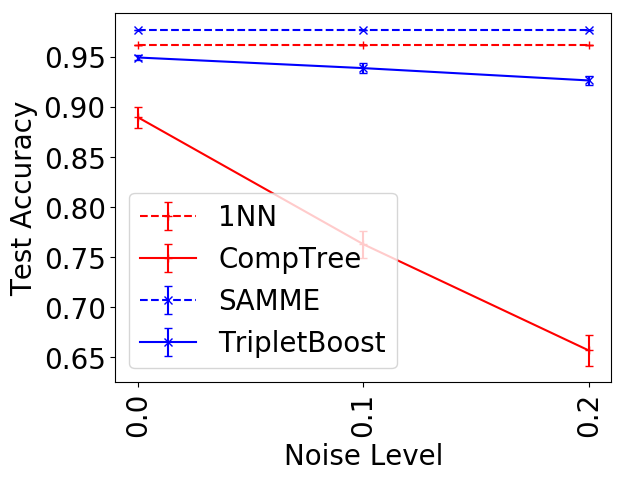}}
    
    \subfloat{\rotatebox{90}{\hspace{5em}Cosine}\hspace{2em}} 
    \subfloat{\includegraphics[scale=0.32]{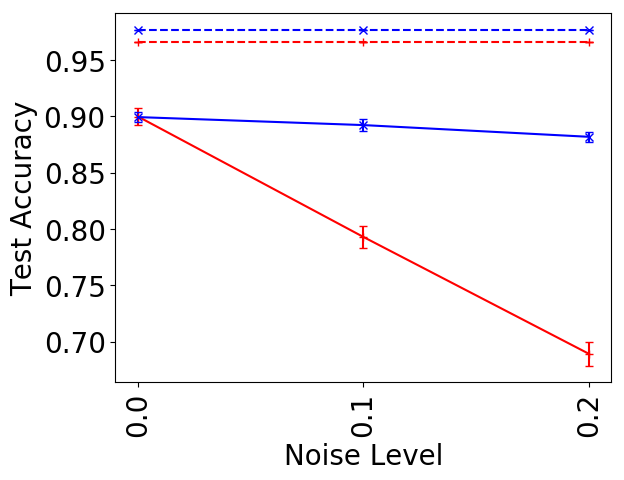}}
    \subfloat{\includegraphics[scale=0.32]{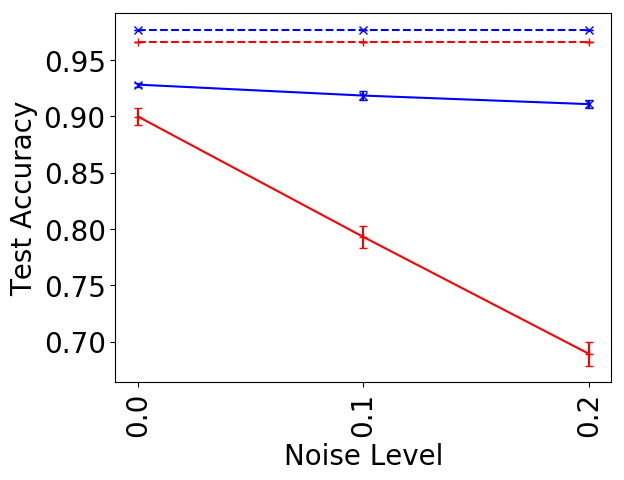}}
    \subfloat{\includegraphics[scale=0.32]{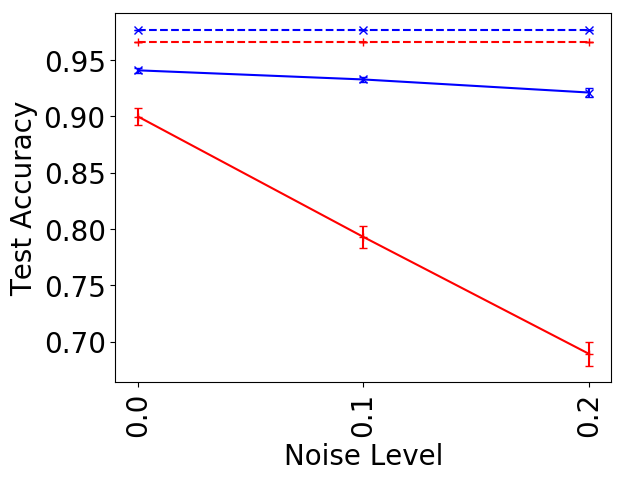}}
    
    \subfloat{\rotatebox{90}{\hspace{5em}Cityblock}\hspace{2em}} 
    \subfloat[Proportion of Triplets: $1\%$]{\includegraphics[scale=0.32]{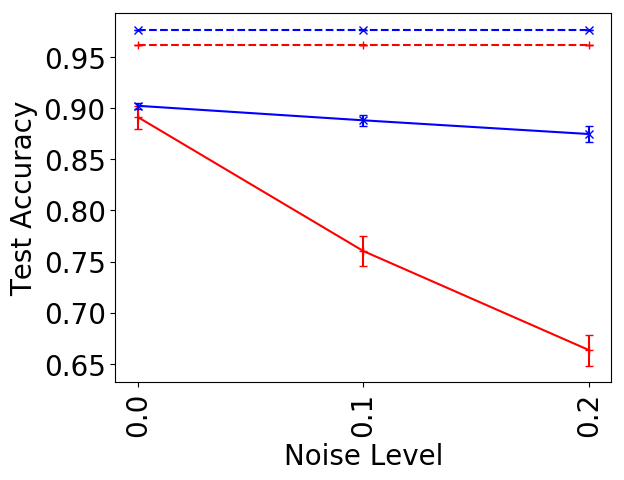}}
    \subfloat[Proportion of Triplets: $5\%$]{\includegraphics[scale=0.32]{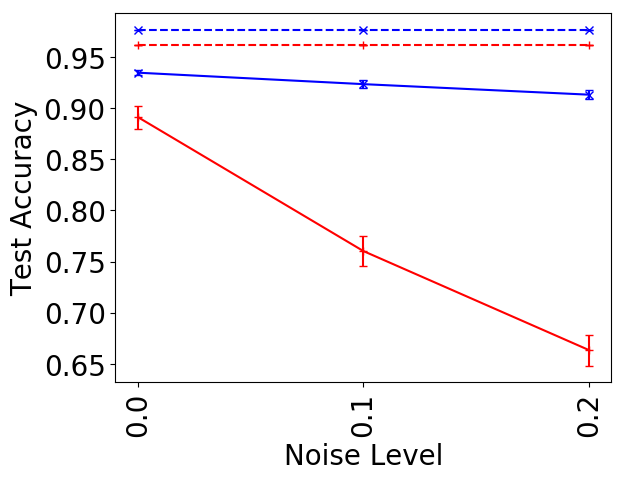}}
    \subfloat[Proportion of Triplets: $10\%$]{\includegraphics[scale=0.32]{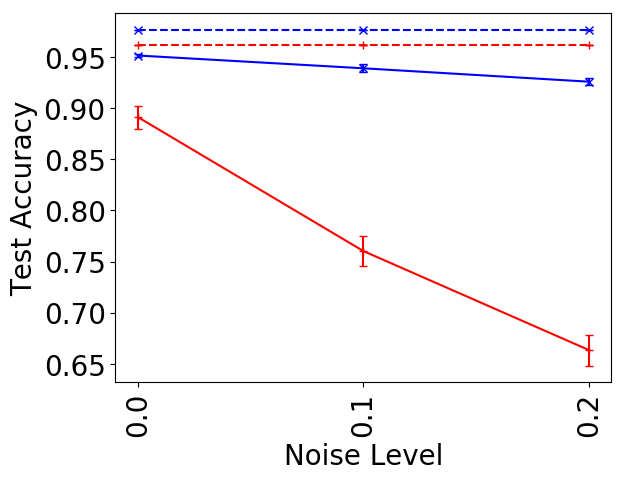}}
    
    \caption{Results on the Gisette dataset. In each row we consider a different metric to generate the triplets. In each column we consider a different proportion of triplets available from $1$ to $10\%$. In each plot we vary the noise level from $0$ to $20\%$.\label{app:fig:gisettetriplets}}
\end{figure}

\begin{figure}[H]
    \centering
    \subfloat{\rotatebox{90}{\hspace{5em}Euclidean}\hspace{2em}}  \subfloat{\includegraphics[scale=0.32]{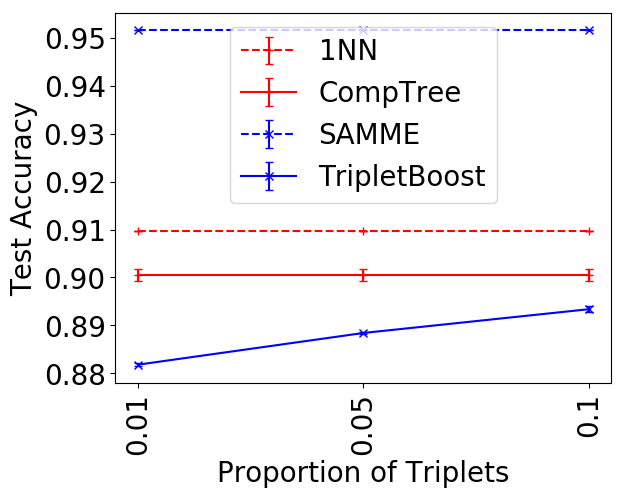}}
    \subfloat{\includegraphics[scale=0.32]{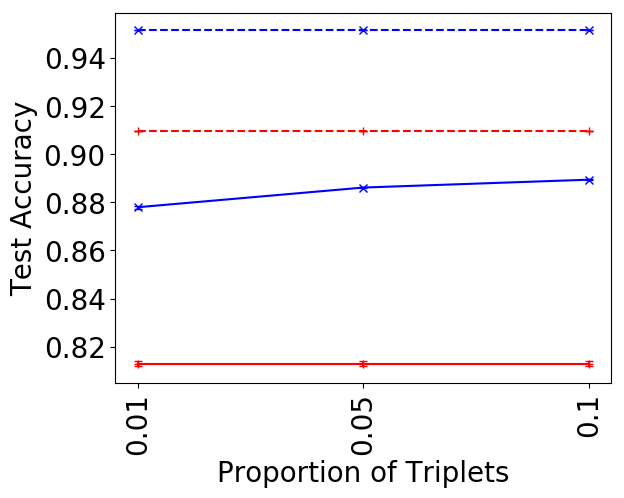}}
    \subfloat{\includegraphics[scale=0.32]{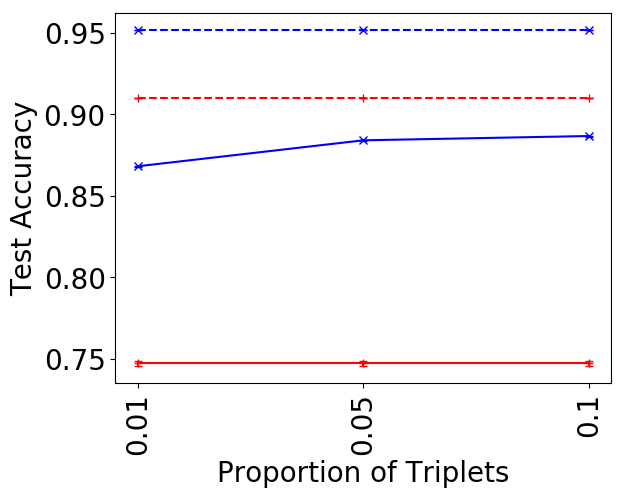}}
    
    \subfloat{\rotatebox{90}{\hspace{5em}Cosine}\hspace{2em}} 
    \subfloat{\includegraphics[scale=0.32]{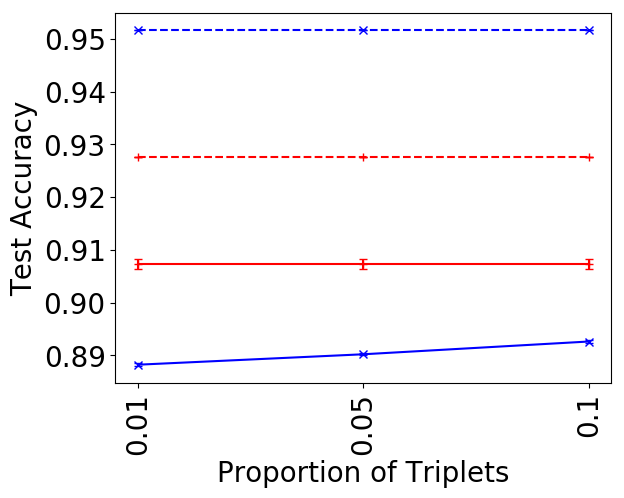}}
    \subfloat{\includegraphics[scale=0.32]{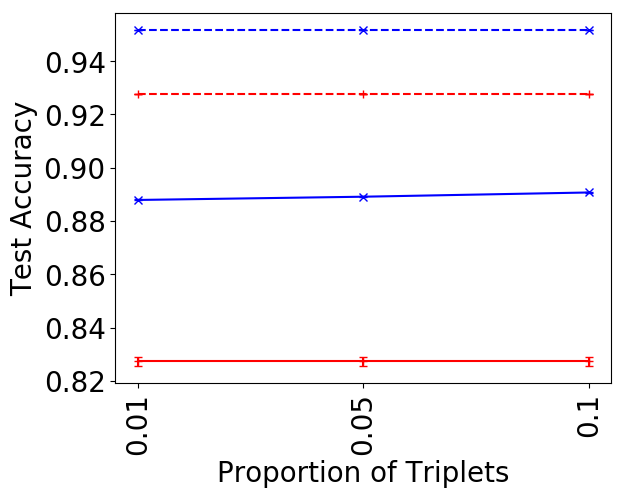}}
    \subfloat{\includegraphics[scale=0.32]{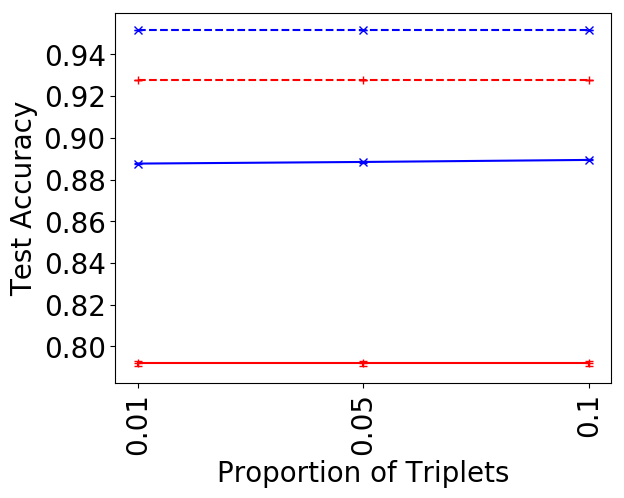}}
    
    \subfloat{\rotatebox{90}{\hspace{5em}Cityblock}\hspace{2em}} 
    \subfloat[Noise Level: $0\%$]{\includegraphics[scale=0.32]{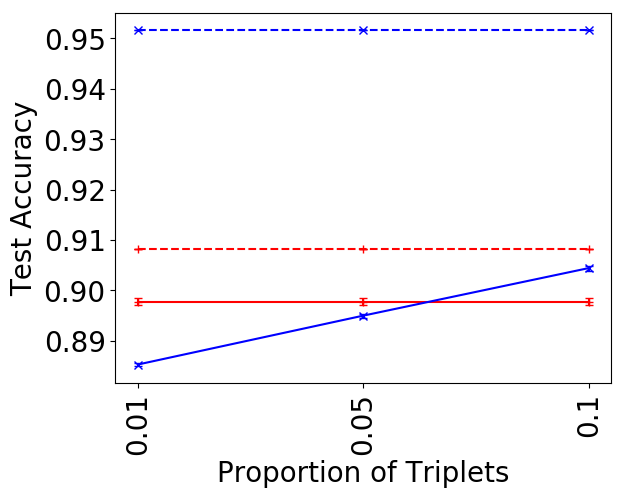}}
    \subfloat[Noise Level: $10\%$]{\includegraphics[scale=0.32]{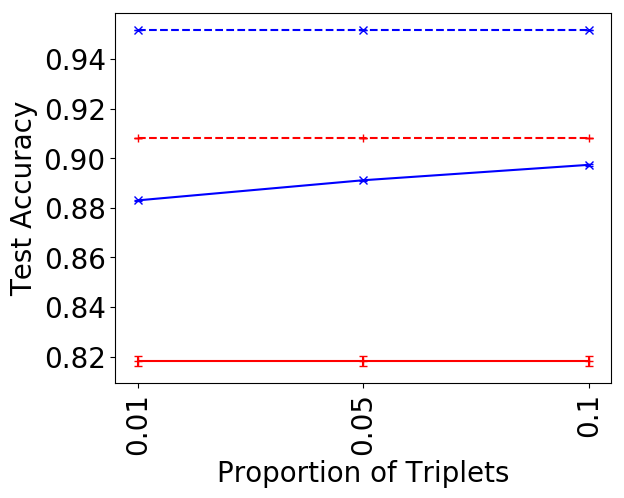}}
    \subfloat[Noise Level: $20\%$]{\includegraphics[scale=0.32]{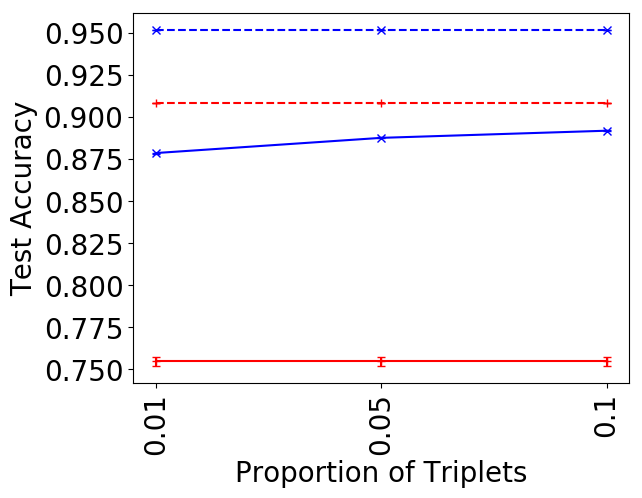}}
    
    \caption{Results on the Cod-rna dataset. In each row we consider a different metric to generate the triplets. In each column we consider a different noise level from $0$ to $20\%$. In each plot we vary the proportion of triplets available from $1$ to $10\%$.\label{app:fig:codrnanoise}}
\end{figure}

\begin{figure}[H]
    \centering
    \subfloat{\rotatebox{90}{\hspace{5em}Euclidean}\hspace{2em}}  \subfloat{\includegraphics[scale=0.32]{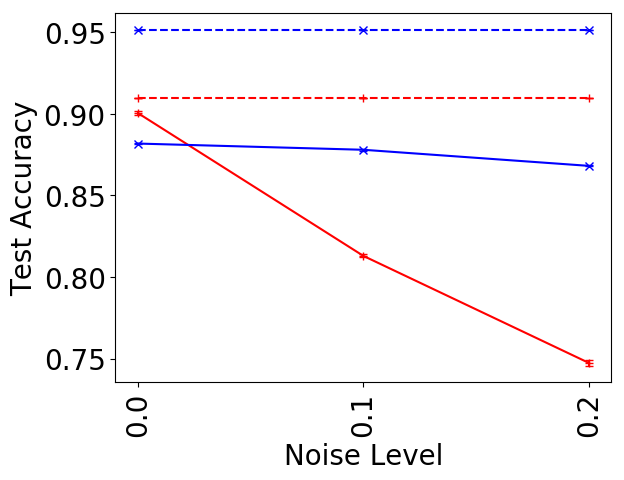}}
    \subfloat{\includegraphics[scale=0.32]{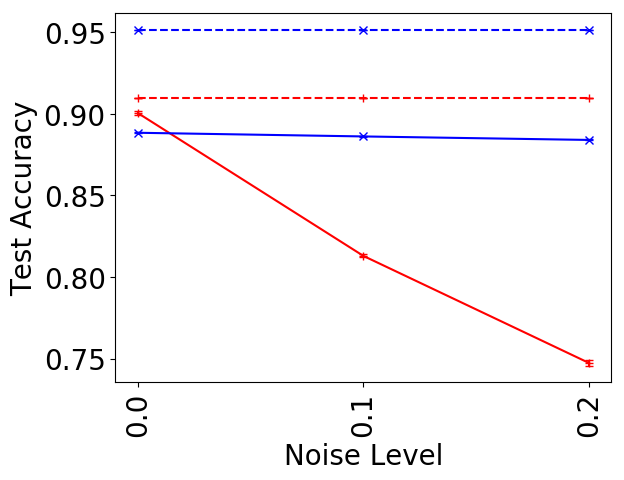}}
    \subfloat{\includegraphics[scale=0.32]{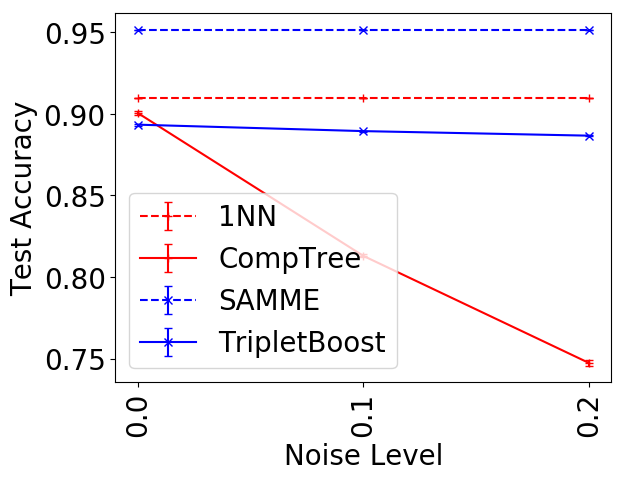}}
    
    \subfloat{\rotatebox{90}{\hspace{5em}Cosine}\hspace{2em}} 
    \subfloat{\includegraphics[scale=0.32]{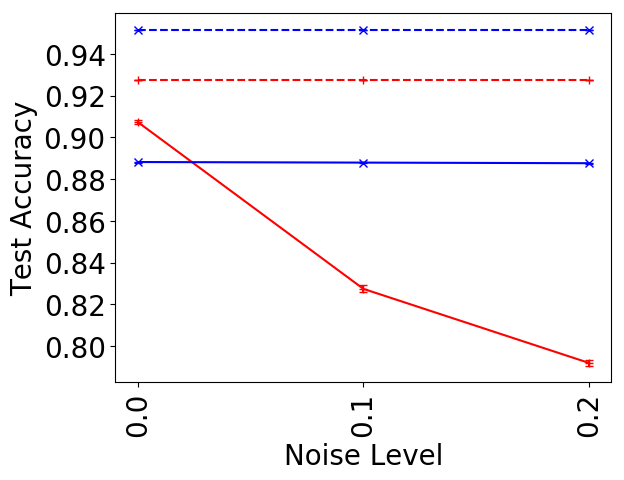}}
    \subfloat{\includegraphics[scale=0.32]{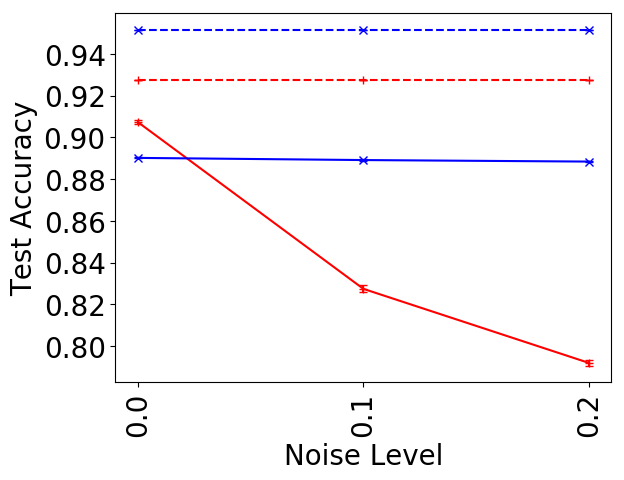}}
    \subfloat{\includegraphics[scale=0.32]{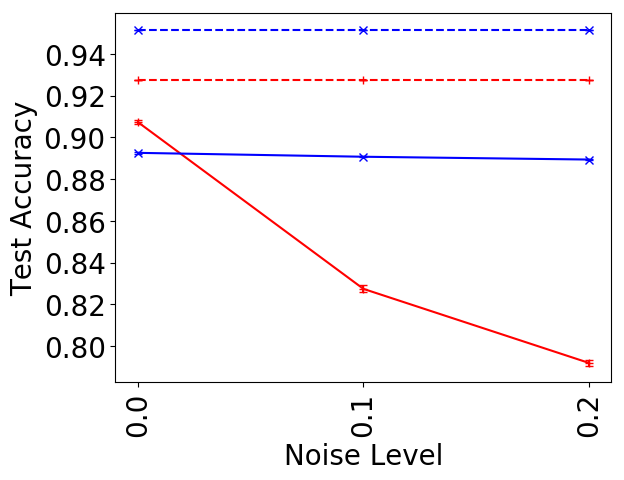}}
    
    \subfloat{\rotatebox{90}{\hspace{5em}Cityblock}\hspace{2em}} 
    \subfloat[Proportion of Triplets: $1\%$]{\includegraphics[scale=0.32]{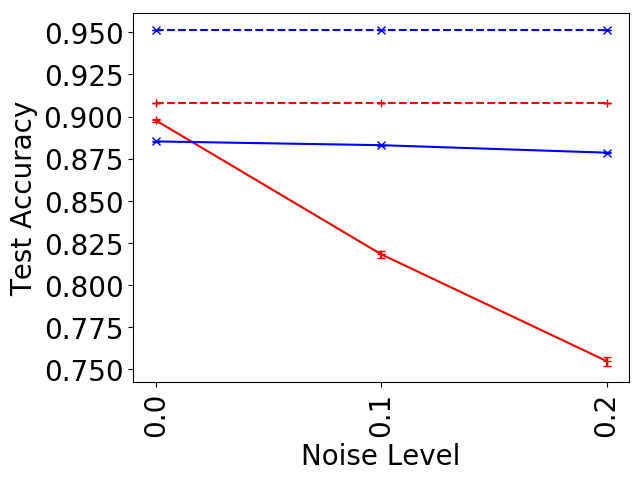}}
    \subfloat[Proportion of Triplets: $5\%$]{\includegraphics[scale=0.32]{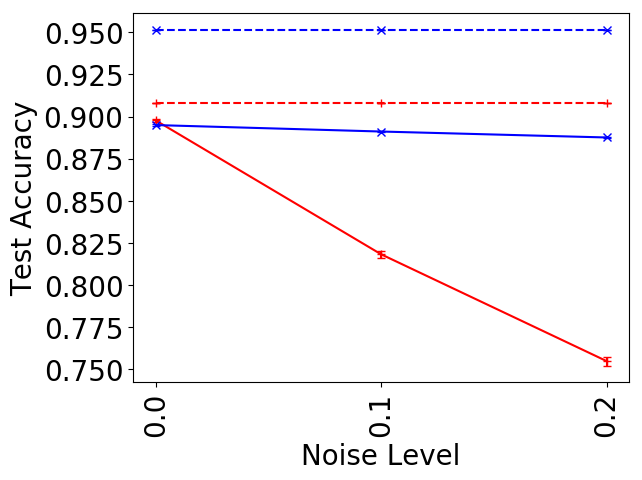}}
    \subfloat[Proportion of Triplets: $10\%$]{\includegraphics[scale=0.32]{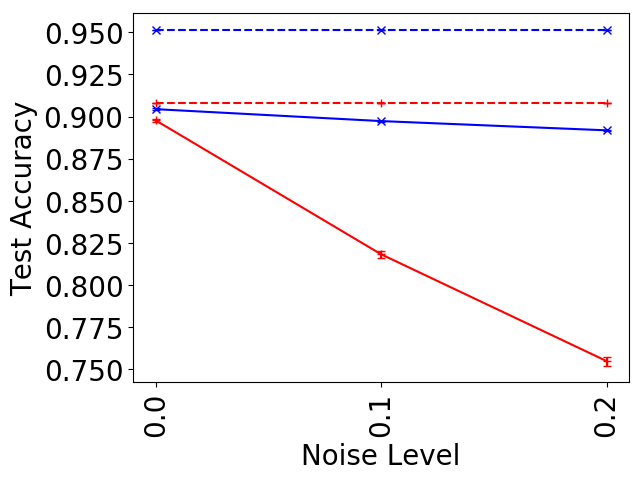}}
    
    \caption{Results on the Cod-rna dataset. In each row we consider a different metric to generate the triplets. In each column we consider a different proportion of triplets available from $1$ to $10\%$. In each plot we vary the noise level from $0$ to $20\%$.\label{app:fig:codrnatriplets}}
\end{figure}


\begin{figure}[H]
    \centering
    \subfloat{\rotatebox{90}{\hspace{5em}Euclidean}\hspace{2em}}  \subfloat{\includegraphics[scale=0.32]{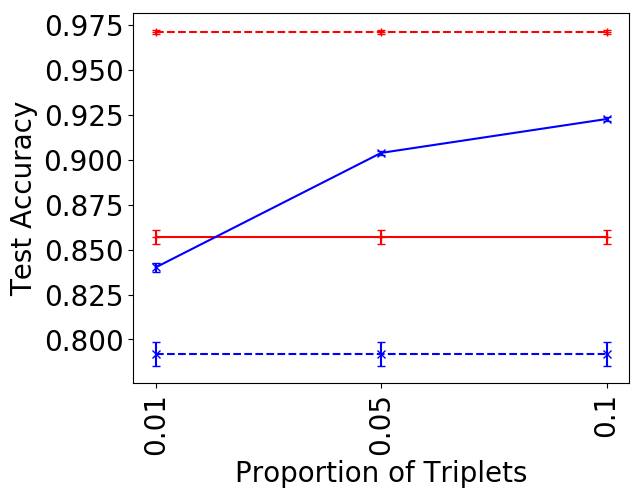}}
    \subfloat{\includegraphics[scale=0.32]{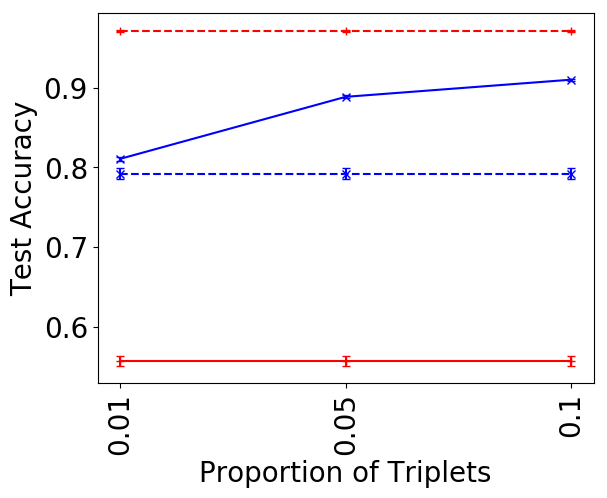}}
    \subfloat{\includegraphics[scale=0.32]{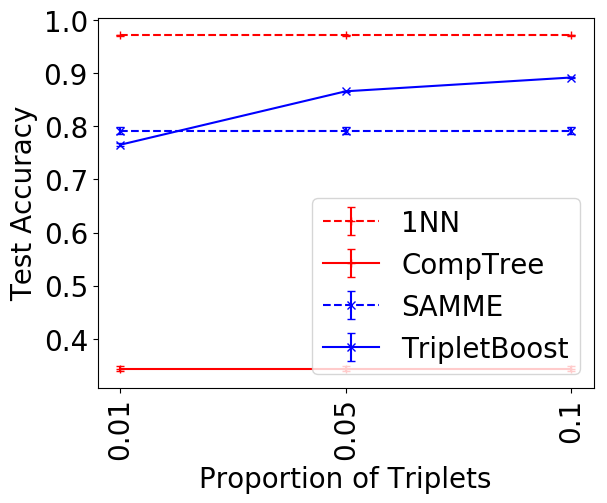}}
    
    \subfloat{\rotatebox{90}{\hspace{5em}Cosine}\hspace{2em}} 
    \subfloat{\includegraphics[scale=0.32]{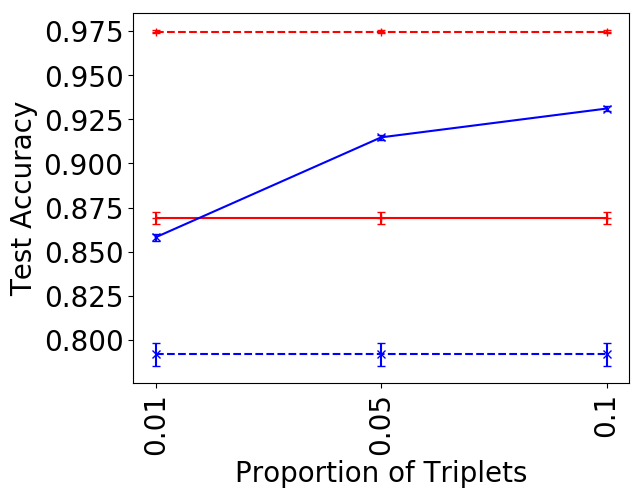}}
    \subfloat{\includegraphics[scale=0.32]{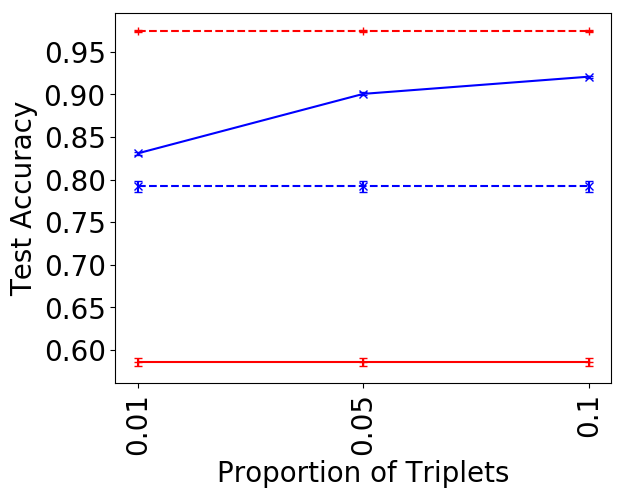}}
    \subfloat{\includegraphics[scale=0.32]{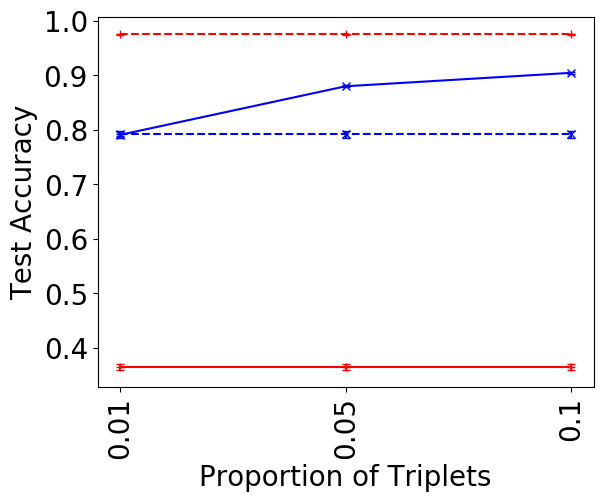}}
    
    \subfloat{\rotatebox{90}{\hspace{5em}Cityblock}\hspace{2em}} 
    \subfloat[Noise Level: $0\%$]{\includegraphics[scale=0.32]{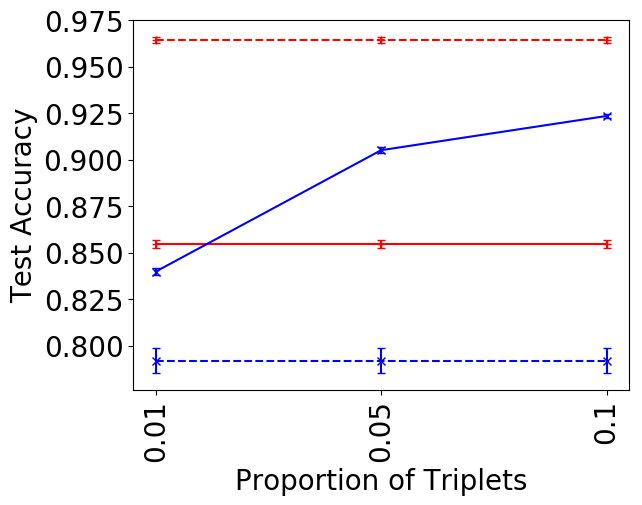}}
    \subfloat[Noise Level: $10\%$]{\includegraphics[scale=0.32]{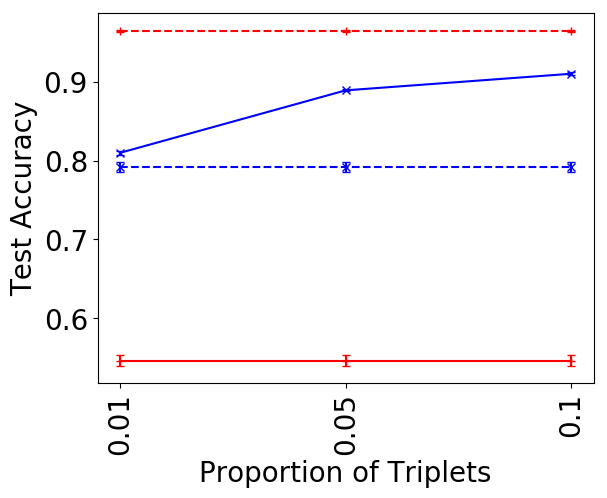}}
    \subfloat[Noise Level: $20\%$]{\includegraphics[scale=0.32]{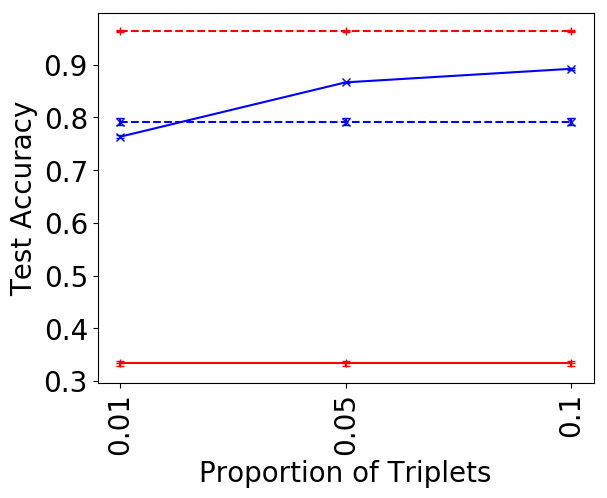}}
    
    \caption{Results on the MNIST dataset. In each row we consider a different metric to generate the triplets. In each column we consider a different noise level from $0$ to $20\%$. In each plot we vary the proportion of triplets available from $1$ to $10\%$.\label{app:fig:mnistnoise}}
\end{figure}

\begin{figure}[H]
    \centering
    \subfloat{\rotatebox{90}{\hspace{5em}Euclidean}\hspace{2em}}  \subfloat{\includegraphics[scale=0.32]{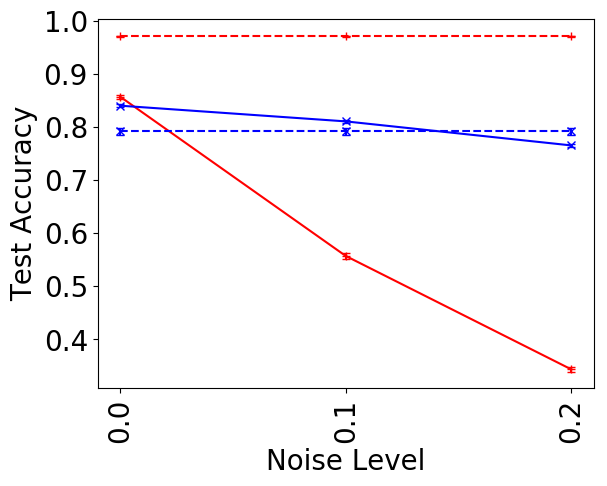}}
    \subfloat{\includegraphics[scale=0.32]{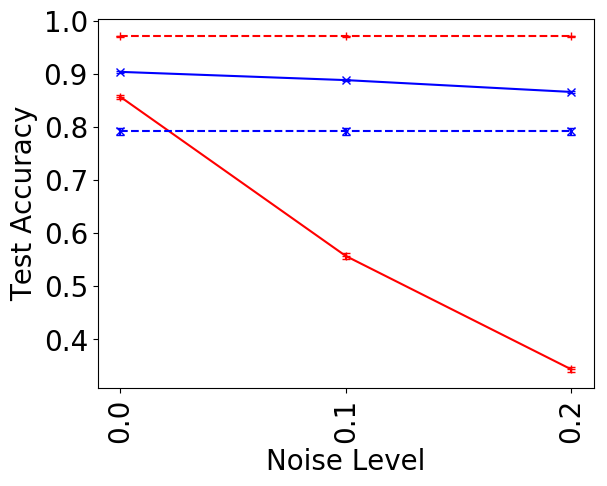}}
    \subfloat{\includegraphics[scale=0.32]{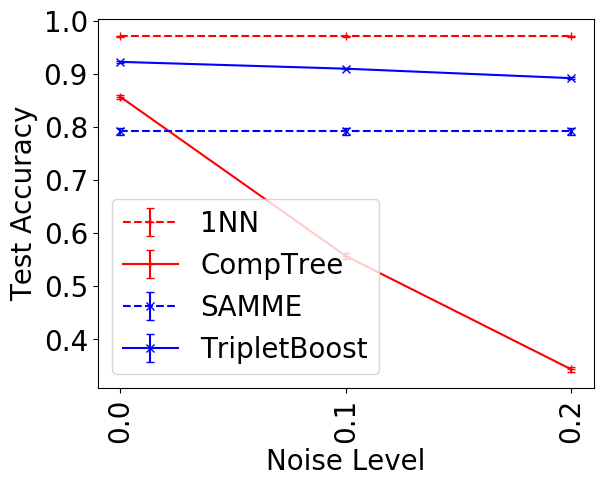}}
    
    \subfloat{\rotatebox{90}{\hspace{5em}Cosine}\hspace{2em}} 
    \subfloat{\includegraphics[scale=0.32]{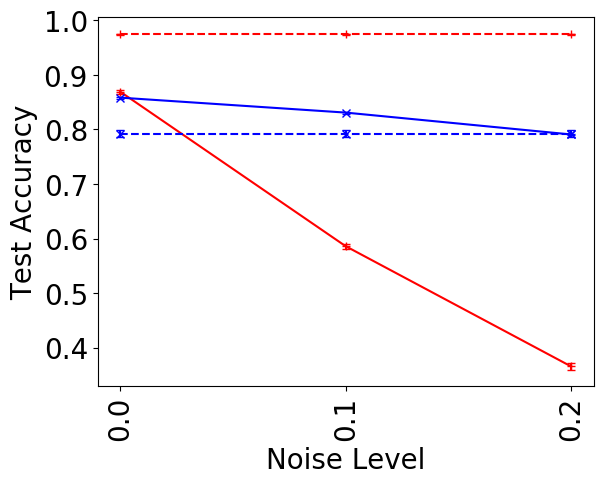}}
    \subfloat{\includegraphics[scale=0.32]{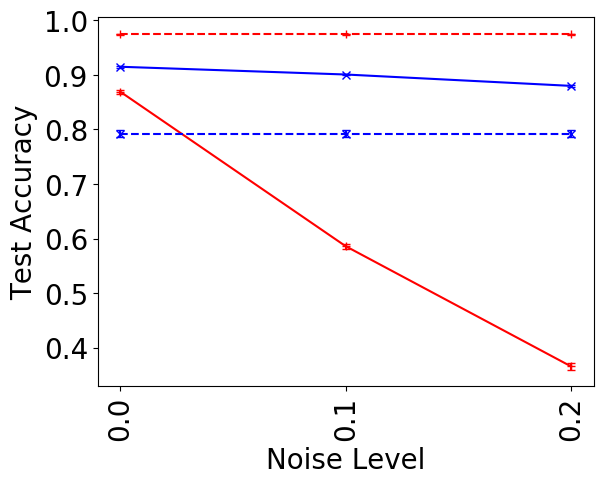}}
    \subfloat{\includegraphics[scale=0.32]{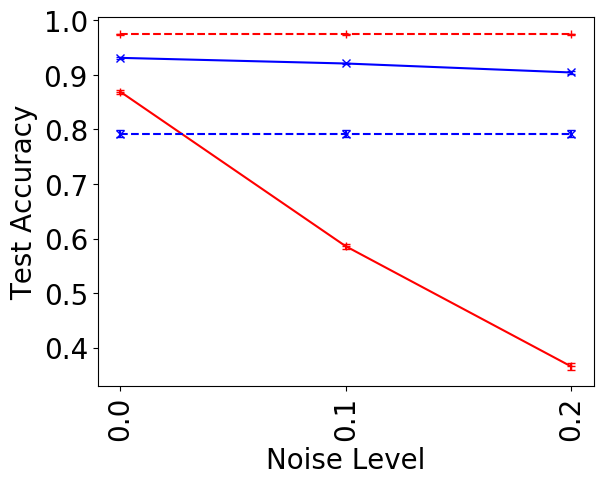}}
    
    \subfloat{\rotatebox{90}{\hspace{5em}Cityblock}\hspace{2em}} 
    \subfloat[Proportion of Triplets: $1\%$]{\includegraphics[scale=0.32]{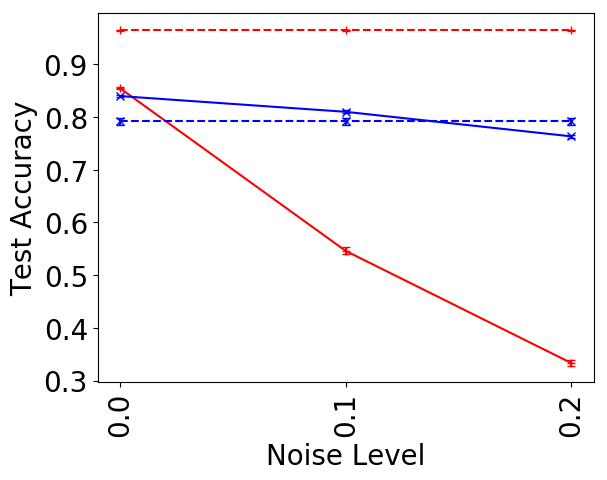}}
    \subfloat[Proportion of Triplets: $5\%$]{\includegraphics[scale=0.32]{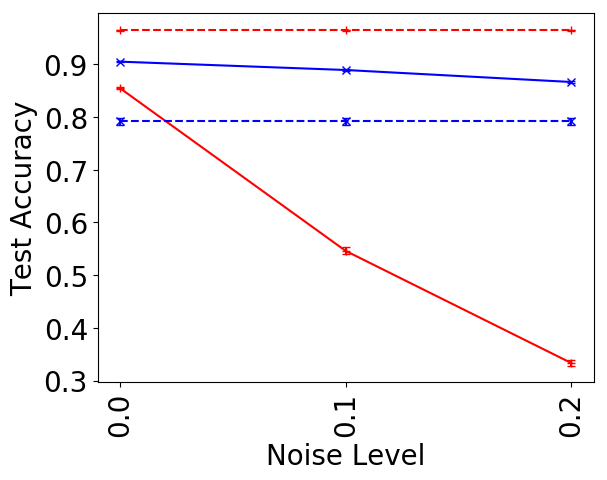}}
    \subfloat[Proportion of Triplets: $10\%$]{\includegraphics[scale=0.32]{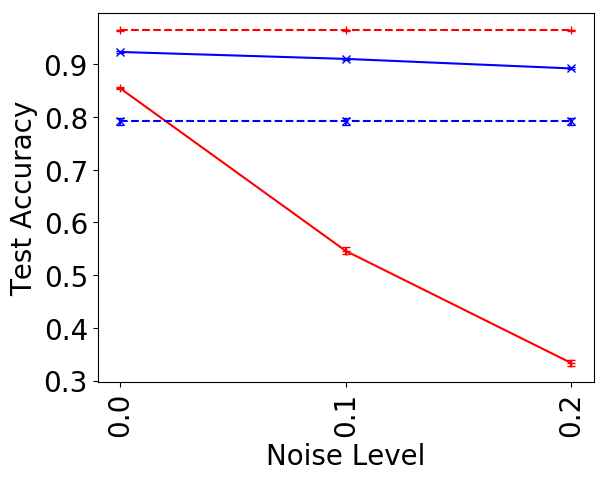}}
    
    \caption{Results on the MNIST dataset. In each row we consider a different metric to generate the triplets. In each column we consider a different proportion of triplets available from $1$ to $10\%$. In each plot we vary the noise level from $0$ to $20\%$.\label{app:fig:mnisttriplets}}
\end{figure}


\begin{figure}[H]
    \centering
    \subfloat{\rotatebox{90}{\hspace{5em}Euclidean}\hspace{2em}}  \subfloat{\includegraphics[scale=0.32]{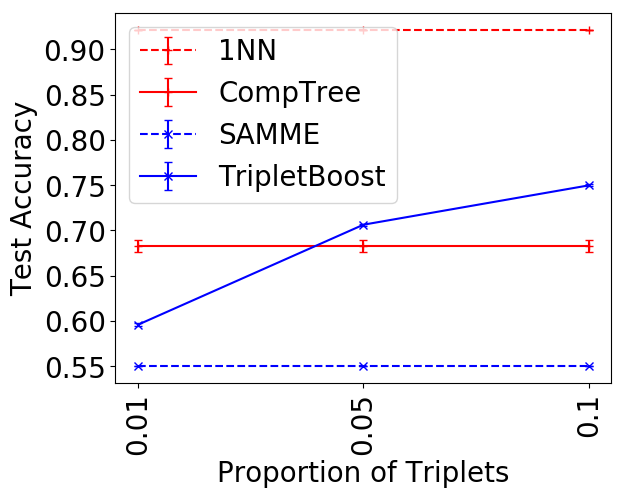}}
    \subfloat{\includegraphics[scale=0.32]{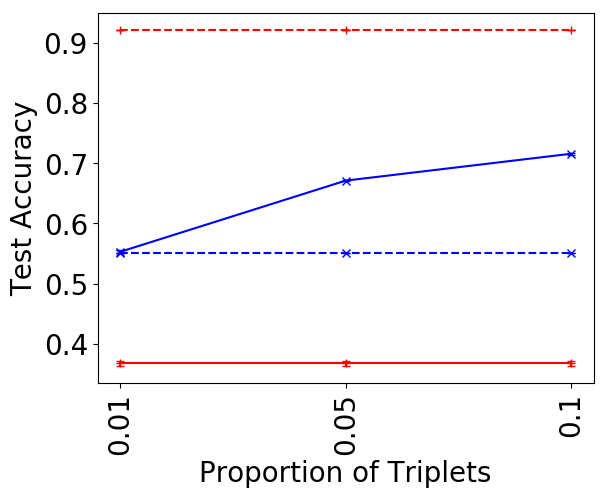}}
    \subfloat{\includegraphics[scale=0.32]{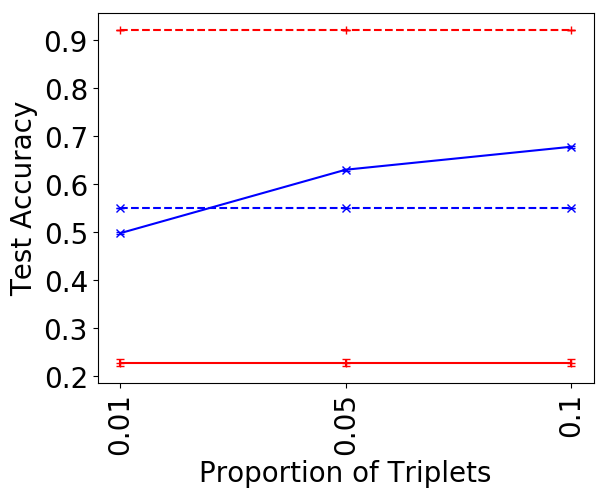}}
    
    \subfloat{\rotatebox{90}{\hspace{5em}Cosine}\hspace{2em}} 
    \subfloat{\includegraphics[scale=0.32]{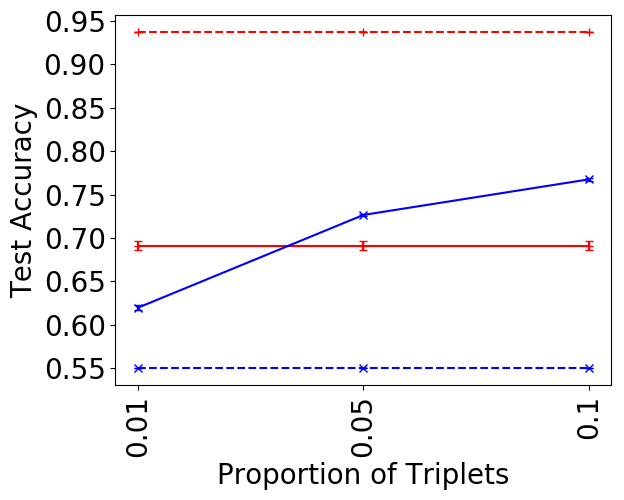}}
    \subfloat{\includegraphics[scale=0.32]{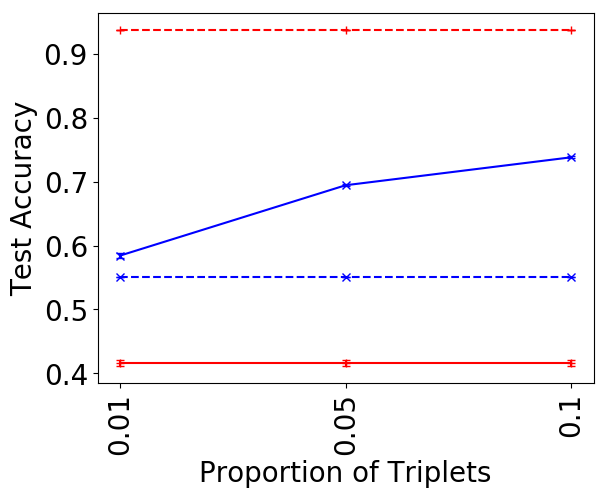}}
    \subfloat{\includegraphics[scale=0.32]{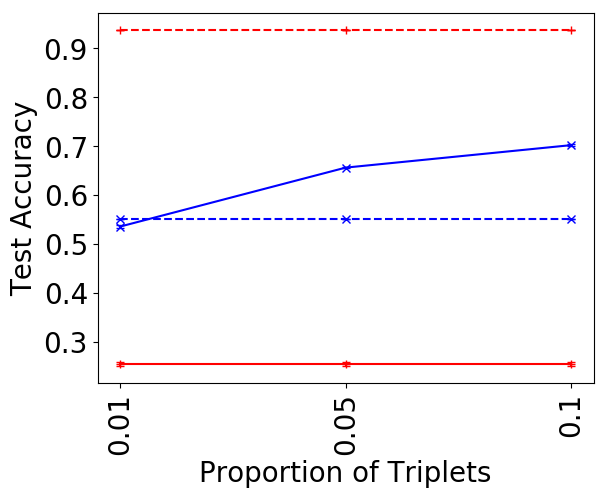}}
    
    \subfloat{\rotatebox{90}{\hspace{5em}Cityblock}\hspace{2em}} 
    \subfloat[Noise Level: $0\%$]{\includegraphics[scale=0.32]{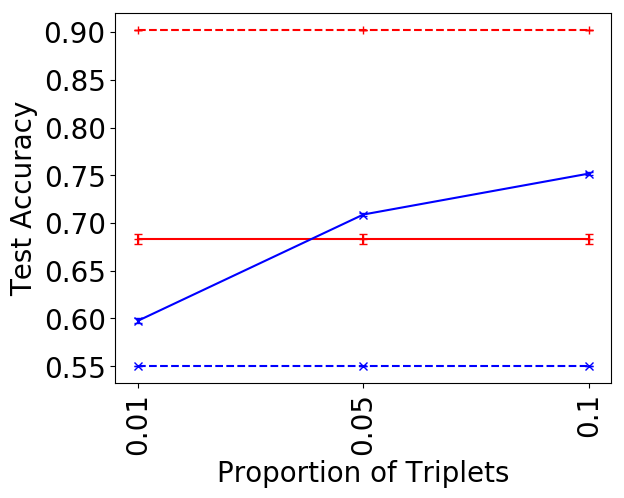}}
    \subfloat[Noise Level: $10\%$]{\includegraphics[scale=0.32]{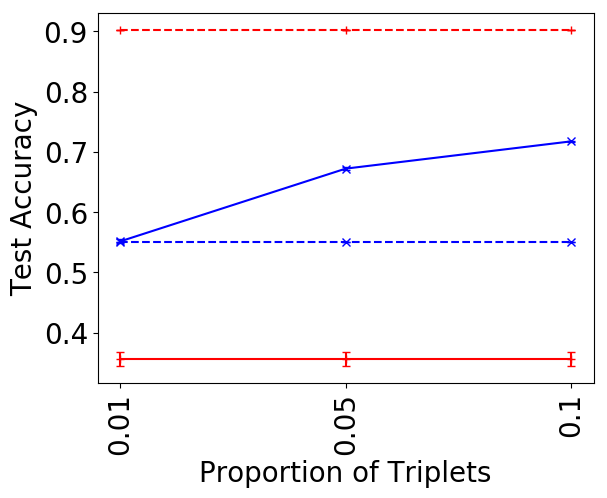}}
    \subfloat[Noise Level: $20\%$]{\includegraphics[scale=0.32]{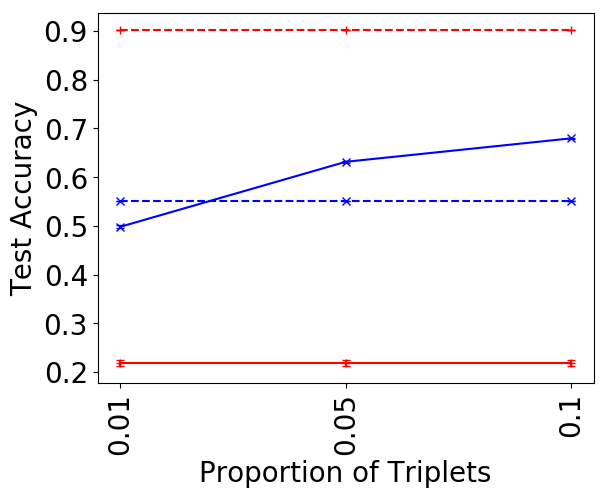}}
    
    \caption{Results on the kMNIST dataset. In each row we consider a different metric to generate the triplets. In each column we consider a different noise level from $0$ to $20\%$. In each plot we vary the proportion of triplets available from $1$ to $10\%$.\label{app:fig:kmnistnoise}}
\end{figure}

\begin{figure}[H]
    \centering
    \subfloat{\rotatebox{90}{\hspace{5em}Euclidean}\hspace{2em}}  \subfloat{\includegraphics[scale=0.32]{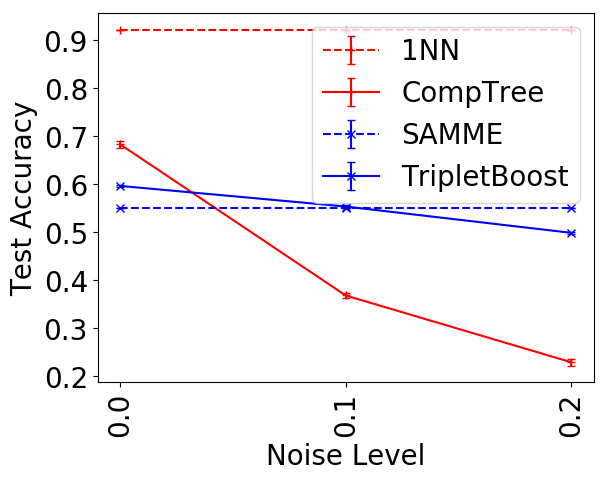}}
    \subfloat{\includegraphics[scale=0.32]{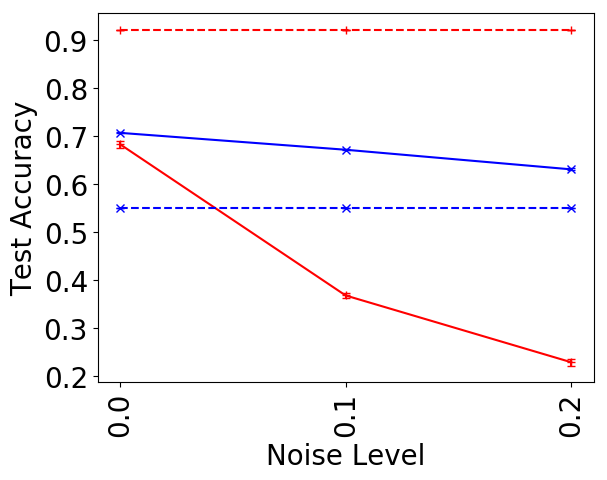}}
    \subfloat{\includegraphics[scale=0.32]{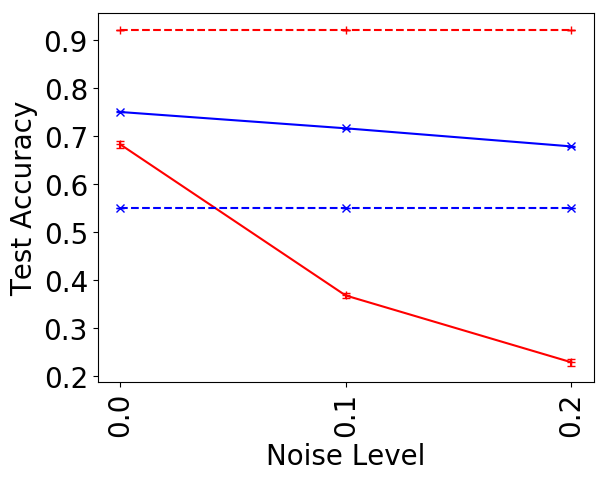}}
    
    \subfloat{\rotatebox{90}{\hspace{5em}Cosine}\hspace{2em}} 
    \subfloat{\includegraphics[scale=0.32]{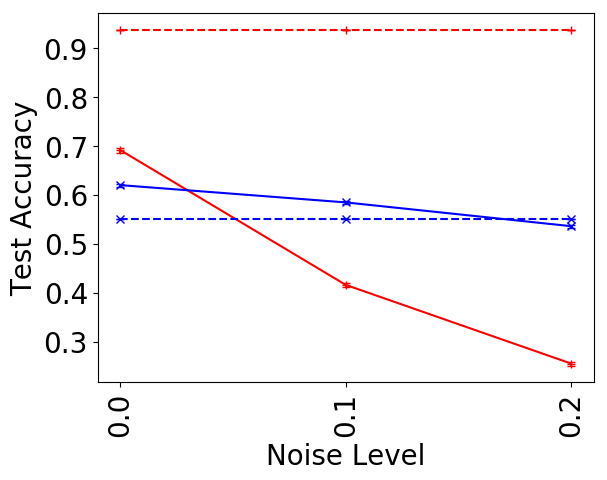}}
    \subfloat{\includegraphics[scale=0.32]{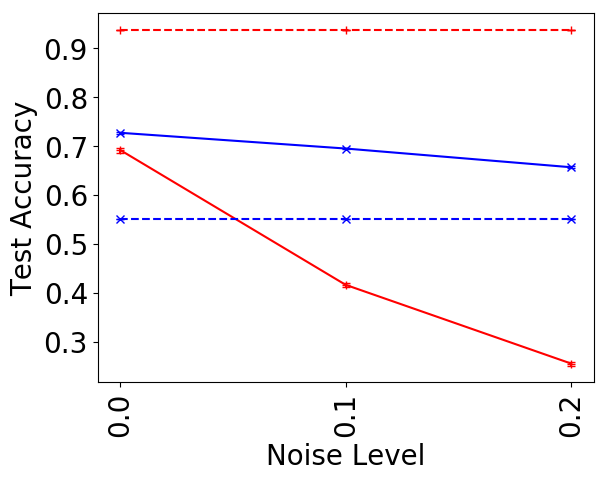}}
    \subfloat{\includegraphics[scale=0.32]{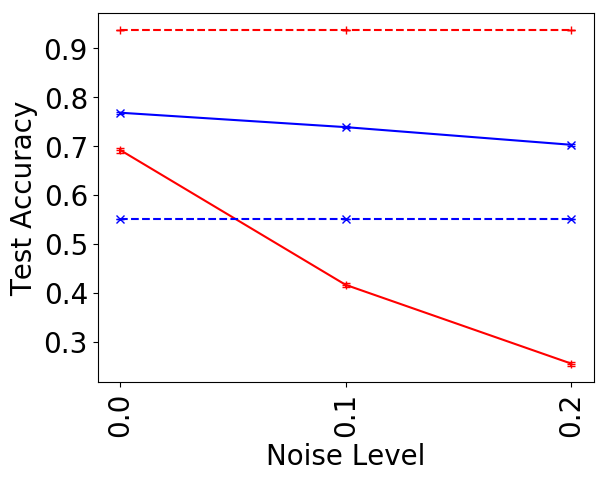}}
    
    \subfloat{\rotatebox{90}{\hspace{5em}Cityblock}\hspace{2em}} 
    \subfloat[Proportion of Triplets: $1\%$]{\includegraphics[scale=0.32]{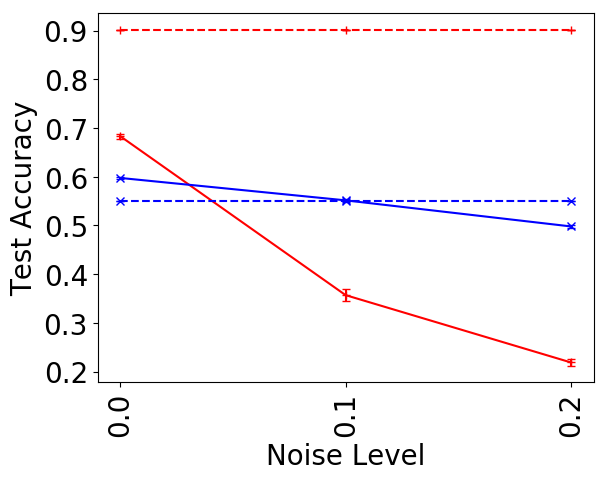}}
    \subfloat[Proportion of Triplets: $5\%$]{\includegraphics[scale=0.32]{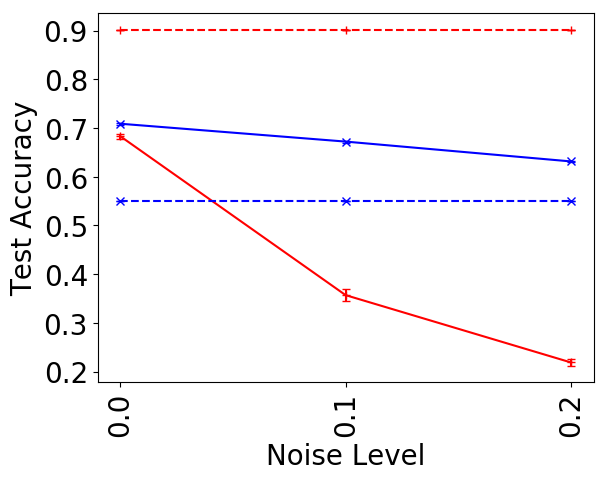}}
    \subfloat[Proportion of Triplets: $10\%$]{\includegraphics[scale=0.32]{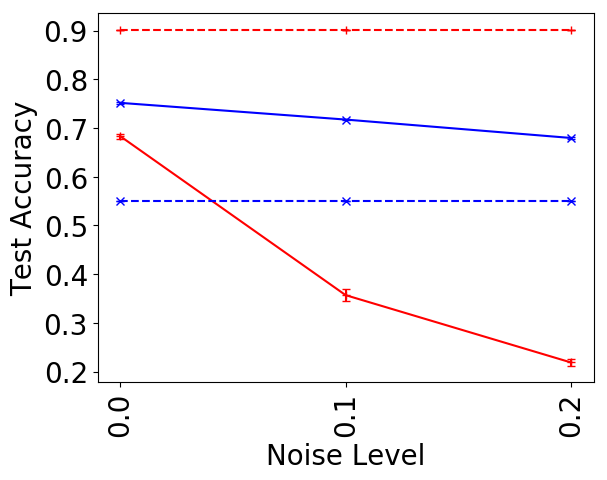}}
    
    \caption{Results on the kMNIST dataset. In each row we consider a different metric to generate the triplets. In each column we consider a different proportion of triplets available from $1$ to $10\%$. In each plot we vary the noise level from $0$ to $20\%$.\label{app:fig:kmnisttriplets}}
\end{figure}

\section{Proof of Theorem~\ref{th:tripletisweak}}
\label{app:sec:weak}
  
\begin{reth}{\ref{th:tripletisweak}}[Triplet classifiers and weak learners]
Let $\edistrW_c$ be an empirical distribution over $\setS \times \spaceY$ and $\fh[_c]{}$ be the corresponding triplet classifier chosen as described in Section~\ref{sec:tripletboost}.
It holds that:
\begin{enumerate}
    \item the error of $\fh[_c]{}$ on $\edistrW_c$ is at most the error of a random predictor and is minimal compared to other labelling strategies,
    \item the weight $\alpha_c$ of the classifier is non-zero if and only if, $\fh[_c]{}$ is a weak classifier, that is is strictly better than a random predictor.
\end{enumerate}
\end{reth}

\begin{proof}
To prove the first claim of the theorem we first study the error of $h_c$ on $\edistrW_c$ and then we show that any labelling strategy different from our would increase the proportion of incorrectly classified examples. To prove the second claim we simply use the definition of the weights $\alpha_c$.

\paragraph{First claim}
First of all notice that the probability that a triplet classifier $h_c$ makes an error on $\edistrW_c$ is
\begin{align*}
\prob_{\substack{(x_i,y) \sim \edistrW_c}}\left[ \left(\indic{y=y_i} - \indic{y \neq y_i}\right) \neq \left(\indic{y \in \fh[_c]{x_i}} - \indic{y \notin \fh[_c]{x_i}}\right) \right]
\end{align*}
that is, by the law of total probabilities:
\begin{align*}
\prob_{\substack{(x_i,y) \sim \edistrW_c \\ \fh[_c]{x_i} \neq \abstain}}\left[ \left(\indic{y=y_i} - \indic{y \neq y_i}\right) \neq \left(\indic{y \in \fh[_c]{x_i}} - \indic{y \notin \fh[_c]{x_i}}\right) \right]& \prob_{\substack{(x_i,y) \sim \edistrW_c}}\left[ \fh[_c]{x_i} \neq \abstain \right] \\
&\negspace{8em}+ \prob_{\substack{(x_i,y) \sim \edistrW_c \\ \fh[_c]{x_i} = \abstain}}\left[ \left(\indic{y=y_i} - \indic{y \neq y_i}\right) \neq \left(\indic{y \in \fh[_c]{x_i}} - \indic{y \notin \fh[_c]{x_i}}\right) \right] \prob_{\substack{(x_i,y) \sim \edistrW_c}}\left[ \fh[_c]{x_i} = \abstain \right]\text{.}
\end{align*}
Notice that when $\fh[_c]{}$ abstains the best possible behaviour is to randomly predict the labels and thus 
\begin{align*}
\prob_{\substack{(x_i,y) \sim \edistrW_c \\ \fh[_c]{x_i} = \abstain}}\left[ \left(\indic{y=y_i} - \indic{y \neq y_i}\right) \neq \left(\indic{y \in \fh[_c]{x_i}} - \indic{y \notin \fh[_c]{x_i}}\right) \right] = \frac{1}{2}\text{.}
\end{align*}
Hence to show that $\fh[_c]{}$ has at most the error of a random predictor it is sufficient to show that
\begin{align*}
\prob_{\substack{(x_i,y) \sim \edistrW_c \\ \fh[_c]{x_i} \neq \abstain}}\left[ \left(\indic{y=y_i} - \indic{y \neq y_i}\right) \neq \left(\indic{y \in \fh[_c]{x_i}} - \indic{y \notin \fh[_c]{x_i}}\right) \right] = \frac{W_{c,-}}{W_{c,+}+W_{c,-}} \leq \frac{1}{2}
\end{align*}
where $W_{c,+}$ and $W_{c,-}$ are respectively the proportions of correctly and incorrectly classified examples.

On the one hand assume that $\fh[_c]{}$ uses $x_j$ and $x_k$ as reference points. Let $\setS_{c,j}$ be the set of all the training examples for which $\fh[_c]{}$ does not abstain and which are closer to $x_j$ than to $x_k$.
If $y \in o_j$ then we have that:
\begin{align}
\sum_{z_i \in \setS_{c,j}} \indic{y=y_i} w_{c,x_i,y} > \sum_{z_i \in \setS_{c,j}} \indic{y \neq y_i} w_{c,x_i,y} \text{.} \label{app:eq:ineqojpos}
\end{align}
Similarly, if $y \notin o_j$ then it implies that:
\begin{align}
\sum_{z_i \in \setS_{c,j}} \indic{y \neq y_i} w_{c,x_i,y}  \geq \sum_{z_i \in \setS_{c,j}} \indic{y=y_i} w_{c,x_i,y} \text{.} \label{app:eq:ineqojneg}
\end{align}
The same result holds for $o_k$.

On the other hand recall that
\begin{align}
W_{c,+} ={}& \sum_{\substack{(x_i,y_i) \in \setS \\ \fh[_c]{x_i} \neq \abstain}} \left( \indic{y_i \in \fh[_c]{x_i}} w_{c,x_i,y_i} + \sum_{y \neq y_i} \indic{y \notin \fh[_c]{x_i}} w_{c,x_i,y} \right)  \text{,} \nonumber\\
W_{c,-} ={}& \sum_{\substack{(x_i,y_i) \in \setS \\ \fh[_c]{x_i} \neq \abstain}} \left( \indic{y_i \notin \fh[_c]{x_i}} w_{c,x_i,y_i} + \sum_{y \neq y_i} \indic{y \in \fh[_c]{x_i}} w_{c,x_i,y} \right)  \text{.} \label{app:eq:classifweights}
\end{align}

Combining Equations~\eqref{app:eq:ineqojpos},~\eqref{app:eq:ineqojneg},~and~\eqref{app:eq:classifweights} we obtain
\begin{align*}
&& W_{c,-} \leq{}& W_{c,+} \\
&\Leftrightarrow& 2W_{c,-} \leq{}& W_{c,+} + W_{c,-} \\
&\Leftrightarrow& \frac{W_{c,-}}{W_{c,+} + W_{c,-}} \leq{}& \frac{1}{2}
\end{align*}
which proves that $\fh[_c]{}$ has at most the error of a random predictor.

To see that our labelling strategy is optimal just notice that any other labelling strategy would either predict some labels in $o_j$ or $o_k$ that do not conform with Equation~\eqref{app:eq:ineqojpos} or not predict some labels from $o_j$ or $o_k$ that do not conform with Equation~\eqref{app:eq:ineqojneg}. As a consequence $W_{c,-}$ would be increased and thus the error of $\fh[_c]{}$ would also be increased. This concludes the proof of the first claim.

\paragraph{Second claim}
Recall that $\alpha_c$ is computed as
\begin{align*}
\alpha_c = \frac{1}{2}\log{\left(\frac{W_{c,+} + \frac{1}{n}}{W_{c,-} + \frac{1}{n}}\right)}\text{.}
\end{align*}
Also notice that $\fh[_c]{}$ is a weak classifier, that is better than a random predictor, when $W_{c,-} < W_{c,+}$. We have that:
\begin{align*}
&& W_{c,-} &< W_{c,+} \\
&\Leftrightarrow& W_{c,-}+\frac{1}{n} &< W_{c,+}+\frac{1}{n} \\
&\Leftrightarrow& 1 &< \frac{W_{c,+}+\frac{1}{n}}{W_{c,-}+\frac{1}{n}} \\
&\Leftrightarrow& 0 &< \frac{1}{2}\log\left(\frac{W_{c,+}+\frac{1}{n}}{W_{c,-}+\frac{1}{n}}\right)
\end{align*}
and it proves the second claim.
\end{proof}

\section{Proof of Theorem~\ref{th:trainerror}}
\label{app:sec:train}

\begin{reth}{\ref{th:trainerror}}[Reduction of the Training Error]
Let $\setS$ be a set of $n$ examples and $\setT$ be a set of $m$ triplets (obtained as described in Section~\ref{sec:contrib}).
Let $\fH{\cdot}$ be the classifier obtained after $C$ iterations of TripletBoost (Algorithm~1) using $\setS$ and $\setT$ as input. It holds that:
\begin{align*}
\prob_{(x,y) \in \setS}\left[ \fH{x} \neq y \right] \leq{}& \frac{\abs{\spaceY}}{2}\prod_{c=1}^C Z_c
\end{align*}
with $Z_c = (1-W_{c,+}-W_{c,-}) + (W_{c,+})\cdot\sqrt{\frac{W_{c,-} + \frac{1}{n}}{W_{c,+} + \frac{1}{n}}} + (W_{c,-})\cdot\sqrt{\frac{W_{c,+} + \frac{1}{n}}{W_{c,-} + \frac{1}{n}}} \leq 1$.
\end{reth}

\begin{proof}
This result is inherited from AdaBoost.MO and can be obtained directly by applying Theorem~10.4 in Chapter 10 of the book of \cite{schapire2012boosting} using a so-called loss-based decoding and noticing that in our case $\Bar{K} = \abs{\spaceY}$ and $\rho = 2$. For the sake of completeness we detail the complete proof below.

First of all, let $\fF{x,y} = \sum_{c=1}^C\alpha_c \indic{\fh[_c]{x}\neq \abstain} \left(\indic{y\in\fh[_c]{x}} - \indic{y\notin\fh[_c]{x}}\right)$ and notice that the strong classifier $\fH{\cdot}$ can be equivalently rewritten as:
\begin{align*}
&&\fH{\cdot} ={}& \argmax_{y\in\spaceY}\left( \sum_{c=1}^C\alpha_c \indic{\fh[_c]{\cdot}\neq \abstain\wedge y\in\fh[_c]{\cdot}}\right) \\
\tag{$x > y \Leftrightarrow x-(B-x) > y-(B-y)$.} \\
&\Leftrightarrow&  \fH{\cdot} ={}& \argmax_{y\in\spaceY}\left( \sum_{c=1}^C\alpha_c \indic{\fh[_c]{\cdot}\neq \abstain} \left(\indic{y\in\fh[_c]{\cdot}} - \indic{y\notin\fh[_c]{\cdot}}\right)\right) \\
\tag{$\fF{\cdot,y} = \sum_{c=1}^C\alpha_c \indic{\fh[_c]{\cdot}\neq \abstain} \left(\indic{y\in\fh[_c]{\cdot}} - \indic{y\notin\fh[_c]{\cdot}}\right)$.} \\
&\Leftrightarrow&  \fH{\cdot} ={}& \argmax_{y\in\spaceY}\fF{\cdot,y} \\
&\Leftrightarrow&  \fH{\cdot} ={}& \argmin_{y\in\spaceY}-\fF{\cdot,y} \\
&\Leftrightarrow&  \fH{\cdot} ={}& \argmin_{y\in\spaceY}\exp\left(- \fF{\cdot,y}\right) \\
\tag{$\argmin_{x} \exp\left(-\ff{x}\right) = \argmin_{x} -\exp\left(\ff{x}\right)$.}\\
&\Leftrightarrow&  \fH{\cdot} ={}& \argmin_{y\in\spaceY}\exp\left(- \fF{\cdot,y}\right) - \exp\left(\fF{\cdot,y}\right) \\
\tag{Adding a constant does not change the value of $\argmin$.} \\
&\Leftrightarrow&  \fH{\cdot} ={}& \argmin_{y\in\spaceY}\exp\left(- \fF{\cdot,y}\right) - \exp\left(\fF{\cdot,y}\right) + \sum_{y^\prime \in \spaceY}\exp\left(\fF{\cdot,y^\prime}\right)\\
&\Leftrightarrow&  \fH{\cdot} ={}& \argmin_{y\in\spaceY}\sum_{y^\prime \in \spaceY}\exp\left(-(\indic{y=y^\prime} - \indic{y\neq y^\prime})\fF{\cdot,y^\prime}\right)\text{.}
\end{align*}

Given an example $(x,y) \in \setS$, the classifier $\fH{\cdot}$ makes an error if and only if there exists a label $\ell \neq y$ such that
\begin{align*}
\sum_{y^\prime \in \spaceY}\exp\left(-(\indic{\ell=y^\prime} - \indic{\ell\neq y^\prime})\fF{x,y^\prime}\right) \leq \sum_{y^\prime \in \spaceY}\exp\left(-(\indic{y=y^\prime} - \indic{y\neq y^\prime})\fF{x,y^\prime}\right)\text{.}
\end{align*}
It implies that, when $\fH{\cdot}$ makes an error
\begin{align*}
\sum_{y^\prime \in \spaceY} & \exp\left(-(\indic{y=y^\prime} - \indic{y\neq y^\prime})\fF{x,y^\prime}\right) \\
\geq{}& \frac{1}{2}\sum_{y^\prime \in \spaceY}\exp\left(-(\indic{\ell=y^\prime} - \indic{\ell\neq y^\prime})\fF{x,y^\prime}\right) + \frac{1}{2} \sum_{y^\prime \in \spaceY}\exp\left(-(\indic{y=y^\prime} - \indic{y\neq y^\prime})\fF{x,y^\prime}\right) \\
\tag{Dropping all the terms that do not depend on $y$ and $\ell$ since $\forall x \in \nsetR,\exp{x} > 0$.} \\
\geq{}& \frac{1}{2}\sum_{y^\prime \in \left\lbrace y,\ell \right\rbrace} \left[\exp\left(-(\indic{\ell=y^\prime} - \indic{\ell\neq y^\prime})\fF{x,y^\prime}\right) + \exp\left(-(\indic{y=y^\prime} - \indic{y\neq y^\prime})\fF{x,y^\prime}\right) \right] \\
\tag{$\forall y^\prime \in \left\lbrace y,\ell \right\rbrace, (\indic{\ell=y^\prime} - \indic{\ell\neq y^\prime}) = -(\indic{y=y^\prime} - \indic{y\neq y^\prime})$.} \\
\geq{}& \frac{1}{2}\sum_{y^\prime \in \left\lbrace y,\ell \right\rbrace} \left[\frac{1}{\exp\left(-(\indic{y=y^\prime} - \indic{y\neq y^\prime})\fF{x,y^\prime}\right)} + \exp\left(-(\indic{y=y^\prime} - \indic{y\neq y^\prime})\fF{x,y^\prime}\right) \right] \\
\tag{$\forall x>0, x + \frac{1}{x} \geq 2$.} \\
\geq{}& 2\text{.}
\end{align*}
Assuming that the classifier $\fH{\cdot}$ makes $M$ errors, we have that:
\begin{align}
2M \leq{}& \sum_{(x,y) \in \setS} \sum_{y^\prime \in \spaceY} \exp\left(-(\indic{y=y^\prime} - \indic{y\neq y^\prime})\fF{x,y^\prime}\right) \text{.} \label{app:ineq:boundnumerrors}
\end{align}

Now, we need to show that the right hand side of Inequality~\eqref{app:ineq:boundnumerrors} is bounded by $n\abs{\spaceY}\prod_{c=1}^C Z_c$ where $n$ is the number of training examples and $Z_c$ is the normalization factor used in TripletBoost. Recall that, given an example $(x, y) \in \setS$ and a label $y^\prime \in \spaceY$, the weight $w_{c+1,x,y^\prime}$, obtained after $c$ iterations of our algorithm, is
\begin{align}
&& w_{c+1,x,y^\prime} ={}& \left\lbrace \begin{array}{ll}
\frac{w_{c,x,y^\prime}}{Z_c} & \text{if $\fh[_c]{x} = \abstain$,} \\
\frac{w_{c,x,y^\prime}}{Z_c}\exp\left(-\alpha_c\left(\indic{y^\prime=y} - \indic{y^\prime\neq y}\right)\left(\indic{y^\prime\in \fh[_c]{x}} - \indic{y^\prime\notin \fh[_c]{x}}\right)\right) & \text{if $\fh[_c]{x} \neq \abstain$.}
\end{array} \right. \nonumber \\
&\Leftrightarrow& w_{c+1,x,y^\prime} ={}& \frac{w_{c,x,y^\prime}}{Z_c}\exp\left(-\left(\indic{y^\prime=y} - \indic{y^\prime\neq y}\right)\alpha_c\indic{\fh[_c]{x} \neq \abstain}\left(\indic{y^\prime\in \fh[_c]{x}} - \indic{y^\prime\notin \fh[_c]{x}}\right)\right) \label{app:eq:recursivedef}
\end{align}
By recursively using Equation~\eqref{app:eq:recursivedef}, we obtain
\begin{align}
&& w_{C+1,x,y^\prime} ={}& \frac{w_{1,x,y^\prime}}{\prod_{c=1}^C Z_c}\prod_{c=1}^C\exp\left(-\left(\indic{y^\prime=y} - \indic{y^\prime\neq y}\right)\alpha_c\indic{\fh[_c]{x} \neq \abstain}\left(\indic{y^\prime\in \fh[_c]{x}} - \indic{y^\prime\notin \fh[_c]{x}}\right)\right) \\
\tag{$\exp(x)\exp(y) = \exp(x+y)$.} \\
&\Leftrightarrow& w_{C+1,x,y^\prime} ={}& \frac{w_{1,x,y^\prime}}{\prod_{c=1}^C Z_c}\exp\left(-\left(\indic{y^\prime=y} - \indic{y^\prime\neq y}\right)\sum_{c=1}^C\alpha_c\indic{\fh[_c]{x} \neq \abstain}\left(\indic{y^\prime\in \fh[_c]{x}} - \indic{y^\prime\notin \fh[_c]{x}}\right)\right) \\
\tag{$\fF{x,y} = \sum_{c=1}^C\alpha_c \indic{\fh[_c]{x}\neq \abstain} \left(\indic{y\in\fh[_c]{x}} - \indic{y\notin\fh[_c]{x}}\right)$.} \\
&\Leftrightarrow& w_{C+1,x,y^\prime} ={}& \frac{w_{1,x,y^\prime}}{\prod_{c=1}^C Z_c}\exp\left(-\left(\indic{y^\prime=y} - \indic{y^\prime\neq y}\right)\fF{x,y^\prime}\right) \\
\tag{$\sum_{(x,y) \in \setS} \sum_{y^\prime \in \spaceY} w_{C+1,x,y^\prime} = 1$ and $\forall (x, y) \in \setS, \forall y^\prime \in \spaceY, w_{1,x,y^\prime} = \frac{1}{n\abs{\spaceY}}$.} \\
&\Leftrightarrow& \prod_{c=1}^C Z_c ={}& \frac{1}{n\abs{\spaceY}}\sum_{(x,y) \in \setS} \sum_{y^\prime \in \spaceY}  \exp\left(-\left(\indic{y^\prime=y} - \indic{y^\prime\neq y}\right)\fF{x,y^\prime}\right) \text{.} \label{app:eq:prodzcvalue}
\end{align}
Combining this result with Inequality~\eqref{app:ineq:boundnumerrors} we obtain:
\begin{align*}
2M \leq{}& n\abs{\spaceY}\prod_{c=1}^C Z_c \text{.}
\end{align*}
Noticing that $\prob_{(x,y) \in \setS}\left[ \fH{x} \neq y \right] = \frac{M}{n}$ gives the first part of the theorem:
\begin{align*}
\prob_{(x,y) \in \setS}\left[ \fH{x} \neq y \right] \leq{}& \frac{\abs{\spaceY}}{2}\prod_{c=1}^C Z_c \text{.}
\end{align*}

In the last part of the proof we show that $Z_c = (1-W_{c,+}-W_{c,-}) + (W_{c,+})\cdot\sqrt{\frac{W_{c,-} + \frac{1}{n}}{W_{c,+} + \frac{1}{n}}} + (W_{c,-})\cdot\sqrt{\frac{W_{c,+} + \frac{1}{n}}{W_{c,-} + \frac{1}{n}}} \leq 1$. Using the result obtained in Equation~\eqref{app:eq:recursivedef} we have that
\begin{align}
&& w_{c+1,x,y^\prime} ={}& \frac{w_{c,x,y^\prime}}{Z_c}\exp\left(-\left(\indic{y^\prime=y} - \indic{y^\prime\neq y}\right)\alpha_c\indic{\fh[_c]{x} \neq \abstain}\left(\indic{y^\prime\in \fh[_c]{x}} - \indic{y^\prime\notin \fh[_c]{x}}\right)\right) \nonumber \\
\tag{$\sum_{(x,y) \in \setS} \sum_{y^\prime \in \spaceY} w_{c+1,x,y^\prime} = 1$.} \\
&\Leftrightarrow& Z_c ={}& \sum_{(x,y) \in \setS} \sum_{y^\prime \in \spaceY} w_{c,x,y^\prime}\exp\left(-\left(\indic{y^\prime=y} - \indic{y^\prime\neq y}\right)\alpha_c\indic{\fh[_c]{x} \neq \abstain}\left(\indic{y^\prime\in \fh[_c]{x}} - \indic{y^\prime\notin \fh[_c]{x}}\right)\right) \nonumber \\
\tag{$\exp(0) = 1$.} \\
&\Leftrightarrow& Z_c ={}& \sum_{\substack{(x,y) \in \setS \\ \fh[_c]{x} = \abstain}} \sum_{y^\prime \in \spaceY} w_{c,x,y^\prime} + \sum_{\substack{(x,y) \in \setS \\ \fh[_c]{x} \neq \abstain}} \sum_{y^\prime \in \spaceY} w_{c,x,y^\prime} \exp\left(-\left(\indic{y^\prime=y} - \indic{y^\prime\neq y}\right)\alpha_c\left(\indic{y^\prime\in \fh[_c]{x}} - \indic{y^\prime\notin \fh[_c]{x}}\right)\right) \label{app:eq:zcvalue}
\end{align}
Notice that $W_{c,+}$ and $W_{c,-}$, the weights of correctly and incorrectly classified examples, can be rewritten as
\begin{align*}
W_{c,+} ={}& \sum_{\substack{(x,y) \in \setS \\ \fh[_c]{x} \neq \abstain}} \sum_{y^\prime \in \spaceY} \left(\indic{y^\prime = y \wedge y^\prime \in \fh[_c]{x}} + \indic{y^\prime \neq y \wedge y^\prime \notin \fh[_c]{x}} \right) w_{c,x,y^\prime} \text{,}\\
W_{c,-} ={}& \sum_{\substack{(x,y) \in \setS \\ \fh[_c]{x} \neq \abstain}} \sum_{y^\prime \in \spaceY} \left(\indic{y^\prime = y \wedge y^\prime \notin \fh[_c]{x}} + \indic{y^\prime \neq y \wedge y^\prime \in \fh[_c]{x}} \right) w_{c,x,y^\prime} \text{.}
\end{align*}
Replacing $\alpha_c = \frac{1}{2}\log{\left(\frac{W_{c,+} + \frac{1}{n}}{W_{c,-} + \frac{1}{n}}\right)}$ in Equation~\eqref{app:eq:zcvalue} gives
\begin{align*}
\sum_{\substack{(x,y) \in \setS \\ \fh[_c]{x} \neq \abstain}} & \sum_{y^\prime \in \spaceY} w_{c,x,y^\prime} \exp\left(-\left(\indic{y^\prime=y} - \indic{y^\prime\neq y}\right)\alpha_c\left(\indic{y^\prime\in \fh[_c]{x}} - \indic{y^\prime\notin \fh[_c]{x}}\right)\right) \\
={}& \sum_{\substack{(x,y) \in \setS \\ \fh[_c]{x} \neq \abstain}} \sum_{y^\prime \in \spaceY} w_{c,x,y^\prime} \exp\left(-\left(\indic{y^\prime=y} - \indic{y^\prime\neq y}\right)\frac{1}{2}\log{\left(\frac{W_{c,+} + \frac{1}{n}}{W_{c,-} + \frac{1}{n}}\right)}\left(\indic{y^\prime\in \fh[_c]{x}} - \indic{y^\prime\notin \fh[_c]{x}}\right)\right) \\
\tag{$\exp(a\log(b)) = b^a$} \\
={}& \sum_{\substack{(x,y) \in \setS \\ \fh[_c]{x} \neq \abstain}} \sum_{y^\prime \in \spaceY} w_{c,x,y^\prime} \left(\frac{W_{c,+} + \frac{1}{n}}{W_{c,-} + \frac{1}{n}}\right)^{\left(-\left(\indic{y^\prime=y} - \indic{y^\prime\neq y}\right)\frac{1}{2}\left(\indic{y^\prime\in \fh[_c]{x}} - \indic{y^\prime\notin \fh[_c]{x}}\right)\right)} \\
\tag{$\exp(a\log(b)) = b^a$} \\
={}& \sum_{\substack{(x,y) \in \setS \\ \fh[_c]{x} \neq \abstain}} \sum_{y^\prime \in \spaceY} w_{c,x,y^\prime} \left(\indic{y^\prime = y \wedge y^\prime \in \fh[_c]{x}} + \indic{y^\prime \neq y \wedge y^\prime \notin \fh[_c]{x}} \right) \sqrt{\frac{W_{c,-} + \frac{1}{n}}{W_{c,+} + \frac{1}{n}}} \\
&+ \sum_{\substack{(x,y) \in \setS \\ \fh[_c]{x} \neq \abstain}} \sum_{y^\prime \in \spaceY} w_{c,x,y^\prime} \left(\indic{y^\prime = y \wedge y^\prime \notin \fh[_c]{x}} + \indic{y^\prime \neq y \wedge y^\prime \in \fh[_c]{x}} \right) \sqrt{\frac{W_{c,+} + \frac{1}{n}}{W_{c,-} + \frac{1}{n}}} \\
={}& (W_{c,+}) \cdot \sqrt{\frac{W_{c,-} + \frac{1}{n}}{W_{c,+} + \frac{1}{n}}} + (W_{c,-}) \cdot \sqrt{\frac{W_{c,+} + \frac{1}{n}}{W_{c,-} + \frac{1}{n}}}
\end{align*}
Similarly, using the fact that $\sum_{(x,y) \in \setS} \sum_{y^\prime \in \spaceY} w_{c,x,y^\prime} = 1$, we obtain
\begin{align*}
\sum_{\substack{(x,y) \in \setS \\ \fh[_c]{x} = \abstain}} \sum_{y^\prime \in \spaceY} w_{c,x,y^\prime} = 1 - W_{c,+} - W_{c,-}\text{.}
\end{align*}
Combining these two results with Equation~\eqref{app:eq:zcvalue} gives the value of $Z_c$:
\begin{align*}
Z_c = (1-W_{c,+}-W_{c,-}) + (W_{c,+})\cdot\sqrt{\frac{W_{c,-} + \frac{1}{n}}{W_{c,+} + \frac{1}{n}}} + (W_{c,-})\cdot\sqrt{\frac{W_{c,+} + \frac{1}{n}}{W_{c,-} + \frac{1}{n}}} \text{.}
\end{align*}
To show that $Z_c \leq 1$ note that
\begin{align*}
&& \left(\sqrt{W_{c,+} + \frac{1}{n}} - \sqrt{W_{c,-} + \frac{1}{n}}\right)^2 \geq{}& 0 \\
&\Leftrightarrow& W_{c,+} + \frac{1}{n} + W_{c,-} + \frac{1}{n} - 2\sqrt{W_{c,+} + \frac{1}{n}}\sqrt{W_{c,-} + \frac{1}{n}} \geq{}& 0 \\
&\Leftrightarrow& W_{c,+} + W_{c,-} + \frac{2}{n} - \left(W_{c,+} + \frac{1}{n}\right)\sqrt{\frac{W_{c,-} + \frac{1}{n}}{W_{c,+} + \frac{1}{n}}} - \left(W_{c,-} + \frac{1}{n}\right)\sqrt{\frac{W_{c,+} + \frac{1}{n}}{W_{c,-} + \frac{1}{n}}} \geq{}& 0 \\
&\Leftrightarrow& W_{c,+} + W_{c,-} + \frac{2}{n} - W_{c,+}\sqrt{\frac{W_{c,-} + \frac{1}{n}}{W_{c,+} + \frac{1}{n}}} - \frac{1}{n}\sqrt{\frac{W_{c,-} + \frac{1}{n}}{W_{c,+} + \frac{1}{n}}} - W_{c,-}\sqrt{\frac{W_{c,+} + \frac{1}{n}}{W_{c,-} + \frac{1}{n}}} - \frac{1}{n}\sqrt{\frac{W_{c,+} + \frac{1}{n}}{W_{c,-} + \frac{1}{n}}} \geq{}& 0 \\
&\Leftrightarrow& - W_{c,+} - W_{c,-} - \frac{2}{n} + W_{c,+}\sqrt{\frac{W_{c,-} + \frac{1}{n}}{W_{c,+} + \frac{1}{n}}} + \frac{1}{n}\sqrt{\frac{W_{c,-} + \frac{1}{n}}{W_{c,+} + \frac{1}{n}}} + W_{c,-}\sqrt{\frac{W_{c,+} + \frac{1}{n}}{W_{c,-} + \frac{1}{n}}} + \frac{1}{n}\sqrt{\frac{W_{c,+} + \frac{1}{n}}{W_{c,-} + \frac{1}{n}}} \leq{}& 0 \\
&\Leftrightarrow& 1 - W_{c,+} - W_{c,-} - \frac{2}{n} + W_{c,+}\sqrt{\frac{W_{c,-} + \frac{1}{n}}{W_{c,+} + \frac{1}{n}}} + \frac{1}{n}\sqrt{\frac{W_{c,-} + \frac{1}{n}}{W_{c,+} + \frac{1}{n}}} + W_{c,-}\sqrt{\frac{W_{c,+} + \frac{1}{n}}{W_{c,-} + \frac{1}{n}}} + \frac{1}{n}\sqrt{\frac{W_{c,+} + \frac{1}{n}}{W_{c,-} + \frac{1}{n}}} \leq{}& 1 \\
&\Leftrightarrow& Z_c - \frac{2}{n} + \frac{1}{n}\sqrt{\frac{W_{c,-} + \frac{1}{n}}{W_{c,+} + \frac{1}{n}}} + \frac{1}{n}\sqrt{\frac{W_{c,+} + \frac{1}{n}}{W_{c,-} + \frac{1}{n}}} \leq{}& 1 \\
&\Leftrightarrow& Z_c - \frac{2}{n} + \frac{1}{n}\left(\sqrt{\frac{W_{c,-} + \frac{1}{n}}{W_{c,+} + \frac{1}{n}}} + \sqrt{\frac{W_{c,+} + \frac{1}{n}}{W_{c,-} + \frac{1}{n}}}\right) \leq{}& 1 \\
\tag{$\forall x>0,x+\frac{1}{x} \geq 2$.} \\
&\Rightarrow& Z_c \leq{}& 1 \text{.}
\end{align*}
It concludes the proof of the theorem.
\end{proof}

\section{Proof of Theorem~\ref{th:testerror}}
\label{app:sec:test}

\begin{reth}{\ref{th:testerror}}[Generalization Guarantees]
Let $\distrS$ be a distribution over $\spaceX \times \spaceY$, let $\setS$ be a set of $n$ examples drawn i.i.d. from $\distrS$, and let $\setT$ be a set of $m$ triplets (obtained as described in Section~\ref{sec:contrib}).
Let $\fH{\cdot}$ be the classifier obtained after $C$ iterations of TripletBoost (Algorithm~\ref{alg:tripletboost}) using $\setS$ and $\setT$ as input.
Let $\spaceH$ be a set of triplet classifiers as defined in Section~\ref{sec:tripletclassifiers}.
Then, given a margin parameter $\theta > \sqrt{\frac{\log{\abs{\spaceH}}}{16\abs{\spaceY}^2n}}$ and a measure of the confidence of $\fH{\cdot}$ in its predictions $\ftheta[_{\fH{}}]{x,y}$, with probability at least $1-\delta$, we have that
\begin{align*}
\prob_{(x,y) \sim \distrS}\left[H(x) \neq y\right] \leq{}& \prob_{(x,y) \in \setS}\left[ \ftheta[_{\fH{}}]{x,y} \leq \theta \right] \\
&\negspace{6em}+ \bigO{\sqrt{\frac{\log{\left(\frac{1}{\delta}\right)}}{n} + \log{\left(\frac{\abs{\spaceY}^2n\theta^2}{\log{\abs{\spaceH}}}\right)}\frac{\log{\abs{\spaceH}}}{n\theta^2}}}
\end{align*}
Furthermore we have that 
\begin{align*}
\prob_{(x,y) \in \setS}\left[ \ftheta[_{\fH{}}]{x,y} \leq \theta \right] \leq \frac{\abs{\spaceY}}{2} \prod_{c=1}^C Z_c \sqrt{\left(\frac{W_{c,+} + \frac{1}{n}}{W_{c,-} + \frac{1}{n}}\right)^\theta} \text{.}
\end{align*}
\end{reth}

\begin{proof}
This result is inherited from AdaBoost.MO and can be obtained directly by following the steps of Exercise 10.3 in Chapter 10 of the book of \cite{schapire2012boosting}. Following our notations $\ftheta[_{\ff{},\eta}]{x,y}$ is equal to $\mathcal{M}_{f,\eta}\left(x,y\right)$ and the free parameter $n$ in their proof has been chosen as $\ceil{\frac{16}{\theta^2}\log{\left(\frac{4\abs{\spaceY}^2n\theta^2}{\log{\abs{\spaceH}}}\right)}}$.
The second part of the theorem  can be obtained in a similar way by following the steps of Exercise 10.4 in Chapter 10 of the book of \cite{schapire2012boosting}. For the sake of completeness we write the complete proof below.

First, we define several quantities used throughout. Recall that our approach outputs a strong classifier $\fH{\cdot}$ defined as
 \begin{align}
 \fH{x} = \argmax_{y \in \spaceY} \sum_{c=1}^C \alpha_c\indic{\fh[_c]{x} \neq \abstain \wedge y \in \fh[_c]{x}} \label{app:eq:strongclassifier}
 \end{align}
 As in the proof of Theorem~\ref{th:trainerror}, let $\fF{x,y} = \sum_{c=1}^C\alpha_c \indic{\fh[_c]{x}\neq \abstain} \left(\indic{y\in\fh[_c]{x}} - \indic{y\notin\fh[_c]{x}}\right)$ and let $\ff{} : \spaceX \times \spaceY \rightarrow \nsetR$ be the normalized version of $\fF{}$:
 \begin{align}
 \ff{x,y} =\sum_{c=1}^C a_c\indic{\fh[_c]{x}\neq \abstain} \left(\indic{y\in\fh[_c]{x}} - \indic{y\notin\fh[_c]{x}}\right) \text{.} \label{app:eq:normalizedstrongclassifier}
 \end{align}
 where $a_c = \frac{\alpha_c}{\sum_{c=1}^C \alpha_c} \in [0,1]$ since we have that $\alpha_c \geq 0$ because of our labelling strategy for the triplet classifiers. From the proof of Theorem~\ref{th:trainerror}, recall that we have $\displaystyle{\fH{x} = \argmax_{y \in \spaceY} \fF{x,y} = \argmax_{y \in \spaceY} \ff{x,y}}$.

Notice that $\ff{}$ is part of the following convex set:
\begin{align}
 \text{co}\left(\spaceH\right) = \left\lbrace \ff{} : x,y \mapsto \sum_{c=1}^C a_c\indic{\fh[_c]{x}\neq \abstain} \left(\indic{y\in\fh[_c]{x}} - \indic{y\notin\fh[_c]{x}}\right) \left| a_1,\ldots,a_C \geq 0; \sum_{c=1}^Ca_c = 1;\fh[_1]{},\ldots \fh[_C]{} \in \spaceH \right.  \right\rbrace\text{.} \label{app:eq:convexset}
\end{align}
Let us define the following set of unweighted averages of size $T$:
\begin{align}
 \text{un}_T\left(\spaceH\right) = \left\lbrace \left. \ff{} : x,y \mapsto \frac{1}{T}\sum_{t=1}^T \indic{\fh[_t]{x}\neq \abstain} \left(\indic{y\in\fh[_t]{x}} - \indic{y\notin\fh[_t]{x}}\right) \right| \fh[_1]{},\ldots \fh[_T]{} \in \spaceH  \right\rbrace \text{.} \label{app:eq:unweightedset}
\end{align}

For $\ff{} \in \text{co}\left(\spaceH\right)$, $\eta > 0$, and $(x,y) \in \spaceX \times \spaceY$, let
\begin{align}
 \fnu[_{\ff{},\eta}]{x,y} = - \frac{1}{\eta} \log\left(\frac{1}{\abs{\spaceY}} \sum_{y^\prime \in \spaceY} \exp\Big(-\eta \left(\indic{y=y^\prime} - \indic{y\neq y^\prime}\right)\ff{x,y^\prime}\Big)\right)\text{.} \label{app:eq:nu}
\end{align}
We use this quantity to define the following confidence margin for $(x,y) \in \spaceX \times \spaceY$:
\begin{align}
 \ftheta[_{\fH{}}]{x,y} = \ftheta[_{\ff{},\eta}]{x,y} = \frac{1}{2}\left(\fnu[_{\ff{},\eta}]{x,y} - \max_{y^\prime \neq y} \fnu[_{\ff{},\eta}]{x,y^\prime} \right) \text{.}
\end{align}

 \paragraph{Bound on the true risk of $\fH{\cdot}$.} Note that $\ftheta[_{\ff{},\eta}]{x,y} \in [-1,1]$ since $\ff{x,y} \in [-1,1]$. Similarly, given an example $(x,y) \in \spaceX \times \spaceY$, if $\fH{\cdot}$ makes an error then it implies that there exists a label $\ell \neq y$ such that
\begin{align*}
&& \fF{x,y} \leq{}& \fF{x,\ell} \\
&\Leftrightarrow& \ff{x,y} \leq{}& \ff{x,\ell} \\
\tag{$\eta > 0$.} \\
&\Leftrightarrow& \eta\ff{x,y} \leq{}& \eta\ff{x,\ell} \\
&\Leftrightarrow& \exp\left(-\eta\ff{x,y}\right)-\exp\left(\eta\ff{x,y}\right) \geq{}& \exp\left(-\eta\ff{x,\ell}\right) - \exp\left(\eta\ff{x,\ell}\right) \\
&\Leftrightarrow& \exp\left(-\eta\ff{x,y}\right)-\exp\left(\eta\ff{x,y}\right) + \sum_{y^\prime \in \spaceY} \exp\left(\eta \ff{x,y^\prime}\right) \geq{}& \exp\left(-\eta\ff{x,\ell}\right) - \exp\left(\eta\ff{x,\ell}\right) + \sum_{y^\prime \in \spaceY} \exp\left(\eta \ff{x,y^\prime}\right) \\
&\Leftrightarrow& \sum_{y^\prime \in \spaceY} \exp\Big(-\eta \left(\indic{y=y^\prime} - \indic{y\neq y^\prime}\right)\ff{x,y^\prime}\Big) \geq{}& \sum_{y^\prime \in \spaceY} \exp\Big(-\eta \left(\indic{\ell=y^\prime} - \indic{\ell\neq y^\prime}\right)\ff{x,y^\prime}\Big) \\
&\Leftrightarrow& \log\left(\frac{1}{\abs{\spaceY}}\sum_{y^\prime \in \spaceY} \exp\Big(-\eta \left(\indic{y=y^\prime} - \indic{y\neq y^\prime}\right)\ff{x,y^\prime}\Big)\right) \geq{}& \log\left(\frac{1}{\abs{\spaceY}}\sum_{y^\prime \in \spaceY} \exp\Big(-\eta \left(\indic{\ell=y^\prime} - \indic{\ell\neq y^\prime}\right)\ff{x,y^\prime}\Big)\right) \\
&\Leftrightarrow& \fnu[_{\ff{},\eta}]{x,y} \leq{}& \fnu[_{\ff{},\eta}]{x,\ell} \\
&\Leftrightarrow& \fnu[_{\ff{},\eta}]{x,y} - \fnu[_{\ff{},\eta}]{x,\ell} \leq{}& 0 \\
&\Leftrightarrow& \ftheta[_{\ff{},\eta}]{x,y} \leq{}& 0\text{.}
\end{align*}
It implies that
\begin{align*}
\prob_{(x,y) \sim \distrS}\left[H(x) \neq y\right] = \prob_{(x,y) \sim \distrS}\left[\ftheta[_{\ff{},\eta}]{x,y} \leq 0\right]
\end{align*}
and hence, in the following, we prove an upper bound on $\prob_{(x,y) \sim \distrS}\left[\ftheta[_{\ff{},\eta}]{x,y} \leq 0\right]$.

To this end, let us first define $\ftildef{x,y} = \frac{1}{T}\sum_{t=1}^T \indic{\ftildeh[_t]{x}\neq \abstain} \left(\indic{y\in\ftildeh[_t]{x}} - \indic{y\notin\ftildeh[_t]{x}}\right) \in \text{un}_{T}$ with $\ftildeh[_1]{},\ldots,\ftildeh[_T]{}$ drawn from $\spaceH$ independently at random following the probability distribution defined by the weights $a_1,\ldots,a_C$. Notice that $\ftildef{x,y} \in \left[-1,1\right]$ and that given $x \in \spaceX$ and $y^\prime \in \spaceY$ fixed, $\expect_{\ftildef{}}\left[\ftildef{x,y^\prime}\right] = \ff{x,y^\prime}$. Hence, using Hoeffding's inequality, it holds that
\begin{align*}
    \prob_{\ftildef{}}\left[ \abs{\ftildef{x,y^\prime} - \ff{x,y^\prime}} \geq \frac{\theta}{4}\right] \leq 2\exp\left(\frac{-T\theta^2}{32}\right)\text{.}
\end{align*}
Then, by the union bound, it holds that
\begin{align}
    \prob_{\ftildef{}}\left[ \exists y^\prime \in \spaceY : \abs{\ftildef{x,y^\prime} - \ff{x,y^\prime}} \geq \frac{\theta}{4}\right] \leq 2\abs{\spaceY}\exp\left(\frac{-T\theta^2}{32}\right)\text{.} \label{app:ineq:betaT}
\end{align}
Consider the following technical fact from \citet{schapire2012boosting}. Define the grid
\begin{align*}
\varepsilon_\theta = \left\lbrace \frac{4\log\abs{\spaceY}}{i\theta} : i = 1, \ldots, \left\lceil \frac{8\log\abs{\spaceY}}{\theta^2} \right\rceil \right\rbrace\text{.}
\end{align*}
For any $\eta > 0$, let $\hat{\eta}$ be the closest value in $\varepsilon_\theta$ to $\eta$. Then for all $\ff{} \in \text{co}(\mathcal{H})$ and for all $(x,y) \in \mathcal{X} \times \mathcal{Y}$,
\begin{align}
\abs{\fnu[_{\ff{},\eta}]{x,y} - \fnu[_{\ff{},\hat{\eta}}]{x,y}} \leq \frac{\theta}{4}\text{.} \label{app:ineq:techfact}
\end{align}
We will now show that, with high probability, $\ftheta[_{\ff{},\eta}]{x,y}$ is close to $\ftheta[_{\ftildef{},\hat{\eta}}]{x,y}$ for any distribution $\distrT$ over $\spaceX \times \spaceY$.

Let $x \in \spaceX$ and $y \in \spaceY$ then we have that
\begin{align}
\abs{\fnu[_{\ff{},\eta}]{x,y} - \fnu[_{\ftildef{},\hat{\eta}}]{x,y}} \leq{}& \abs{\fnu[_{\ff{},\eta}]{x,y} - \fnu[_{\ftildef{},\eta}]{x,y}} + \abs{\fnu[_{\ftildef{},\eta}]{x,y} - \fnu[_{\ftildef{},\hat{\eta}}]{x,y}} \\
\tag{Using Inequality~\eqref{app:ineq:techfact} since $\ftildef{} \in \text{co}(\spaceH)$.} \\
\leq{}& \abs{\fnu[_{\ff{},\eta}]{x,y} - \fnu[_{\ftildef{},\eta}]{x,y}} + \frac{\theta}{4} \\
\leq{}& \left|- \frac{1}{\eta} \log\left(\frac{1}{\abs{\spaceY}} \sum_{y^\prime \in \spaceY} \exp\Big(-\eta \left(\indic{y=y^\prime} - \indic{y\neq y^\prime}\right)\ff{x,y^\prime}\Big)\right)\right. \\
&+\left. \frac{1}{\eta} \log\left(\frac{1}{\abs{\spaceY}} \sum_{y^\prime \in \spaceY} \exp\Big(-\eta \left(\indic{y=y^\prime} - \indic{y\neq y^\prime}\right)\ftildef{x,y^\prime}\Big)\right)\right| + \frac{\theta}{4} \\
\leq{}& \abs{\frac{1}{\eta} \log\left(\frac{\frac{1}{\abs{\spaceY}} \sum_{y^\prime \in \spaceY} \exp\Big(-\eta \left(\indic{y=y^\prime} - \indic{y\neq y^\prime}\right)\ftildef{x,y^\prime}\Big)}{\frac{1}{\abs{\spaceY}} \sum_{y^\prime \in \spaceY} \exp\Big(-\eta \left(\indic{y=y^\prime} - \indic{y\neq y^\prime}\right)\ff{x,y^\prime}\Big)}\right)} + \frac{\theta}{4} \\
\tag{By induction, one can show that $\displaystyle{\frac{\sum_i a_i}{\sum_i b_i} \leq \max_i \frac{a_i}{b_i}}$ for positive $a_i$ and $b_i$.} \\
\leq{}& \abs{\frac{1}{\eta} \log\left(\max_{y^\prime \in \spaceY}\frac{\exp\Big(-\eta \left(\indic{y=y^\prime} - \indic{y\neq y^\prime}\right)\ftildef{x,y^\prime}\Big)}{\exp\Big(-\eta \left(\indic{y=y^\prime} - \indic{y\neq y^\prime}\right)\ff{x,y^\prime}\Big)}\right)} + \frac{\theta}{4} \\
\leq{}& \abs{\frac{1}{\eta} \log\left(\max_{y^\prime \in \spaceY}\exp\Big(\eta \left(\indic{y=y^\prime} - \indic{y\neq y^\prime}\right)\ff{x,y^\prime}-\eta \left(\indic{y=y^\prime} - \indic{y\neq y^\prime}\right)\ftildef{x,y^\prime}\Big)\right)} + \frac{\theta}{4} \\
\leq{}& \abs{\max_{y^\prime \in \spaceY} \left(\indic{y=y^\prime} - \indic{y\neq y^\prime}\right)\ff{x,y^\prime}- \left(\indic{y=y^\prime} - \indic{y\neq y^\prime}\right)\ftildef{x,y^\prime}} + \frac{\theta}{4} \\
\leq{}& \max_{y^\prime \in \spaceY} \abs{\ff{x,y^\prime} - \ftildef{x,y^\prime}} + \frac{\theta}{4} \text{.} \label{app:ineq:boundnu}
\end{align}
And then it implies that:
\begin{align*}
\abs{\ftheta[_{\ff{},\eta}]{x,y} - \ftheta[_{\ftildef{},\hat{\eta}}]{x,y}} \leq{}& \abs{\frac{1}{2}\left(\fnu[_{\ff{},\eta}]{x,y} - \max_{y^\prime \neq y} \fnu[_{\ff{},\eta}]{x,y^\prime} \right) - \frac{1}{2}\left(\fnu[_{\ftildef{},\hat{\eta}}]{x,y} + \max_{y^\prime \neq y} \fnu[_{\ftildef{},\hat{\eta}}]{x,y^\prime}\right)} \\
\leq{}& \frac{1}{2}\abs{\fnu[_{\ff{},\eta}]{x,y} - \fnu[_{\ftildef{},\hat{\eta}}]{x,y}} + \frac{1}{2}\abs{\max_{y^\prime \neq y} \fnu[_{\ftildef{},\hat{\eta}}]{x,y^\prime} - \max_{y^\prime \neq y} \fnu[_{\ff{},\eta}]{x,y^\prime}} \\
\tag{Noticing that if $\abs{\fg{x}-\fg[^\prime]{x}} \leq B$ for all $x$ then $\abs{\max_x \fg{x}-\max_x\fg[^\prime]{x}} \leq B$ and applying Inequality~\eqref{app:ineq:boundnu} twice.} \\
\leq{}& \max_{y^\prime \in \spaceY} \abs{\ff{x,y^\prime} - \ftildef{x,y^\prime}} + \frac{\theta}{4}
\end{align*}
Then, using Inequality~\eqref{app:ineq:betaT}, for any $(x,y) \in \spaceX\times\spaceY$, we have
\begin{align*}
    \prob_{\ftildef{}}\left[ \abs{\ftheta[_{\ff{},\eta}]{x,y} - \ftheta[_{\ftildef{},\hat{\eta}}]{x,y}} \geq \frac{\theta}{2}\right] \leq 2\abs{\spaceY}\exp\left(\frac{-T\theta^2}{32}\right)\text{.}
\end{align*}
Then, given any distribution $\distrT$ over $\spaceX\times\spaceY$, we have that:
\begin{align*}
    \expect_{(x,y) \sim \distrT} \prob_{\ftildef{}}\left[ \abs{\ftheta[_{\ff{},\eta}]{x,y} - \ftheta[_{\ftildef{},\hat{\eta}}]{x,y}} \geq \frac{\theta}{2}\right] \leq \expect_{(x,y) \sim \distrT}  2\abs{\spaceY}\exp\left(\frac{-T\theta^2}{32}\right)
\end{align*}
which implies
\begin{align}
    \prob_{\distrT,\ftildef{}}\left[ \abs{\ftheta[_{\ff{},\eta}]{x,y} - \ftheta[_{\ftildef{},\hat{\eta}}]{x,y}} \geq \frac{\theta}{2}\right] \leq 2\abs{\spaceY}\exp\left(\frac{-T\theta^2}{32}\right). \label{app:ineq:fclosetildef}
\end{align}
This last inequality shows that $\ftheta[_{\ff{},\eta}]{x,y}$ is close to $\ftheta[_{\ftildef{},\hat{\eta}}]{x,y}$ for any distribution $\distrT$ over $\spaceX\times\spaceY$.

Let $\theta$ and $T$ be fixed. Given $\ftildef{}$ and $\hat{\eta}$, let $B_i$ be a Bernoulli random variable that is $1$ if $\ftheta[_{\ftildef{},\hat{\eta}}]{x_i,y_i} \leq \frac{\theta}{2}$ with $(x_i,y_i) \in \setS$. We have that
\begin{align*}
    \prob_{(x,y)\in\setS}\left[ \ftheta[_{\ftildef{},\hat{\eta}}]{x,y} \leq \frac{\theta}{2} \right] ={}& \frac{1}{n} \sum_{i=1}^n B_i \text{,}\\
    \prob_{(x,y)\sim \distrS}\left[ \ftheta[_{\ftildef{},\hat{\eta}}]{x,y} \leq \frac{\theta}{2} \right] ={}& \expect_{S\sim\distrS^n}\left[\prob_{(x,y)\in\setS}\left[ \ftheta[_{\ftildef{},\hat{\eta}}]{x,y} \leq \frac{\theta}{2} \right]\right]\text{.}
\end{align*}
Then, using Hoeffding's inequality, it holds that:
\begin{align*}
    \prob_{\setS \sim \distrS^n}\left[\prob_{(x,y)\sim \distrS}\left[ \ftheta[_{\ftildef{},\hat{\eta}}]{x,y} \leq \frac{\theta}{2} \right] \geq \prob_{(x,y)\in\setS}\left[ \ftheta[_{\ftildef{},\hat{\eta}}]{x,y} \leq \frac{\theta}{2} \right] + \epsilon \right] \leq \exp(-2\epsilon^2n)\text{.}
\end{align*}
Using the union bound for both $\ftildef{}$ and $\hat{\eta}$, it holds that $\forall \ftildef{} \in \text{un}_T$ and $\forall \hat{\eta} \in \varepsilon_\theta$, with probability at least $1-\delta$ over the choice of the random training set $\setS$:
\begin{align}
    \prob_{(x,y)\sim \distrS}\left[ \ftheta[_{\ftildef{},\hat{\eta}}]{x,y} \leq \frac{\theta}{2} \right] \leq \prob_{(x,y)\in\setS}\left[ \ftheta[_{\ftildef{},\hat{\eta}}]{x,y} \leq \frac{\theta}{2} \right] + \sqrt{\frac{\log\left(\frac{\abs{\varepsilon_\theta}\abs{\spaceH}^T}{\delta}\right)}{2n}}\text{.} \label{app:ineq:genftildef}
\end{align}

Now, notice that for two events $E$ and $E^\prime$ it holds that $\prob\left[E\right] \leq \prob\left[E^\prime\right] + \prob\left[E,\neg E^\prime\right]$ and thus:
\begin{align*}
\prob_{(x,y) \sim \distrS}\left[\ftheta[_{\ff{},\eta}]{x,y} \leq 0\right] \leq{}& \prob_{(x,y) \sim \distrS,\ftildef{}}\left[\ftheta[_{\ftildef{},\hat{\eta}}]{x,y} \leq \frac{\theta}{2}\right] + \prob_{(x,y) \sim \distrS,\ftildef{}}\left[\ftheta[_{\ff{},\eta}]{x,y} \leq 0,\ftheta[_{\ftildef{},\hat{\eta}}]{x,y} > \frac{\theta}{2}\right] \\ 
\tag{Applying Inequality~\eqref{app:ineq:fclosetildef} to the rightmost term.} \\
\leq{}& \prob_{(x,y) \sim \distrS,\ftildef{}}\left[\ftheta[_{\ftildef{},\hat{\eta}}]{x,y} \leq \frac{\theta}{2}\right] + 2\abs{\spaceY}\exp\left(\frac{-T\theta^2}{32}\right) \text{.}
\end{align*}
Similarly it holds that
\begin{align*}
\prob_{(x,y) \in \setS,\ftildef{}}\left[\ftheta[_{\ftildef{},\hat{\eta}}]{x,y} \leq \frac{\theta}{2}\right] \leq{}& \prob_{(x,y) \in \setS}\left[\ftheta[_{\ff{},\eta}]{x,y} \leq \theta\right] + 2\abs{\spaceY}\exp\left(\frac{-T\theta^2}{32}\right) \text{.}
\end{align*}
Combining these two results with Inequality~\eqref{app:ineq:genftildef} yields with probability at least $1-\delta$:
\begin{align}
\prob_{(x,y) \sim \distrS}\left[\ftheta[_{\ff{},\eta}]{x,y} \leq 0\right] \leq{}& \prob_{(x,y) \in \setS}\left[\ftheta[_{\ff{},\eta}]{x,y} \leq \theta\right] + \sqrt{\frac{\log\left(\frac{\abs{\varepsilon_\theta}\abs{\spaceH}^T}{\delta}\right)}{2n}} + 4\abs{\spaceY}\exp\left(\frac{-T\theta^2}{32}\right) \text{.}
\end{align}
Choosing $T = \ceil{\frac{16}{\theta^2}\log{\left(\frac{16\abs{\spaceY}^2n\theta^2}{\log{\abs{\spaceH}}}\right)}}$ with $\theta > \sqrt{\frac{\log\abs{\spaceH}}{16\abs{\spaceY}^2n}}$ and noticing that $\abs{\varepsilon_\theta} \leq \frac{8\log\abs{\spaceY}}{\theta^2}+1$ the error term becomes:

\begin{align*}
\sqrt{\frac{\log\left(\frac{\abs{\varepsilon_\theta}\abs{\spaceH}^T}{\delta}\right)}{2n}} +{}& 4\abs{\spaceY}\exp\left(\frac{-T\theta^2}{32}\right) \\
={}& \sqrt{\frac{\log\left(\frac{\abs{\varepsilon_\theta}}{\delta}\right) + T\log\abs{\spaceH}}{2n}} + 4\abs{\spaceY}\exp\left(\frac{-T\theta^2}{32}\right) \\
={}& \sqrt{\frac{\log\left(\frac{\abs{\varepsilon_\theta}}{\delta}\right) + \left(\frac{16}{\theta^2}\log{\left(\frac{16\abs{\spaceY}^2n\theta^2}{\log{\abs{\spaceH}}}\right)}+1\right)\log\abs{\spaceH}}{2n}} + 4\abs{\spaceY}\exp\left(\frac{-\frac{16}{\theta^2}\log{\left(\frac{16\abs{\spaceY}^2n\theta^2}{\log{\abs{\spaceH}}}\right)}\theta^2}{32}\right) \\
={}& \sqrt{\frac{\log\left(\frac{\abs{\varepsilon_\theta}}{\delta}\right) + \left(\frac{16}{\theta^2}\log{\left(\frac{16\abs{\spaceY}^2n\theta^2}{\log{\abs{\spaceH}}}\right)}+1\right)\log\abs{\spaceH}}{2n}} + 4\abs{\spaceY}\exp\left(-\frac{1}{2}\log{\left(\frac{16\abs{\spaceY}^2n\theta^2}{\log{\abs{\spaceH}}}\right)}\right) \\
={}& \sqrt{\frac{\log\left(\frac{\abs{\varepsilon_\theta}}{\delta}\right) + \left(\frac{16}{\theta^2}\log{\left(\frac{16\abs{\spaceY}^2n\theta^2}{\log{\abs{\spaceH}}}\right)}+1\right)\log\abs{\spaceH}}{2n}} + 4\abs{\spaceY}\sqrt{\frac{\log{\abs{\spaceH}}}{16\abs{\spaceY}^2n\theta^2}} \\
={}& \sqrt{\frac{\log\left(\frac{\abs{\varepsilon_\theta}}{\delta}\right) + \left(\frac{16}{\theta^2}\log{\left(\frac{16\abs{\spaceY}^2n\theta^2}{\log{\abs{\spaceH}}}\right)}+1\right)\log\abs{\spaceH}}{2n}} + \sqrt{\frac{\log{\abs{\spaceH}}}{n\theta^2}} \\
={}& \sqrt{\frac{\log\left(\frac{8\log\abs{\spaceY}}{\theta^2}+1\right)}{2n} + \frac{\log\left(\frac{1}{\delta}\right)}{2n} + 8\log{\left(\frac{16\abs{\spaceY}^2n\theta^2}{\log{\abs{\spaceH}}}\right)}\frac{\log\abs{\spaceH}}{n\theta^2} + \frac{\log\abs{\spaceH}}{2n}} + \sqrt{\frac{\log{\abs{\spaceH}}}{n\theta^2}} \text{.}
\end{align*}
This concludes the first part of the proof.

\paragraph{Bound on the empirical margin of $\fH{\cdot}$.}
This part of the proof is very similar to the proof of Theorem~\ref{th:trainerror} and the main difference is that $\theta$ has to be taken into account. To this end, let us assume that the strong classifier violates the margin, that is $\ftheta[_{\ff{},\eta}]{x,y} \leq \theta$. Given a label $\ell \neq y$, we define the following quantities:
\begin{align*}
    \fz{y^\prime} = \eta \left(\indic{y=y^\prime} - \indic{y\neq y^\prime}\right)\ff{x,y^\prime} - \eta\theta \text{,} \\
    \fz[_\ell]{y^\prime} = \eta \left(\indic{\ell=y^\prime} - \indic{\ell\neq y^\prime}\right)\ff{x,y^\prime} + \eta\theta \text{.}
\end{align*}
Notice that:
\begin{align*}
&& \ftheta[_{\ff{},\eta}]{x,y} \leq{}& \theta \\
&\Leftrightarrow& \frac{1}{2}\left(\fnu[_{\ff{},\eta}]{x,y} - \max_{\ell \neq y} \fnu[_{\ff{},\eta}]{x,\ell} \right) \leq{}& \theta \\
&\Leftrightarrow& \fnu[_{\ff{},\eta}]{x,y} - \theta \leq{}& \max_{\ell \neq y} \fnu[_{\ff{},\eta}]{x,\ell} + \theta \\
&\Leftrightarrow& - \log\left(\frac{1}{\abs{\spaceY}} \sum_{y^\prime \in \spaceY} \exp\Big(-\eta \left(\indic{y=y^\prime} - \indic{y\neq y^\prime}\right)\ff{x,y^\prime}\Big)\right) - \eta \theta \leq{}&\\
&& &\negspace{9em}\max_{\ell \neq y} - \log\left(\frac{1}{\abs{\spaceY}} \sum_{y^\prime \in \spaceY} \exp\Big(-\eta \left(\indic{\ell=y^\prime} - \indic{\ell\neq y^\prime}\right)\ff{x,y^\prime}\Big)\right) + \eta\theta \\
&\Leftrightarrow& \log\left(\frac{1}{\abs{\spaceY}} \sum_{y^\prime \in \spaceY} \exp\Big(-\eta \left(\indic{y=y^\prime} - \indic{y\neq y^\prime}\right)\ff{x,y^\prime}\Big)\right) + \eta \theta \geq{}&\\
&& &\negspace{9em}\min_{\ell \neq y} \log\left(\frac{1}{\abs{\spaceY}} \sum_{y^\prime \in \spaceY} \exp\Big(-\eta \left(\indic{\ell=y^\prime} - \indic{\ell\neq y^\prime}\right)\ff{x,y^\prime}\Big)\right) - \eta\theta \\
&\Leftrightarrow& \sum_{y^\prime \in \spaceY} \exp\left(\eta \theta\right)\exp\Big(-\eta \left(\indic{y=y^\prime} - \indic{y\neq y^\prime}\right)\ff{x,y^\prime}\Big) \geq{}&\\
&& &\negspace{9em}\min_{\ell \neq y} \sum_{y^\prime \in \spaceY} \exp\left(-\eta\theta\right) \exp\Big(-\eta \left(\indic{\ell=y^\prime} - \indic{\ell\neq y^\prime}\right)\ff{x,y^\prime}\Big) \\
&\Leftrightarrow& \sum_{y^\prime \in \spaceY} \exp\Big(-\eta \left(\indic{y=y^\prime} - \indic{y\neq y^\prime}\right)\ff{x,y^\prime} + \eta \theta\Big) \geq{}&\\
&& &\negspace{9em}\min_{\ell \neq y} \sum_{y^\prime \in \spaceY} \exp\Big(-\eta \left(\indic{\ell=y^\prime} - \indic{\ell\neq y^\prime}\right)\ff{x,y^\prime}-\eta\theta\Big) \\
&\Leftrightarrow& \sum_{y^\prime \in \spaceY} \exp\Big(-\fz{y^\prime}\Big) \geq{}& \min_{\ell \neq y} \sum_{y^\prime \in \spaceY} \exp\Big(-\fz[_\ell]{y^\prime}\Big)
\end{align*}
Then, when the strong classifier violates the margin, that is $\ftheta[_{\ff{},\eta}]{x,y} \leq \theta$, we have that:
\begin{align*}
\sum_{y^\prime \in \spaceY} \exp\Big(-\fz{y^\prime}\Big) \geq{}& \frac{1}{2}\sum_{y^\prime \in \spaceY} \exp\Big(-\fz{y^\prime}\Big) + \frac{1}{2} \min_{\ell \neq y} \sum_{y^\prime \in \spaceY} \exp\Big(-\fz[_\ell]{y^\prime}\Big) \\
\geq{}& \min_{\ell \neq y}\left[\frac{1}{2}\sum_{y^\prime \in \spaceY} \exp\Big(-\fz{y^\prime}\Big) + \frac{1}{2} \sum_{y^\prime \in \spaceY} \exp\Big(-\fz[_\ell]{y^\prime}\Big)\right] \\
\geq{}& \frac{1}{2}\min_{\ell \neq y}\sum_{y^\prime \in \spaceY}\left[\exp\Big(-\fz{y^\prime}\Big) + \exp\Big(-\fz[_\ell]{y^\prime}\Big)\right] \\
\geq{}& \frac{1}{2}\min_{\ell \neq y}\sum_{y^\prime \in \spaceY}\left[\exp\Big(-\eta \left(\indic{y=y^\prime} - \indic{y\neq y^\prime}\right)\ff{x,y^\prime} + \eta\theta\Big) + \exp\Big(-\eta \left(\indic{\ell=y^\prime} - \indic{\ell\neq y^\prime}\right)\ff{x,y^\prime} - \eta\theta\Big)\right] \\
\tag{Dropping all the terms that do not depend on $y$ and $\ell$ since $\forall x \in \nsetR,\exp{x} > 0$.} \\
\geq{}& \frac{1}{2}\min_{\ell \neq y}\sum_{y^\prime \in \left\lbrace y,\ell\right\rbrace}\left[\exp\Big(-\eta \left(\indic{y=y^\prime} - \indic{y\neq y^\prime}\right)\ff{x,y^\prime} + \eta\theta\Big) + \exp\Big(-\eta \left(\indic{\ell=y^\prime} - \indic{\ell\neq y^\prime}\right)\ff{x,y^\prime} - \eta\theta\Big)\right] \\
\tag{$\forall y^\prime \in \left\lbrace y,\ell \right\rbrace, (\indic{\ell=y^\prime} - \indic{\ell\neq y^\prime}) = -(\indic{y=y^\prime} - \indic{y\neq y^\prime})$.} \\
\geq{}& \frac{1}{2}\min_{\ell \neq y}\sum_{y^\prime \in \left\lbrace y,\ell\right\rbrace}\left[\exp\Big(-\eta \left(\indic{y=y^\prime} - \indic{y\neq y^\prime}\right)\ff{x,y^\prime} + \eta\theta\Big) + \exp\Big(\eta \left(\indic{y=y^\prime} - \indic{y\neq y^\prime}\right)\ff{x,y^\prime} - \eta\theta\Big)\right] \\
\geq{}& \frac{1}{2}\min_{\ell \neq y}\sum_{y^\prime \in \left\lbrace y,\ell\right\rbrace}\left[\exp\Big(-\fz{y^\prime}\Big) + \exp\Big(\fz{y^\prime}\Big)\right] \\
\tag{$\forall x>0, x + \frac{1}{x} \geq 2$.} \\
\geq{}& 2
\end{align*}
Now assume that the strong classifier violates the margin $M$ times, that is $\ftheta[_{\ff{},\eta}]{x,y} \leq \theta$ for $M$ examples, then
\begin{align*}
&& 2M \leq{}& \sum_{(x,y) \in \setS} \sum_{y^\prime \in \spaceY} \exp\Big(-\fz{y^\prime}\Big) \\
&\Leftrightarrow& 2M \leq{}& \sum_{(x,y) \in \setS} \sum_{y^\prime \in \spaceY} \exp\Big(-\eta \left(\indic{y=y^\prime} - \indic{y\neq y^\prime}\right)\ff{x,y^\prime} + \eta\theta\Big) \\
&\Leftrightarrow& 2M \leq{}& \sum_{(x,y) \in \setS} \sum_{y^\prime \in \spaceY} \exp\Big(-\eta \left(\indic{y=y^\prime} - \indic{y\neq y^\prime}\right)\ff{x,y^\prime}\Big)\exp\Big(\eta\theta\Big) \\
\tag{Let $\eta = \sum_{c} \alpha_c$ and recall that $\left(\sum_{c} \alpha_c\right)\ff{x,y} = \fF{x,y}$.} \\
&\Leftrightarrow& 2M \leq{}& \sum_{(x,y) \in \setS} \sum_{y^\prime \in \spaceY} \exp\Big(- \left(\indic{y=y^\prime} - \indic{y\neq y^\prime}\right)\fF{x,y^\prime}\Big)\exp\Big(\sum_{c} \alpha_c\theta\Big) \\
&\Leftrightarrow& 2M \leq{}& \exp\Big(\sum_{c} \alpha_c\theta\Big) \sum_{(x,y) \in \setS} \sum_{y^\prime \in \spaceY} \exp\Big(- \left(\indic{y=y^\prime} - \indic{y\neq y^\prime}\right)\fF{x,y^\prime}\Big)
\end{align*}
Recall that $\sum_{(x,y) \in \setS} \sum_{y^\prime \in \spaceY} \exp\Big(- \left(\indic{y=y^\prime} - \indic{y\neq y^\prime}\right)\fF{x,y^\prime}\Big) = n\abs{\spaceY}\prod_{c=1}^C Z_c$ as shown in Equation~\eqref{app:eq:prodzcvalue} in the proof of Theorem~\ref{th:trainerror}.
\begin{align*}
&& 2M \leq{}& \exp\Big(\sum_{c} \alpha_c\theta\Big) n\abs{\spaceY}\prod_{c=1}^C Z_c \\
&\Leftrightarrow& 2M \leq{}& \exp\Big(\sum_{c} \alpha_c\theta\Big) n\abs{\spaceY}\prod_{c=1}^C Z_c \\
&\Leftrightarrow& 2M \leq{}& n\abs{\spaceY}\prod_{c=1}^C Z_c\exp\left(\alpha_c\theta\right)  \\
&\Leftrightarrow& \frac{M}{n} \leq{}& \frac{\abs{\spaceY}}{2}\prod_{c=1}^C Z_c\exp\left(\alpha_c\theta\right) 
\end{align*}
Replacing $\alpha_c$ by its value, that is $\alpha_c = \frac{1}{2}\log{\left(\frac{W_{c,+} + \frac{1}{n}}{W_{c,-} + \frac{1}{n}}\right)}$ and noticing that $\displaystyle{\prob_{(x,y) \in \setS}\left[ \ftheta[_{f,\eta}]{x,y} \leq \theta \right] = \frac{M}{n}}$ concludes the proof.
\end{proof}

\section{Proof of Theorem~\ref{th:limit}}
\label{app:sec:limit}

\begin{reth}{\ref{th:limit}}[Lower bound on the probability that a strong classifier abstains]
Let $n \geq 2$ be the number of training examples, $p = \frac{2n^k}{n^3}$ with $k \in \left[0,3-\frac{\log(2)}{\log(n)}\right)$ be the probability that a triplet is available in the triplet set $\setT$ and $C = \frac{n^\beta}{2}$ with $\beta \in \left[0,1+\frac{\log{(n-1)}}{\log{(n)}}\right]$ be the number of classifiers combined in the learned classifier.
Let $\algA$ be any algorithm learning a classifier $\fH{\cdot} = \argmax_{y \in \spaceY}\left(\sum_{c=1}^C \alpha_c \indic{y \in \fh[_c]{\cdot}}\right)$ that combines several triplet classifiers using some weights $\alpha_c \in \nsetR$. Assume that triplet classifiers that abstain on all the training examples have a weight of $0$ (that is if $\fh[_c]{x_i} = \abstain$ for all the examples $(x_i,y_i) \in \setS$ then $\alpha_c = 0$).
Then the probability that $\fH{}$ abstains on a test example is bounded as follows:
\begin{align}
\prob_{(x,y) \sim \distrS} \left[ \fH{x} = \abstain \right] \geq{}& \Big(1-p + p\left(1-p\right)^n \Big)^C\text{.} \label{app:eq:boundexact}
\end{align}
\end{reth}

\begin{proof}
Recall that $\fH{\cdot} = \argmax_{y \in \spaceY}\left(\sum_{c=1}^C \alpha_c \indic{y \in \fh[_c]{\cdot}}\right)$.
Each component $\alpha_c \indic{y \in \fh[_c]{\cdot}}$ of $\fH{}$ abstains for a given example $x$ if $\fh[_c]{x} = \abstain$ or $\alpha_c = 0$.
Overall $\fH{}$ abstains if all its components also abstain and we bound this probability here.

Recall that $\setS = \left\lbrace (x_i,y_i) \right\rbrace_{i = 1}^n$ is a set of $n$ examples drawn i.i.d. from an unknown distribution $\distrS$ over the space $\spaceX \times \spaceY$ where $\abs{\spaceY} < \infty$ and recall that $C$ is the number of classifiers selected in $\fH{}$. Finally recall that each triplet is obtained with probability $p$ independently of the other triplets. It implies that each triplet classifier $\fh[_c]{\cdot}$ abstains with probability $q = 1 - p$ independently of the other triplet classifiers.

First we start by defining several random variables.
Let $Y_{1,1}, \ldots, Y_{n,C}$ be $nC$ independent random variables encoding the event that a classifier abstains on one example:
\begin{align}
Y_{i,c} = \left\lbrace \begin{matrix}
0 & \text{if $\fh[_c]{}$ abstains for example $x_i$,} \\
1 & \text{otherwise.} \\
\end{matrix} \right.
\end{align}
From the definition of the classifiers each of these random variables follow a Bernoulli distribution $\operatorname{B} \left(1, p\right)$.

Let $Y_{x,1}, \ldots, Y_{x,C}$ be $C$ independent random variables encoding the event that a classifier abstains on a new example $x$:
\begin{align}
Y_{x,c} = \left\lbrace \begin{matrix}
0 & \text{if $\fh[_c]{}$ abstains for example $x$,} \\
1 & \text{otherwise.} \\
\end{matrix} \right.
\end{align}
From the definition of the classifiers each of these random variables follow a Bernoulli distribution $\operatorname{B} \left(1, p\right)$.

Let $V_1, \ldots, V_C$ be $C$ independent random variables encoding the probability that a triplet classifier abstains on all the training examples:
\begin{align}
V_c = \sum_{i = 1}^n Y_{i,c}.
\end{align}
As a sum of $n$ independent Bernoulli trials, these random variables follow a Binomial distribution $\operatorname{B} \left(n, p\right)$.

Using the law of total probabilities we have that:
\begin{align*}
\prob_{(x,y) \sim \distrS} \left[ \fH{x} = \abstain \right] ={}& \prob_{(x,y) \sim \distrS} \left[ \fH{x} = \abstain \left| \sum_{c=1}^C V_cY_{x,c} = 0 \right. \right] \prob_{(x,y) \sim \distrS} \left[ \sum_{c=1}^C V_cY_{x,c} = 0 \right] \\
&+ \prob_{(x,y) \sim \distrS} \left[ \fH{x} = \abstain \left| \sum_{c=1}^C V_cY_{x,c} \neq 0 \right. \right] \prob_{(x,y) \sim \distrS} \left[ \sum_{c=1}^C V_cY_{x,c} \neq 0 \right]\text{.}
\end{align*}

Note that from the assumption that $\forall (x_i,y_i) \in \setS, \fh[_c]{x_i} = \abstain$ then $\alpha_c = 0$ and the definition of the random variables $Y_{\vx,\cdot}$ we have that:
\begin{align*}
\prob_{(x,y) \sim \distrS} \left[ \fH{x} = \abstain \left| \sum_{c=1}^C V_cY_{x,c} = 0 \right. \right] = 1 \text{.}
\end{align*}

Similarly, we have that:
\begin{align}
\prob_{(x,y) \sim \distrS} \left[ \fH{x} = \abstain \left| \sum_{c=1}^C V_cY_{x,c} \neq 0 \right. \right] \geq 0 \text{.} \label{app:eq:inequalitycond}
\end{align}

Hence we have that:
\begin{align}
\prob_{(x,y) \sim \distrS} \left[ \fH{x} = \abstain \right] \geq{}& \prob_{(x,y) \sim \distrS} \left[ \sum_{c=1}^C V_cY_{x,c} = 0 \right]\text{.} \label{app:eq:inequalitybound}
\end{align}

Let $U_1,\ldots,U_C$ be $C$ independent random variables such that:
\begin{align}
U_c = \left\lbrace \begin{array}{ll}
1 & \text{if $V_cY_{\vx,c} = 0$,} \\
0 & \text{otherwise.}
\end{array}\text{.} \right.
\end{align}
These random variables follow a Bernoulli distribution $\operatorname{B} \left(1, (1-p) + p(1-p)^n\right)$. To see that, note that $V_c$ and $Y_{x,c}$ are independent and thus we have that:
\begin{align*}
\prob_{(x,y) \sim \distrS} \left[ V_cY_{x,c} = 0  \right] ={}& 1 - \prob_{(x,y) \sim \distrS} \left[ V_cY_{x,c} \neq 0  \right] \\
={}& 1 - \prob_{(x,y) \sim \distrS} \left[ \left\lbrace V_c \neq 0 \right\rbrace \cap \left\lbrace Y_{x,c} \neq 0 \right\rbrace  \right] \\
={}& 1 - \prob_{(x,y) \sim \distrS} \left[ V_c \neq 0 \right] \prob_{(x,y) \sim \distrS} \left[ Y_{x,c} \neq 0 \right] \\
\tag{$Y_{x,c}$ follows a Bernoulli distribution $\operatorname{B} \left(1,p\right)$.} \\
={}& 1 - \left( 1 - \prob_{(x,y) \sim \distrS} \left[ V_c = 0 \right] \right) p \\
\tag{$V_c$ follows a Binomial distribution $\operatorname{B} \left(n,p\right)$.} \\
={}& 1 - \left( 1 - \begin{pmatrix}
n\\
0
\end{pmatrix}p^0\left(1-p\right)^{n-0} \right) p \\
={}& 1 - \left( 1 - \left(1-p\right)^n \right) p \\
={}& \left(1 - p\right) + p\left(1-p\right)^n \text{.}
\end{align*}

The r.h.s. of Inequality~\eqref{app:eq:inequalitybound} can then be written as:
\begin{align}
\prob_{(x,y) \sim \distrS} \left[ \sum_{c=1}^C V_cY_{x,c} = 0 \right] = \prob_{(x,y) \sim \distrS} \left[ \sum_{c=1}^C U_c = c \right]
\end{align}
which, by definition of the random variables $U_\cdot$, corresponds to the probability of obtaining $c$ successes among $c$ Bernoulli trials. In other words the random variable $U = \sum_{c=1}^C U_j$ follows a Binomial distribution $\operatorname{B} \left(c, (1-p) + p(1-p)^n\right)$. Following this we have that:
\begin{align}
\prob_{(x,y) \sim \distrS} \left[ \sum_{c=1}^C V_cY_{x,c} = 0 \right] ={}& \begin{pmatrix}
c \\
c
\end{pmatrix} \left( (1-p) + p(1-p)^n \right)^C \left( 1 - (1-p) + p(1-p)^n \right)^{C-C} \\
={}& \left( (1-p) + p(1-p)^n \right)^C \text{.}
\end{align}

Hence plugging this result in Inequality~\eqref{app:eq:inequalitybound} we have that:
\begin{align*}
\prob_{(x,y) \sim \distrS} \left[ \fH{x} = \abstain \right] \geq{}& \left( (1-p) + p(1-p)^n \right)^C
\end{align*}
which concludes the proof.
\end{proof}

\section{Proof of Example~\ref{ex:limit}}
\label{app:sec:limitex}

Note that here we prove a more general version of Example~\ref{ex:limit} than the one presented in the main paper. The latter follows directly by choosing $\beta = 1+\frac{\log{(n-1)}}{\log{(n)}}$. 

\begin{reex}{\ref{ex:limit}}
In Theorem~\ref{th:limit}, we choose different values for $p = 2n^{k-3}$ the probability of that a triplet is available and $C = \frac{n^\beta}{2}$ the number of combined classifiers. Then we take the limit as $n \to \infty$ to obtain the following results.

If $0 \leq \beta < 1$ then:
\begin{align*}
\lim_{n\rightarrow+\infty} \prob_{(x,y) \sim \distrS} \left[ \fH{x} = \abstain \right] \geq{}&
\left\lbrace
\begin{array}{ll}
1 & \text{if $0 \leq k < 3-\beta$,} \\
\exp(-1) & \text{if $k = 3-\beta$,} \\
0 & \text{if $3-\beta < k < 3-\frac{\log(2)}{\log(n)}$.}
\end{array}
\right.
\end{align*}

If $\beta = 1$ then:
\begin{align*}
\lim_{n\rightarrow+\infty} \prob_{(x,y) \sim \distrS} \left[ \fH{x} = \abstain \right] \geq{}&
\left\lbrace
\begin{array}{ll}
1 & \text{if $0 \leq k < 2$,} \\
\exp(\exp(-2)-1) & \text{if $k = 2$,} \\
0 & \text{if $2 < k < 3-\frac{\log(2)}{\log(n)}$.}
\end{array}
\right.
\end{align*}

If $1 < \beta \leq 1+\frac{\log{(n-1)}}{\log{(n)}}$ then:
\begin{align}
\lim_{n\rightarrow+\infty} \prob_{(x,y) \sim \distrS} \left[ \fH{x} = \abstain \right] \geq{}&
\left\lbrace
\begin{array}{ll}
1 & \text{if $0 \leq k<\frac{5-\beta}{2}$,} \\
\exp(-2) & \text{if $k=\frac{5-\beta}{2}$,} \\
0 & \text{if $\frac{5-\beta}{2}< k < 3-\frac{\log(2)}{\log(n)}$.}
\end{array}
\right. \label{app:eq:boundlimit}
\end{align}
\end{reex}

\begin{proof}
Replacing $p$ and $C$ by their values in Equation~\eqref{app:eq:boundexact} of Theorem~\ref{th:limit} and applying Lemma~\ref{app:lem:limit} gives the example.
\end{proof}

The next Lemma is a technical result used in the proof of Theorem~\ref{th:limit}.

\begin{mylem}[Limit of the right hand side of Theorem~\ref{th:limit}.\label{app:lem:limit}]
Given $n \geq 2$, $k \in \left[0,3-\frac{\log(2)}{\log(n)}\right)$ and $\beta \in \left[0,1+\frac{\log{(n-1)}}{\log{(n)}}\right]$, define:
\begin{align}
\ff{n,k,\beta} \doteq \left( 1-2n^{k-3} + 2n^{k-3}\left(1-2n^{k-3}\right)^n \right)^\frac{n^\beta}{2} \label{app:eq:seekedlimit}
\end{align}
then the following limits hold:

If $0 \leq \beta < 1$ then:
\begin{align*}
\lim_{n\rightarrow+\infty} \ff{n,k,\beta} ={}&
\left\lbrace
\begin{array}{ll}
1 & \text{if $0 \leq k < 3-\beta$,} \\
\exp(-1) & \text{if $k = 3-\beta$,} \\
0 & \text{if $3-\beta < k < 3-\frac{\log(2)}{\log(n)}$.}
\end{array}
\right.
\end{align*}

If $\beta = 1$ then:
\begin{align*}
\lim_{n\rightarrow+\infty} \ff{n,k,\beta} ={}&
\left\lbrace
\begin{array}{ll}
1 & \text{if $0 \leq k < 2$,} \\
\exp(\exp(-2)-1) & \text{if $k = 2$,} \\
0 & \text{if $2 < k < 3-\frac{\log(2)}{\log(n)}$.}
\end{array}
\right.
\end{align*}

If $1 < \beta \leq 1+\frac{\log{(n-1)}}{\log{(n)}}$ then:
\begin{align*}
\lim_{n\rightarrow+\infty} \ff{n,k,\beta} ={}&
\left\lbrace
\begin{array}{ll}
1 & \text{if $0 \leq k<\frac{5-\beta}{2}$,} \\
\exp(-2) & \text{if $k=\frac{5-\beta}{2}$,} \\
0 & \text{if $\frac{5-\beta}{2}< k < 3-\frac{\log(2)}{\log(n)}$.}
\end{array}
\right.
\end{align*}
\end{mylem}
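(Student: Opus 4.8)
The plan is to take logarithms and reduce the whole statement to a single scalar asymptotic analysis. Writing $p = 2n^{k-3}$ and $g_n = 1 - p\bigl(1 - (1-p)^n\bigr)$, note that $p \in (0,1)$ (this is exactly what the range $k < 3 - \log 2/\log n$ guarantees) and that $a_n := p(1-(1-p)^n) \in (0,1)$, so $g_n \in (0,1)$ and I may write
\begin{align*}
\ff{n,k,\beta} = \exp\left(\frac{n^\beta}{2}\log g_n\right) = \exp\left(\frac{n^\beta}{2}\log(1 - a_n)\right)\text{.}
\end{align*}
Since $a_n \le p \to 0$, the expansion $\log(1-a_n) = -a_n + O(a_n^2)$ applies, so everything reduces to the behaviour of
\begin{align*}
t_n := \frac{n^\beta}{2}\,a_n = n^{\beta+k-3}\bigl(1 - (1-p)^n\bigr)\text{,}
\end{align*}
together with a check that the quadratic correction is harmless: whenever $t_n$ tends to a finite limit one has $\tfrac{n^\beta}{2}a_n^2 = t_n a_n \to 0$, hence $\tfrac{n^\beta}{2}\log(1-a_n) = -t_n + o(1)$; and whenever $t_n \to \infty$ the elementary bound $\log(1-a_n) \le -a_n$ forces $\ff{n,k,\beta} \to 0$. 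Thus it suffices to compute $\lim t_n$.

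Next I would analyse $1 - (1-p)^n$ by splitting on the size of $np = 2n^{k-2}$, using $(1-p)^n = \exp\bigl(n\log(1-p)\bigr)$ together with $n\log(1-p) = -np\,(1 + o(1))$, which is valid since $p\to0$ makes $np^2 = o(np)$. Three regimes arise. If $k < 2$ then $np \to 0$, so $1-(1-p)^n = np\,(1+o(1)) = 2n^{k-2}(1+o(1))$ and therefore $t_n = 2\,n^{\beta+2k-5}(1+o(1))$. If $k = 2$ then $np \to 2$, so $(1-p)^n \to e^{-2}$ and $t_n = n^{\beta-1}(1-e^{-2})(1+o(1))$. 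If $k > 2$ then $np \to \infty$, so $(1-p)^n \to 0$ and $t_n = n^{\beta+k-3}(1+o(1))$.

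Finally I would read off the exponents for each fixed $\beta$ and match the three stated tables. For $0 \le \beta < 1$ the active regime is $k > 2$ (its threshold $3-\beta$ exceeds $2$): $t_n$ tends to $0$, $1$, or $\infty$ as $k < 3-\beta$, $k = 3-\beta$, or $k > 3-\beta$, giving limits $1$, $e^{-1}$, $0$, while all $k \le 2$ give $t_n \to 0$. For $\beta = 1$ the threshold sits exactly at $k=2$: here $t_n \to 0$ for $k<2$, $t_n \to 1-e^{-2}$ at $k=2$ (limit $\exp(e^{-2}-1)$), and $t_n \to \infty$ for $k>2$. For $1 < \beta$ the active regime is $k < 2$ (its threshold $(5-\beta)/2$ lies in $(3/2,2)$): $t_n = 2n^{\beta+2k-5}(1+o(1))$ tends to $0$, $2$, or $\infty$ as $k < (5-\beta)/2$, $k = (5-\beta)/2$, or $k > (5-\beta)/2$, giving $1$, $e^{-2}$, $0$, while $k \ge 2$ gives $t_n \to \infty$. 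In every case $\lim \ff{n,k,\beta} = \exp(-\lim t_n)$, which is precisely the claim.

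The routine part is the exponent bookkeeping; the only place demanding care is pinning down the constants at the three threshold values, where $t_n$ converges to a nonzero limit. There I must keep the $(1+o(1))$ factors honest --- in particular verifying $1 - e^{-x} = x(1+o(1))$ as $x = np \to 0$ and that $np^2/2$ and the higher-order terms of $n\log(1-p)$ are genuinely negligible --- so that the limits $2$, $1$, and $1-e^{-2}$ come out exactly rather than up to an uncontrolled constant. The range constraint $\beta \le 1 + \log(n-1)/\log n$ plays no role in the limit beyond guaranteeing that $C = n^\beta/2$ is an admissible classifier count; I treat $\beta$ as a fixed constant in $[0,2)$ throughout, and then $n\to\infty$.
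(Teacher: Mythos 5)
Your proposal is correct, and while it starts from the same point as the paper's proof --- writing $\ff{n,k,\beta} = \exp\!\big(\tfrac{n^\beta}{2}\log(1-a_n)\big)$ with $a_n = p\,(1-(1-p)^n)$, $p = 2n^{k-3}$, and expanding the logarithm --- its execution is genuinely different and cleaner in two respects. First, where the paper expands the full Maclaurin series and must separately show that the tail terms $l \geq 2$ vanish (or stay nonpositive) in the relevant ranges, you use only the first-order expansion with the error absorbed via $\tfrac{n^\beta}{2}a_n^2 = t_n a_n \to 0$ when $t_n$ is bounded, together with the one-sided bound $\log(1-a_n) \leq -a_n$ when $t_n \to \infty$; this makes the tail analysis unnecessary. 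Second, and more substantively, the paper computes the limits of the two factors $-n^{k-3+\beta}$ and $1-(1-p)^n$ separately (via l'H\^opital applied to $n\log(1-p)$), which produces a genuine $0\cdot\infty$ indeterminate form in the regime $1<\beta\leq 2$, $3-\beta<k<2$ that the paper then resolves by sandwiching $(1-p)^n$ between two Bernoulli-type inequalities. You avoid this entirely by computing the asymptotic \emph{order} of $1-(1-p)^n$ in each regime of $np$ (namely $np(1+o(1))$ when $np\to0$, $1-e^{-2}+o(1)$ when $np\to 2$, and $1+o(1)$ when $np\to\infty$), so that $t_n$ has an explicit power-law form in every case and no indeterminate product ever arises. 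What the paper's route buys is a very mechanical, term-by-term verification; what yours buys is brevity and the elimination of both the series tail and the sandwich argument, at the cost of having to keep the $(1+o(1))$ factors honest at the three threshold values --- which you explicitly flag and which does go through. One shared caveat, not a gap relative to the paper: like the paper, you treat $\beta$ as a fixed constant when taking $n\to\infty$, whereas the constraint $\beta \leq 1+\log(n-1)/\log(n)$ (and its use in Example~1 with $C = n(n-1)/2$) really concerns a $\beta$ that varies with $n$; neither proof addresses this, and your argument extends to that case just as easily as the paper's.
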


\begin{proof}
We are looking for the limit when $n\rightarrow\infty$ of the function defined in Equation~\eqref{app:eq:seekedlimit}.
\begin{align}
\lim_{n\rightarrow+\infty}& \left(1-2n^{k-3} + 2n^{k-3}\left(1-2n^{k-3}\right)^n \right)^\frac{n^\beta}{2} \nonumber \\
={}& \lim_{n\rightarrow+\infty} \exp\left(\frac{n^\beta}{2}\log\left(1-2n^{k-3} + 2n^{k-3}\left(1-2n^{k-3}\right)^n \right)\right) \nonumber \\
\tag{Maclaurin series of $\log$ since $\abs{2n^{k-3} - 2n^{k-3}\left(1-2n^{k-3}\right)^n} < 1$.} \nonumber \\
={}& \lim_{n\rightarrow+\infty} \exp\left(\frac{n^\beta}{2}\left(-\sum_{l=1}^\infty\frac{\left(2n^{k-3} - 2n^{k-3}\left(1-2n^{k-3}\right)^n\right)^l}{l} \right)\right) \nonumber \\
={}& \lim_{n\rightarrow+\infty} \exp\left(-\sum_{l=1}^\infty\frac{n^\beta}{2l}\left(2n^{k-3} - 2n^{k-3}\left(1-2n^{k-3}\right)^n\right)^l \right) \nonumber \\
={}& \lim_{n\rightarrow+\infty} \exp\left(\underbrace{-\frac{n^\beta}{2}\left(2n^{k-3} - 2n^{k-3}\left(1-2n^{k-3}\right)^n\right)}_{\text{Main term, with $l=1$}} \right. \nonumber \\
&\negspace{-15em} \left. \underbrace{-\sum_{l=2}^\infty\frac{n^\beta}{2l}\left(2n^{k-3} - 2n^{k-3}\left(1-2n^{k-3}\right)^n\right)^l}_{\text{Remaining term, with $l>1$}} \right) \text{.} \label{app:ineq:mainlimit}
\end{align}

In the following we study the limits of the two underlined terms in Equation~\eqref{app:ineq:mainlimit}.
\paragraph{Main term, with $l=1$.}
To obtain the limit of this term we proceed in two steps.
First we decompose it in a product of two sub terms and we study their individual limits.
Then we consider the indeterminate forms that arise after considering the products of the limits and we obtain the correct limits by upper and lower bounding the term and showing that the limits of the two bounds are identical.

We are interested in:
\begin{align*}
\lim_{n\rightarrow+\infty} -\frac{n^\beta}{2}\left(2n^{k-3} - 2n^{k-3}\left(1-2n^{k-3}\right)^n\right) = \lim_{n\rightarrow+\infty} -n^{k-3+\beta}\left(1 - \left(1-2n^{k-3}\right)^n\right)\text{.}
\end{align*}
\begin{itemize}
\item On the one hand, since $\beta \in \left[0,1+\frac{\log{(n-1)}}{\log{(n)}}\right]$ we have:
\begin{align}
\lim_{n\rightarrow+\infty}& -n^{k-3+\beta} = 
\left\lbrace
\begin{array}{ll}
0 & \text{if $0 \leq k < 3-\beta$,}\\
-1 & \text{if $k = 3-\beta$,}\\
-\infty & \text{if $3-\beta < k < 3-\frac{\log(2)}{\log(n)}$.}
\end{array}
\right. \label{app:eq:leftmult}
\end{align}

\item On the other hand, we have:
\begin{align}
\lim_{n\rightarrow+\infty}& \left(1 - \left(1-2n^{k-3}\right)^n\right) = 
\left\lbrace
\begin{array}{ll}
0 & \text{if $0 \leq k < 2$,} \\
1-\exp\left(-2\right) & \text{if $k = 2$,} \\
1 & \text{if $2 < k < 3-\frac{\log(2)}{\log(n)}$.} 
\end{array}
\right.\label{app:eq:rightmult}
\end{align}
To see that note that:
\begin{align*}
\lim_{n\rightarrow+\infty} \left(1-2n^{k-3}\right)^n = \exp\left(\lim_{n\rightarrow+\infty} n\log\left(1-2n^{k-3}\right)\right)
\end{align*}
which is an indeterminate form that can be solved using l'H\^opital's rule. Given the derivatives:
\begin{align*}
\frac{\partial \log\left( 1-2n^{k-3}\right)}{\partial n} ={}& \frac{-2(k-3)n^{k-4}}{1-2n^{k-3}}\text{,}\\
\frac{\partial \frac{1}{n}}{\partial n} ={}& \frac{-1}{n^2}\text{,}
\end{align*}
it follows that:
\begin{align*}
\lim_{n\rightarrow+\infty} n\log\left(1-2n^{k-3}\right) ={}& \lim_{n\rightarrow+\infty} -\frac{2(3-k)n^{k-2}}{1-2n^{k-3}} \\
={}& \left\lbrace \begin{matrix}
0 & \text{if $0 \leq k < 2$,} \\
-2 & \text{if $k = 2$,} \\
-\infty & \text{if $2 < k < 3-\frac{\log(2)}{\log(n)}$.} 
\end{matrix} \right.
\end{align*}
It implies:
\begin{align*}
\lim_{n\rightarrow+\infty} \left(1-2n^{k-3}\right)^n = \left\lbrace \begin{matrix}
1 & \text{if $0 \leq k < 2$,} \\
\exp\left(-2\right) & \text{if $k = 2$,} \\
0 & \text{if $2 < k < 3-\frac{\log(2)}{\log(n)}$.} 
\end{matrix} \right.
\end{align*}
\end{itemize}

Combining Limits~\eqref{app:eq:leftmult}~and~\eqref{app:eq:rightmult} we obtain the following limits:

If $0 \leq \beta < 1$ then:
\begin{align*}
\lim_{n\rightarrow+\infty} -n^{k-3+\beta}\left(1 - \left(1-2n^{k-3}\right)^n\right) = 
\left\lbrace
\begin{array}{ll}
0 & \text{if $0 \leq k < 2$,} \\
0 & \text{if $k = 2$,} \\
0 & \text{if $2 < k < 3-\beta$,} \\
-1 & \text{if $k = 3-\beta$,} \\
-\infty & \text{if $3-\beta < k < 3-\frac{\log(2)}{\log(n)}$.}
\end{array}
\right.
\end{align*}

If $\beta = 1$ then:
\begin{align*}
\lim_{n\rightarrow+\infty} -n^{k-3+\beta}\left(1 - \left(1-2n^{k-3}\right)^n\right) = 
\left\lbrace
\begin{array}{ll}
0 & \text{if $0 \leq k < 2$,} \\
\exp(-2)-1 & \text{if $k = 2$,} \\
-\infty & \text{if $2 < k < 3-\frac{\log(2)}{\log(n)}$.}
\end{array}
\right.
\end{align*}

If $1 < \beta \leq 2$ then:
\begin{align}
\lim_{n\rightarrow+\infty} -n^{k-3+\beta}\left(1 - \left(1-2n^{k-3}\right)^n\right) = 
\left\lbrace
\begin{array}{ll}
0 & \text{if $0 \leq k < 3-\beta$,} \\
0 & \text{if $k = 3-\beta$,} \\
\text{Indeterminate} & \text{if $3-\beta < k < 2$,} \\
-\infty & \text{if $k = 2$,} \\
-\infty & \text{if $2 < k < 3-\frac{\log(2)}{\log(n)}$.}
\end{array}
\right. \label{app:eq:indeterminate}
\end{align}

The only indeterminate form arises when $1 < \beta \leq 2$ and $3-\beta < k < 2$.
To solve this indeterminate form we propose to upper and lower bound $-n^{k-3+\beta}\left(1 - \left(1-2n^{k-3}\right)^n\right)$ and to show that the limits coincides.
Notice that, given our assumptions on $k$ we have that $-2n^{k-3} \in (-1,0]$ and hence, since $n \geq 2$ we can apply Bernoulli's inequalities\footnote{$1+\frac{nx}{1-nx+x} \geq (1+x)^n \geq 1+nx$ for $x \in (-1,0]$ and $n \in \nsetN$. Note that these inequalities might hold for more general values of $x$ and $n$ but we restricted ourselves to the case of interest for the proof.} to obtain:
\begin{align*}
& & \left(1-2n^{k-2}\right) \leq{}& \left(1-2n^{k-3}\right)^n \leq \left(1-\frac{2n^{k-2}}{1+2n^{k-2}-2n^{k-3}}\right) \\
\Leftrightarrow & & 1-\left(1-2n^{k-2}\right) \geq{}& 1-\left(1-2n^{k-3}\right)^n \geq 1-\left(1-\frac{2n^{k-2}}{1+2n^{k-2}-2n^{k-3}}\right) \\
\Leftrightarrow & & 2n^{k-2} \geq{}& 1-\left(1-2n^{k-3}\right)^n \geq \frac{2n^{k-2}}{1+2n^{k-2}-2n^{k-3}} \\
\Leftrightarrow & & -2n^{2k-5+\beta} \leq{}& -n^{k-3+\beta}\left(1 - \left(1-2n^{k-3}\right)^n\right) \leq -\frac{2n^{2k-5+\beta}}{1+2n^{k-2}-2n^{k-3}} 
\end{align*}
Furthermore, recalling that we are only interested in the case where $1 < \beta \leq 2$ and $3-\beta < k < 2$, we have the following limits:
\begin{align*}
\lim_{n\rightarrow+\infty} -2n^{2k-5+\beta} ={}&
\left\lbrace
\begin{array}{ll}
0 & \text{if $3-\beta<k<\frac{5-\beta}{2}$,} \\
-2 & \text{if $k=\frac{5-\beta}{2}$,} \\
-\infty & \text{if $\frac{5-\beta}{2}<k<2$.}
\end{array}
\right. \\
\lim_{n\rightarrow+\infty} -\frac{2n^{2k-5+\beta}}{1+2n^{k-2}-2n^{k-3}} ={}& 
\left\lbrace
\begin{array}{ll}
0 & \text{if $3-\beta<k<\frac{5-\beta}{2}$,} \\
-2 & \text{if $k=\frac{5-\beta}{2}$,} \\
-\infty & \text{if $\frac{5-\beta}{2}<k<2$.}
\end{array}
\right.
\end{align*}
Hence we obtain:
\begin{align*}
\lim_{n\rightarrow+\infty} -n^{k-3+\beta}\left(1 - \left(1-2n^{k-3}\right)^n\right) ={}& 
\left\lbrace
\begin{array}{ll}
0 & \text{if $3-\beta<k<\frac{5-\beta}{2}$,} \\
-2 & \text{if $k=\frac{5-\beta}{2}$,} \\
-\infty & \text{if $\frac{5-\beta}{2}<k<2$.}
\end{array}
\right.
\end{align*}
Combining this result with Equation~\eqref{app:eq:indeterminate} gives the limit of the main term:

If $0 \leq \beta < 1$ then:
\begin{align*}
\lim_{n\rightarrow+\infty} -n^{k-3+\beta}\left(1 - \left(1-2n^{k-3}\right)^n\right) = 
\left\lbrace
\begin{array}{ll}
0 & \text{if $0 \leq k < 3-\beta$,} \\
-1 & \text{if $k = 3-\beta$,} \\
-\infty & \text{if $3-\beta < k < 3-\frac{\log(2)}{\log(n)}$.}
\end{array}
\right.
\end{align*}

If $\beta = 1$ then:
\begin{align*}
\lim_{n\rightarrow+\infty} -n^{k-3+\beta}\left(1 - \left(1-2n^{k-3}\right)^n\right) = 
\left\lbrace
\begin{array}{ll}
0 & \text{if $0 \leq k < 2$,} \\
\exp(-2)-1 & \text{if $k = 2$,} \\
-\infty & \text{if $2 < k < 3-\frac{\log(2)}{\log(n)}$.}
\end{array}
\right.
\end{align*}

If $1 < \beta \leq 2$ then:
\begin{align}
\lim_{n\rightarrow+\infty} -n^{k-3+\beta}\left(1 - \left(1-2n^{k-3}\right)^n\right) = 
\left\lbrace
\begin{array}{ll}
0 & \text{if $0 \leq k < \frac{5-\beta}{2}$,} \\
-2 & \text{if $k = \frac{5-\beta}{2}$,} \\
-\infty & \text{if $\frac{5-\beta}{2} < k < 3-\frac{\log(2)}{\log(n)}$.}
\end{array}
\right. \label{app:limit:firstterm}
\end{align}

\paragraph{Remaining term, with $l>1$.}
We can rewrite the remaining term as follows:
\begin{align*}
-\sum_{l=2}^\infty\frac{n^\beta}{2l}\left(2n^{k-3} - 2n^{k-3}\left(1-2n^{k-3}\right)^n\right)^l ={}& -\sum_{l=2}^\infty\frac{n^\beta}{2l}2n^{kl-3l}\left(1 - \left(1-2n^{k-3}\right)^n\right)^l \\
={}& -\sum_{l=2}^\infty \frac{\left(1 - \left(1-2n^{k-3}\right)^n\right)^l}{l}n^{(k-3)l+\beta}
\end{align*}
Notice that $\forall k \in \left[0,3-\frac{\log(2)}{\log(n)}\right), \forall l \geq 2, \forall n \geq 2$ we have that $\left(1 - \left(1-2n^{k-3}\right)^n\right)^l \in [0,1]$. Following this we have:
\begin{align*}
l > \frac{\beta}{3-k} \Rightarrow \lim_{n\rightarrow+\infty} -n^{(k-3)l+\beta} = 0 \Rightarrow{}& \lim_{n\rightarrow+\infty} -\frac{\left(1 - \left(1-2n^{k-3}\right)^n\right)^l}{l}n^{(k-3)l+\beta} = 0 \\
l \leq \frac{\beta}{3-k} \Rightarrow  \lim_{n\rightarrow+\infty} -n^{(k-3)l+\beta} \leq 0 \Rightarrow{}&\lim_{n\rightarrow+\infty} -\frac{\left(1 - \left(1-2n^{k-3}\right)^n\right)^l}{l}n^{(k-3)l+\beta} \leq 0\text{.}
\end{align*}
From this, noticing that $k < \frac{6-\beta}{2}$ implies $\frac{\beta}{3-k}<2$, we obtain:
\begin{align}
0 \leq k < \frac{6-\beta}{2} \Rightarrow \lim_{n\rightarrow+\infty} -\sum_{l=2}^\infty\frac{n^2}{2l}\left(2n^{k-3} - 2n^{k-3}\left(1-2n^{k-3}\right)^n\right)^l = 0 \label{app:limit:secondterma}\\
\frac{6-\beta}{2} \leq k < 3-\frac{\log(2)}{\log(n)} \Rightarrow \lim_{n\rightarrow+\infty} -\sum_{l=2}^\infty\frac{n^2}{2l}\left(2n^{k-3} - 2n^{k-3}\left(1-2n^{k-3}\right)^n\right)^l \leq 0 \label{app:limit:secondtermb}
\end{align}

\paragraph{Limit of Equation~\eqref{app:ineq:mainlimit}.}
We now combine the limits obtained in Equations~\eqref{app:limit:firstterm},~\eqref{app:limit:secondterma}~and~\eqref{app:limit:secondtermb} to obtain the following limits:

If $0 \leq \beta < 1$ then $\frac{6-\beta}{2} > 3-\beta$ and thus the remaining term does not change the limit of the main term which implies:
\begin{align*}
\lim_{n\rightarrow+\infty} -\frac{n^\beta}{2}\left(2n^{k-3} - 2n^{k-3}\left(1-2n^{k-3}\right)^n\right) &-\sum_{l=2}^\infty\frac{n^\beta}{2l}\left(2n^{k-3} - 2n^{k-3}\left(1-2n^{k-3}\right)^n\right)^l = \\
&\left\lbrace
\begin{array}{ll}
0 & \text{if $0 \leq k < 3-\beta$,} \\
-1 & \text{if $k = 3-\beta$,} \\
-\infty & \text{if $3-\beta < k < 3-\frac{\log(2)}{\log(n)}$.}
\end{array}
\right.
\end{align*}

If $\beta = 1$ then $\frac{6-\beta}{2} > 2$ and thus the remaining term does not change the limit of the main term which implies:
\begin{align*}
\lim_{n\rightarrow+\infty} -\frac{n^\beta}{2}\left(2n^{k-3} - 2n^{k-3}\left(1-2n^{k-3}\right)^n\right) &-\sum_{l=2}^\infty\frac{n^\beta}{2l}\left(2n^{k-3} - 2n^{k-3}\left(1-2n^{k-3}\right)^n\right)^l = \\
&\left\lbrace
\begin{array}{ll}
0 & \text{if $0 \leq k < 2$,} \\
\exp(-2)-1 & \text{if $k = 2$,} \\
-\infty & \text{if $2 < k < 3-\frac{\log(2)}{\log(n)}$.}
\end{array}
\right.
\end{align*}

If $1 < \beta \leq 1+\frac{\log(n-1)}{\log(n)}$ then $\frac{6-\beta}{2} > \frac{5-\beta}{2}$ and thus the remaining term does not change the limit of the main term which implies:
\begin{align*}
\lim_{n\rightarrow+\infty} -\frac{n^\beta}{2}\left(2n^{k-3} - 2n^{k-3}\left(1-2n^{k-3}\right)^n\right) &-\sum_{l=2}^\infty\frac{n^\beta}{2l}\left(2n^{k-3} - 2n^{k-3}\left(1-2n^{k-3}\right)^n\right)^l = \\
&\left\lbrace
\begin{array}{ll}
0 & \text{if $0 \leq k < \frac{5-\beta}{2}$,} \\
-2 & \text{if $k = \frac{5-\beta}{2}$,} \\
-\infty & \text{if $\frac{5-\beta}{2} < k < 3-\frac{\log(2)}{\log(n)}$.}
\end{array}
\right.
\end{align*}

Plugging this result back into Equation~\eqref{app:ineq:mainlimit} gives the lemma.
\end{proof}

\section{Proof of Theorem~\ref{th:equality}}
\label{app:sec:equality}

\begin{reth}{\ref{th:equality}}[Exact bound on the probability that a strong classifier abstains]
In Theorem~\ref{th:limit}, further assume that each triplet classifier that does not abstain on at least one training example has a weight different from $0$ (if for at least one example $(x_i,y_i) \in \setS$ we have that $\fh[_c]{x_i} \neq \abstain$ then $\alpha_c \neq 0$). Then equality holds in Equation~\eqref{app:eq:boundexact}.
\end{reth}
\begin{proof}
The theorem follows directly by noticing that, in the proof of Theorem~\ref{th:limit}, if we further assume that each triplet classifier that does not abstain on at least one training example has a weight different from $0$ (if for at least one example $(x_i,y_i) \in \setS$ we have that $\fh[_c]{x_i} \neq \abstain$ then $\alpha_c \neq 0$) then we have equality in Equation~\eqref{app:eq:inequalitycond}:
\begin{align*}
\prob_{(x,y) \sim \distrS} \left[ \fH{x} = \abstain \left| \sum_{c=1}^C V_cY_{x,c} \neq 0 \right. \right] = 0 \text{.}
\end{align*}

And thus we also have that:
\begin{align*}
\prob_{(x,y) \sim \distrS} \left[ \fH{x} = \abstain \right] ={}& \prob_{(x,y) \sim \distrS} \left[ \sum_{c=1}^C V_cY_{x,c} = 0 \right]\text{.}
\end{align*}
\end{proof}

Note that, in the very unlikely event that, in one of the iterations of TripletBoost, the selected triplet classifier has an error of exactly $\nicefrac{1}{2}$, Theorem~\ref{th:equality} might not hold for our method.

\end{document}